\documentclass{article}
\usepackage{kr}
\usepackage{times}
\usepackage{soul}
\usepackage{url}
\usepackage[hidelinks]{hyperref}
\usepackage[utf8]{inputenc}
\usepackage[small]{caption}
\usepackage{graphicx}
\usepackage{amsmath}
\usepackage{amsthm}
\usepackage{thmtools}
\usepackage{thm-restate}
\usepackage{booktabs}
\usepackage{algorithm}
\usepackage{algorithmic}

\usepackage{amssymb}
\usepackage{bbm} 
\usepackage{color}
\usepackage[svgnames]{xcolor} 
\usepackage{comment}
\usepackage{graphics}
\usepackage{graphicx}
\usepackage{mathrsfs} 
\usepackage{stmaryrd}
\usepackage{tikz} 
\usetikzlibrary{shapes.misc,automata,positioning,arrows}
\usepgflibrary{shapes.arrows} 

\newcommand{\ie}{\mbox{i.e.}}
\newcommand{\eg}{\mbox{e.g.}}
\newcommand{\cf}{\mbox{cf.}}

\newcommand{\wrt}{\mbox{w.r.t.}}
\newcommand{\suchthat}{\mbox{s.t.}}
\newcommand{\wolog}{\mbox{w.l.o.g.}}
\newcommand{\etal}{\mbox{et al}}

\newcommand{\df}[1]{\textbf{#1}} 

\newcommand{\tuple}[1]{\langle #1 \rangle}
\newcommand{\defined}{\:\raisebox{1ex}{\scalebox{0.5}{\ensuremath{\mathrm{def}}}}\hskip-1.65ex\raisebox{-0.1ex}{\ensuremath{=}}\:}

\newtheorem{definition}{Definition}
\newtheorem{lemma}{Lemma}
\newtheorem{theorem}{Theorem}
\newtheorem{proposition}{Proposition}
\newtheorem{example}{Example}

\newcommand{\Prp}{\ensuremath{\mathcal{P}}}
\newcommand{\Lang}{\ensuremath{\mathcal{L}}} 

\newcommand{\KB}{\ensuremath{\mathcal{KB}}} 

\newcommand{\U}{\ensuremath{\mathcal{U}}} 
\newcommand{\Val}{\ensuremath{\mathcal{V}}}

\newcommand{\Mod}[1]{\llbracket#1\rrbracket} 
\newcommand{\Cn}{\ensuremath{\text{\it Cn}}} 

\newcommand{\limp}{\rightarrow} 

\newcommand{\sat}{\Vdash}

\newcommand{\entails}{\models}
\newcommand{\nentails}{\not\entails}

\newcommand{\RM}{\mathscr{R}} 
\newcommand{\IM}{\mathscr{I}} 
\newcommand{\BM}{\mathscr{B}} 

\newcommand{\lo}{\mathscr{L}}
\newcommand{\up}{\mathscr{U}}

\newcommand{\rev}{\ast}
\newcommand{\agmrev}{\rev}
\newcommand{\irev}{\ast}
\newcommand{\brev}{\bullet}

\newcommand{\nprev}{\circ}
\newcommand{\opt}{\mathrm{opt}} 

\newcommand{\tpo}{\preceq^{\mathsf{tpo}}}
\newcommand{\intord}{\preceq^{\mathsf{int}}}
\newcommand{\biord}{\preceq^{\mathsf{bi}}}

\newcommand{\CredSet}{\ensuremath{\mathcal{C}}}

\title{Interval Orders, Biorders and Credibility-limited Belief Revision}

\author{%
Richard Booth$^1$\and
Ivan Varzinczak$^{2,3,4}$\\
\affiliations
$^1$Cardiff University, United Kingdom\\
$^2$Université Sorbonne Paris Nord, Inserm, Sorbonne Université, Limics, 93017 Bobigny, France\\
$^3$CAIR, University of Cape Town, South Africa\quad $^4$ISTI-CNR, Pisa, Italy\\
\emails
boothr2@cardiff.ac.uk, \ 
ivan.varzinczak@sorbonne-paris-nord.fr
}

\begin{document}

\maketitle

\begin{abstract}
Rational belief revision is commonly viewed as being based on a preference order between possible worlds, with the resulting new belief set being those sentences true in all the most preferred models of the incoming new information. Usually, such a preference order is taken to be a total preorder. Nevertheless, there are other, more general classes of ordering that can also be employed. In this paper, we explore two such classes that have been studied within the theory of rational choice but have seen limited or no application in belief revision. We begin with \emph{interval orders}, introduced by Fishburn in the '80s, which associate with each possible world a nonnegative `interval' of plausibility. We then move on to \emph{biorders}, studied by Aleskerov, Bouyssou, and Monjardet, which generalise interval orders by allowing the intervals to have negative lengths, a feature that can be used to capture a notion of dissonance or instability. We provide axiomatic characterisations of these two resulting families of belief revision operators, as well as of two further families of interest that lie between interval orders and biorders. We show that while biorder-based revisions satisfy the Success postulate, they do not always yield consistent outputs. By modifying their definition to discard inputs that lead to inconsistency as `incredible', we derive new families of so-called non-prioritised revision that satisfy the Consistency postulate, but not the Success one. These families are linked to credibility-limited revision operators of Hansson~\etal., but for which the set of credible sentences does not satisfy the single-sentence closure condition. We argue that the biorder-based approach is well-suited for scenarios where an agent might initially reject new information, but may accept it when presented with additional explanation.

\end{abstract}

\section{Introduction and Motivation}\label{Introduction}


Belief revision is concerned with the incorporation of any new item of information into an agent's belief set in a principled way. The so-called AGM-approach to belief revision~\cite{AlchourronEtAl1985} does so by formulating a formal account of rationality in this setting, usually characterised by a set of postulates capturing the expected behaviour of revision. Traditionally, two pillars of the AGM approach are the postulates called Success and Consistency, which ensure that the new incoming information is always accepted and the result is a consistent belief set (whenever the incoming information is consistent).

As it turns out, the AGM approach assumes an idealised setting, which is not suitable for modelling the behaviour of realistic rational agents. As a result, the past decades have seen many attempts to bring more realism into it. This is one of the motivations underlying the present paper.

Here, we argue that the requirement for a consistent revision result could be dropped. Realistic agents may face new information that is \emph{destabilising}, represented as sentences that just do not cohere with the agent's current beliefs. As an example, adapted from Fermé~\shortcite{Ferme_thesis}, upon learning ``My five-year-old nephew had lunch with King Charles today'', the agent's beliefs get destabilised and switch to `maintenance mode', where some form of `repair', which can take the form of further clarification, is needed.

Alternatively, if we prize consistency, we may drop the blind acceptance of new incoming information, specifically of problematic sentences. This amounts to giving up the Success postulate and is the stance in non-prioritised revision~(NPR). In the example above, the sentence is just ignored or `put on hold' until further clarification or explanation, if any, might make it become acceptable. For instance, by learning that ``King Charles visited a local school today'', the agent is now keen to accept the first claim above.

In the present paper, we tackle both issues above and their interplay, investigating the consequences but mainly the fruitfulness of dropping either Success or Consistency. Our main tools are a family of orderings on possible worlds, more general than the total preorders usually assumed in the AGM approach. More specifically, we look at \emph{interval orders}~\cite{Fishburn1985}, which have already been utilised in the context of belief change, namely for belief contraction~\cite{Rott2014}. But there is a more general type of ordering that has been looked at in economics, and which can result in an empty choice set, namely \emph{biorders}, as studied most extensively by Aleskerov~\etal.~\shortcite{AleskerovEtAl2007}. We bring these into the landscape of belief change, which is part of the novelty of the present work.


The remainder of the paper is structured as follows:
In Section~\ref{Preliminaries}, we give the formal background for the upcoming sections. In Section~\ref{IntervalBasedRevision}, we present our first contribution by completing the picture initiated by Rott~\shortcite{Rott2014}, namely shifting the focus from total preorders in the definition of belief change operators to interval orders. We then generalise our constructions to the broader class of biorders, showing in particular that intervals of negative length can be used to capture an alternative form of impossibility, which we refer to as dissonance or instability (Section~\ref{BiorderBasedRevision}). One of the consequences of this generalisation is the loss of the Consistency postulate. Instead of throwing in the towel at this stage, we try to `remedy' the situation by exploring two subclasses of biorders satisfying a form of transitivity and that lie between interval orders and biorders. These strengthen the constructions and results for the general biorder case, but do not bring back Consistency. This serves as a motivation for an investigation of non-prioritised revision (NPR) grounded on the operators previously defined (Section~\ref{NPRevision}), in which inputs leading to inconsistency are discarded as `incredible'. Given that, in Section~\ref{CredLimitedRevision}, we show how our two subclasses of~NPR can be formulated as credibility-limited revision operators. Before concluding the paper, we discuss some relevant related work (Section~\ref{RelatedWork}). 
Proofs of all stated results are given in an Appendix. 


\section{Formal Preliminaries and Conventions}\label{Preliminaries}

In this section, we provide the required formal background for the remainder of the paper. 

\subsection{Propositional Notation}\label{Notation}

Let~$\Prp$ be a finite and non-empty set of propositional \emph{atoms}. We use $p,q,\ldots$ as meta-variables for atoms. Propositional sentences are denoted by $\alpha,\beta,\ldots$, and are recursively defined in the usual way. 
With~$\Lang$ we denote the set of all propositional sentences.

With $\U\defined\{0,1\}^{\Prp}$, we denote the set of all propositional \emph{valuations} on~$\Prp$, where~$1$ represents truth and~$0$ falsity. We use $v,u,\ldots$, possibly with primes, to denote valuations. Whenever it eases the presentation, we shall represent valuations as sequences of atoms (\eg, $p$) and barred atoms (\eg, $\bar{p}$), in alphabetical order, with the understanding that the presence of a non-barred atom indicates that the atom is `true' in the valuation, while the presence of a barred atom indicates that the atom is `false' in the valuation.

Satisfaction of~$\alpha\in\Lang$ by~$v\in\U$ is defined in the usual truth-functional way and is denoted by $v\sat\alpha$. The set of \emph{models} of a sentence~$\alpha$ is defined as $\Mod{\alpha}\defined\{v\in\U\mid v\sat\alpha\}$. We say sentences~$\alpha$ and~$\beta$ are \emph{logically equivalent}, denoted $\alpha\equiv\beta$, if $\Mod{\alpha}=\Mod{\beta}$. We say~$\alpha$ is \emph{consistent} if~$\Mod{\alpha}\neq\emptyset$. We call~$\alpha$ a \emph{tautology}, denoted $\entails\alpha$, if~$\Mod{\alpha}=\U$.

The notions above are extended to (possibly infinite) sets~$X\subseteq\Lang$ in the usual way, given $\Mod{X}\defined\bigcap_{\alpha\in X}\Mod{\alpha}$. We say~$X$ (classically) \emph{entails} $\alpha\in\Lang$, denoted $X\entails\alpha$, if $\Mod{X}\subseteq\Mod{\alpha}$. Given~$X\subseteq\Lang$, with $\Cn(X)\defined\{\alpha \mid X\entails\alpha\}$ we denote the set of all \emph{logical consequences of}~$X$.


Finally, in the remainder of the paper, we will often be interested in selecting the models of a sentence or those of a set of sentences according to some well-defined criteria (\eg\ a notion of closeness \wrt\ the models one started off with, such as those of an initial knowledge base). Let~$\Val\subseteq\U$, and let~$\leq\ \subseteq\U\times\U$ be an ordering (not necessarily a strict one). The \emph{optimal elements} of~$\Val$ \wrt~$\leq$ are defined as:
\begin{equation}\label{Def:OptModels}
\opt(\Val,\leq)\defined\{v\in\Val \mid v\leq u \text{ for all }u\in\Val\}
\end{equation}

We shall say that the ordering~$\leq$ is \emph{anchored on}~$\Val$ if $\Val = \opt(\U,\leq)$, i.e., $\Val$ is the set of optimal elements overall, and that~$\leq$ is {\em anchored} if furthermore $\Val \neq \emptyset$.

\subsection{Belief Revision}\label{BeliefChange}

Belief change is concerned with the problem of modifying an agent's beliefs in a principled way. The AGM paradigm for belief change~\cite{AlchourronEtAl1985} has become the gold standard in the area. It assumes an underlying logic with a propositional language, as~$\Lang$, and the above consequence operator~$\Cn(\cdot)$.

A \emph{belief set}~$\KB$ is a \emph{deductively closed} set of sentences, \ie, $\KB=\Cn(\KB)$. 
The \emph{revision} of~$\KB$ with~$\alpha$ corresponds to a consistent resulting belief set~$\KB\rev\alpha$ from which~$\alpha$ follows. 
In the AGM paradigm, the expected behaviour of the revision operator is characterised in terms of the postulates below (we assume \wolog~that the input~$\KB$ is \emph{consistent}):

\begin{description}
\item[P0] $\KB \rev \top = \KB$ \hfill (Tautology)
\item[P1] $\KB\rev\alpha=\Cn(\KB\rev\alpha)$ \hfill (Closure)
\item[P2] $\alpha\in\KB\rev\alpha$ \hfill (Success)
\item[P3] $\KB\rev\alpha\subseteq\Cn(\KB\cup\{\alpha\})$ \hfill (Inclusion)
\item[P4] If $\lnot\alpha\notin\KB$, then $\Cn(\KB\cup\{\alpha\})\subseteq\KB\rev\alpha$ \hfill (Vacuity)
\item[P5] If $\alpha\equiv\beta$, then $\KB\rev\alpha=\KB\rev\beta$ \hfill (Extensionality)
\item[P6] If $\alpha\nentails\bot$, then $\KB\rev\alpha\nentails\bot$ \hfill (Consistency)
\item[P7] $\KB\rev\alpha\cap\KB\rev\beta\subseteq\KB\rev(\alpha\lor\beta)$ \hfill (Disjunctive Overlap)
\item[P8] If $\lnot\alpha\notin\KB\rev\beta$, then $\Cn(\KB\rev\beta\cup\{\alpha\})\subseteq\KB\rev(\alpha\land\beta)$\hfill (Subexpansion)
\end{description}

Let~$\agmrev$ be a standard~AGM revision operator, which can be defined in terms of a total preorder (i.e., a transitive and complete relation) on~$\U$ as follows. To a belief set~$\KB$, we associate a total preorder~$\tpo_{\KB}$ \suchthat~$\tpo_{\KB}$ is anchored on~$\Mod{\KB}$ (see paragraph following~(\ref{Def:OptModels}) above), and the models of the new belief set $\KB\agmrev\alpha$ are selected from~$\Mod{\alpha}$ by choosing the optimal ones \wrt\ to $\tpo_{\KB}$, \ie,
\begin{equation}\label{Def:AGMrevision}
\Mod{\KB\agmrev\alpha}\defined\opt(\Mod{\alpha},\tpo_{\KB})
\end{equation}
where the function~$\opt(\cdot)$ is defined as in~(\ref{Def:OptModels}) for~$\tpo_{\KB}$.

The representation result below~\cite{Grove1988,KatsunoMendelzon1991} establishes the correspondence between the postulates above and the definition of~$\agmrev$. 

\begin{theorem}
The revision operator~$\agmrev$ satisfies Postulates~P1-P8 iff for each~$\KB$ there is a total preorder~$\tpo_{\KB}$ anchored on $\Mod{\KB}$ \suchthat, for each~$\alpha\in\Lang$, $\KB\agmrev\alpha$ is determined from~$\tpo_{\KB}$ as in~(\ref{Def:AGMrevision}).
\end{theorem}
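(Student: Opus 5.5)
We prove the two directions separately, working throughout with the finite set~$\U$. Since~$\Prp$ is finite, every set of valuations~$\Val\subseteq\U$ is $\Mod{\varphi}$ for some sentence~$\varphi$. Hence a belief set (P1) is determined by its set of models, and by P5 the operator~$\KB\agmrev\cdot$ induces a well-defined choice function $C(\Val)\defined\Mod{\KB\agmrev\alpha}$ for any~$\alpha$ with $\Mod{\alpha}=\Val$.

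\emph{Soundness (``if'').} Fix~$\tpo_{\KB}$ anchored on~$\Mod{\KB}$ and define~$\agmrev$ by~(\ref{Def:AGMrevision}). Each postulate is checked directly.
\begin{itemize}
\item P1 and P5 hold because the output is given as a set of models.
\item P2 holds because $\opt(\Mod{\alpha},\tpo_{\KB})\subseteq\Mod{\alpha}$.
\item P6 holds because a total preorder on a finite nonempty set has a minimum, so $\opt(\Mod{\alpha},\tpo_{\KB})\neq\emptyset$ whenever $\Mod{\alpha}\neq\emptyset$.
\item P3 and P4: if $\Mod{\KB}\cap\Mod{\alpha}\neq\emptyset$, anchoring gives $\opt(\Mod{\alpha},\tpo_{\KB})=\Mod{\KB}\cap\Mod{\alpha}$. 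If the intersection is empty, P3 holds trivially and P4 is vacuous. P0 is the case $\alpha=\top$.
\item P7 reduces to $\opt(\Mod{\alpha\lor\beta})\subseteq\opt(\Mod{\alpha})\cup\opt(\Mod{\beta})$.
\item P8 reduces to: if $\opt(\Mod{\beta})\cap\Mod{\alpha}\neq\emptyset$, then $\opt(\Mod{\alpha\land\beta})\subseteq\opt(\Mod{\beta})\cap\Mod{\alpha}$.
\end{itemize}
Both P7 and P8 follow from transitivity and completeness by comparing an arbitrary element with a minimal one.

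\emph{Completeness (``only if'').} Given~$\agmrev$ satisfying P1--P8, define for each~$\KB$ the relation
\[
u\tpo_{\KB}v \text{ iff } u\in C(\{u,v\}).
\]
The steps are as follows.
\begin{enumerate}
\item Completeness of~$\tpo_{\KB}$ follows from P2 and P6: $C(\{u,v\})$ is a nonempty subset of~$\{u,v\}$.
\item The key lemma is that $C(\Val)=\opt(\Val,\tpo_{\KB})$ for every nonempty~$\Val$. This is proved by induction on~$|\Val|$, via the Sen-style conditions derived from the postulates. P8 yields contraction consistency (Sen's~$\alpha$): if $W\subseteq\Val$ and $C(\Val)\cap W\neq\emptyset$, then $C(\Val)\cap W\subseteq C(W)$. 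P7 together with P8 yields expansion: $C(\Val)\cap C(W)\subseteq C(\Val\cup W)$. P3 (with P2) gives the complementary inclusion $C(W)\subseteq C(\Val)\cap W$ in the situation of Sen's~$\alpha$. Applying these with two-element subsets gives both inclusions of the lemma.
\item Transitivity: if $u\tpo v\tpo w$, look at $C(\{u,v,w\})$. It is nonempty, and Sen's~$\alpha$ applied to the pairs forces $u\in C(\{u,v,w\})$. Then Sen's~$\alpha$ on $\{u,w\}$ gives $u\tpo w$.
\item Anchoring: by P0 and P4 (together with P3), $C(\U)=\Mod{\KB}$, since~$\KB$ is assumed consistent. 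The lemma then gives $\opt(\U,\tpo_{\KB})=\Mod{\KB}$.
\end{enumerate}

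The main obstacle is deriving the choice-function conditions from P7 and P8 cleanly, in particular showing that pairwise choice determines choice on arbitrary finite sets. I would first establish the single ``Arrow condition'': if $W\subseteq\Val$ and $C(\Val)\cap W\neq\emptyset$, then $C(W)=C(\Val)\cap W$. P8 gives~$\supseteq$ (taking $\beta$ with $\Mod{\beta}=\Val$ and $\alpha$ with $\Mod{\alpha}=W$), and P7 applied to $\Val=W\cup(\Val\setminus W)$ gives~$\subseteq$. All the rest, including transitivity and the lemma $C=\opt(\cdot,\tpo_{\KB})$, follows from this condition by standard revealed-preference arguments.
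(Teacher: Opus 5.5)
Your overall route is correct. It is essentially the Katsuno--Mendelzon completeness argument: a revealed preference from two-element inputs, with Arrow's condition doing the work. Your soundness checks are also fine. The paper itself gives no proof of this theorem and only cites Grove and Katsuno--Mendelzon.

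However, steps~2--3 attribute the choice conditions to the wrong postulates. Only your last paragraph gets the sources right, so the write-up should be rebuilt around it.
\begin{itemize}
\item Sen's~$\alpha$, $C(\Val)\cap W\subseteq C(W)$ for $W\subseteq\Val$, is where P7 is used: apply P7 to $\Val=W\cup(\Val\setminus W)$ and use P2. It needs no nonemptiness hypothesis.
\item The complementary inclusion $C(W)\subseteq C(\Val)\cap W$, when $C(\Val)\cap W\neq\emptyset$, is exactly P8. P3 cannot supply it, since P3 only constrains $C$ relative to $\Mod{\KB}$. IOB operators satisfy P3 yet violate this inclusion: in Example~\ref{Ex:FailureVacuitySubexpansion}, take $\Val=\U$ and $W=\Mod{\lnot q}$.
\item In step~3, the inclusion that forces $u\in C(\{u,v,w\})$ is this P8 half, not Sen's~$\alpha$.
\item In your final paragraph the labels are reversed if read on model sets. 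There P8 gives $C(W)\subseteq C(\Val)\cap W$ and P7 gives $\supseteq$; your labels are right only as inclusions between belief sets.
\end{itemize}
With these repairs, Arrow's condition together with $\emptyset\neq C(\Val)\subseteq\Val$ (P2, P6) yields completeness, transitivity and $C=\opt(\cdot,\tpo_{\KB})$ by the usual two-line arguments; no induction is needed. Anchoring follows from $C(\U)=\Mod{\KB}$, which P3 and P4 already give for consistent $\KB$. So you need not invoke P0, which is not among P1--P8.

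For comparison, the paper's completeness proofs for its generalisations do not use pairwise choice. They build a canonical compressed interpretation $\tuple{\lo,\up}$ from the sequences $U_i,V_i$ (revising by $\bigvee V_i$), relying on Superexpansion. Specialised to AGM operators, two things happen on the same canonical interpretation:
\begin{itemize}
\item Consistency forces $\lo\le\up$, as in the proof of Theorem~\ref{Thm:RepResultIOB};
\item Subexpansion forces $\up\le\lo$ (BT), as in the proof of Theorem~\ref{Thm:RepResultTBOB}.
\end{itemize}
So $\lo=\up$ is a single ranking, and the statement drops out as a corollary once Disjunctive Rationality is derived from P1, P2, P5, P6 and P8.

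Your argument is shorter and self-contained for total preorders. But it leans essentially on P6 (nonempty pairwise choice gives completeness) and P8 (Arrow's condition gives transitivity), which are exactly the postulates the paper gives up. The rank construction needs neither, and that is what lets the paper cover interval orders and biorders uniformly.
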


In the remainder of the paper, we explore some variations of the definition in~(\ref{Def:AGMrevision}), informed by different orderings in~$\opt(\cdot)$, and their respective impact on the behaviour of the corresponding belief revision operators. Regarding the notation for these, we adopt the following conventions: 
the operator~$\rev$ is used in contexts in which it is assumed both Success and Consistency hold, $\brev$~in contexts in which Success is assumed but Consistency is not, \eg\ in Section~\ref{BiorderBasedRevision}, and~$\nprev$ in contexts in which Consistency is assumed but not necessarily Success, \eg~in Sections~\ref{NPRevision} and~\ref{CredLimitedRevision}.

\section{Interval Order-based Revision}\label{IntervalBasedRevision}

Interval orders, which, roughly, allow for the comparison of intervals, form a well-known subclass of ordering relations generalising total preorders~\cite{Fishburn1985,Myers1999}. They have proven particularly fruitful in the recent literature on non-monotonic reasoning, with Rott's interval-based belief contraction~\cite{Rott2014} and Booth and Varzinczak's weakening of rational closure for defeasible inference~\cite{BoothVarzinczak2021-AAAI}. In this section, we complete the picture initiated by Rott by exploring the role of interval orders in defining a belief \emph{revision} operator.

\subsection{Interval Orders}

An \emph{interval order} over~$\U$ is a binary relation~$\intord$ satisfying reflexivity, i.e., $v \intord v$ for all $v \in \U$, and the Ferrers condition, \ie, for all $v,u,v',u'\in\U$, we have:\footnote{Here we focus on the \emph{weak} reading of the relation rather than the strict one, which was adopted by Rott~\shortcite{Rott2014} and Booth and Varzinczak~\shortcite{BoothVarzinczak2021-AAAI}.}
\begin{equation}\label{FerrersCondition}
\textrm{If } v\intord u\ \textrm{and } v'\intord u',\
\textrm{then }
v \intord u'\ \textrm{or } v'\intord u
\end{equation}

There are alternative ways to characterise interval orders. A particularly useful one is in terms of interval-based interpretations, as done by Booth and Varzinczak~\shortcite{BoothVarzinczak2021-AAAI} in a preferential semantics for defeasible conditionals.

\begin{definition}[Interval-Based Interpretation]\label{Def:IntervalBasedInterpretation}
An \df{interval-based interpretation} is a pair $\IM\defined\tuple{\lo,\up}$, where~$\lo$ and~$\up$ are functions from~$\U$ to $\mathbb{N}$ \suchthat\ for all $v\in\U$, $\lo(v)\leq\up(v)$. 
Given $\IM=\tuple{\lo,\up}$ and~$v\in\U$, $\lo(v)$ is the \df{lower rank of}~$v$ \df{in}~$\IM$, and $\up(v)$ is the \df{upper rank of}~$v$ \df{in}~$\IM$. Hence, the pair~$(\lo(v),\up(v))$ is the \df{interval of}~$v$ \df{in}~$\IM$. 
\end{definition}

We point out that this definition differs slightly from the one by Booth and Varzinczak~\shortcite{BoothVarzinczak2021-AAAI} in that the co-domain of both $\lo$ and~$\up$ is taken to be~$\mathbb{N}$ rather than $\mathbb{N}\cup\{\infty\}$. There, the infinite rank~$\infty$ is interpreted as representing \emph{impossibility}, but, as we shall soon see, our framework will be extendable to express different kinds of incoherence naturally, without the use of infinite ranks.


\begin{example}\label{Ex:IntervalInterpretation}
Let~$\Prp=\{p,q\}$, and let~$\lo$ and~$\up$ be given by~$\lo(pq)=0$, $\lo(p\bar{q})=2$, $\lo(\bar{p}q)=1$, and $\lo(\bar{p}\bar{q})=0$; $\up(pq)=1$, $\up(p\bar{q})=3$, $\up(\bar{p}q)=1$, and $\up(\bar{p}\bar{q})=3$. Then~$\IM=\tuple{\lo,\up}$ is an example of an interval-based interpretation for~$\Prp$.
\end{example}

Figure~\ref{Fig:IntervalInterpretation} provides an easier to visualise representation of the interval-based interpretation in Example~\ref{Ex:IntervalInterpretation}.

\begin{figure}[ht]
\[
\begin{array}{|c|c|c|c|c|}
\hline 
    & 0 & 1 & 2 & 3 \\ \hline
\lo & pq, \bar{p}\bar{q} & \bar{p}q & p\bar{q} & \\ \hline
\up & & pq, \bar{p}q &  & p\bar{q}, \bar{p}\bar{q}\\ \hline
\end{array}
\]
\caption{An interval-based interpretation for~$\Prp=\{p,q\}$.}
\label{Fig:IntervalInterpretation}
\end{figure}

Given an interval-based interpretation~$\IM=\tuple{\lo,\up}$, the relation $\intord_{\IM}$ on~$\U$, defined by $v\intord_{\IM}v'$ if and only if~$\lo(v)\leq\up(v')$, for every~$v,v'\in\U$, is an interval order on~$\U$. In terms of the intervals, that means we have $v\intord_{\IM}v'$ if and only if either $v$'s interval overlaps with $v'$'s one or is strictly `below' it. For~$\IM$ as in Example~\ref{Ex:IntervalInterpretation}, we have, for instance, $(pq,pq),(\bar{p}\bar{q},p\bar{q})\in\intord_{\IM}$, but $(p\bar{q},pq)\notin\intord_{\IM}$. 
Furthermore, it can be checked that for every interval order~$\intord$ over~$\U$, there is an interval-based interpretation~$\IM$ such that~$\intord\ =\ \intord_{\IM}$.




\subsection{IOB Revision}\label{IOB-Revision}


An interval order yields a (successful) belief revision operator~$\irev$ in the same way as it is traditionally done for total preorders. To each belief set~$\KB$, we can associate an interval order~$\intord_{\KB}$ such that the optimal models are the ones in~$\Mod{\KB}$, and then the models of the resulting new belief set~$\KB\irev\alpha$ are selected from~$\Mod{\alpha}$ by picking the optimal ones according to~$\intord_{\KB}$:
\begin{equation}\label{Def-IO-Revision}
\Mod{\KB\irev\alpha}\defined\opt(\Mod{\alpha},\intord_{\KB})
\end{equation}
(We remind the reader that the function~$\opt(\cdot)$ is defined as in~(\ref{Def:OptModels}) above, with~$\intord_{\KB}$ as the informing order here.)

\begin{definition}[Interval Order-Based Revision]
$\irev$ is an \df{interval order-based (IOB) revision operator} if, to each $\KB$ we can associate an interval order~$\intord_{\KB}$ anchored on $\Mod{\KB}$ \suchthat, for each~$\alpha$, $\KB\irev\alpha$ is determined from~$\intord_{\KB}$ as in~(\ref{Def-IO-Revision}) above.
\end{definition}

Not surprisingly, the definition of revision in terms of interval orders comes with immediate consequences regarding the postulates the corresponding operator satisfies. Indeed, with the optimal models now informed by an underlying interval order, the revision operator from~(\ref{Def-IO-Revision}) above fails both Subexpansion~(P8) and Vacuity~(P4) in general. This is witnessed by the following example. 

\begin{example}\label{Ex:FailureVacuitySubexpansion}
Let $\KB=\Cn(\{p\limp q\})$ and let~$\intord_{\KB}=\intord_{\IM}$, with~$\intord_{\IM}$ as in Example~\ref{Ex:IntervalInterpretation}. It is easy to verify that~$\intord_{\KB}$ is anchored on~$\Mod{\KB}$, as $\Mod{\KB}=\opt(\Mod{\KB},\intord_{\KB})$. For the failure of~(P4), assume~$\alpha=\lnot q$. Then we have $q\notin\KB$ and~$\Mod{\KB\irev\lnot q}=\opt(\Mod{\lnot q},\intord_{\KB})=\{p\bar{q},\bar{p}\bar{q}\}$. But we have $\lnot p\in\Cn(\KB\cup\{\lnot q\})$. For the failure of~(P8), let~$\alpha=\lnot q$ and~$\beta=\top$. We have $q\notin\KB\irev\top$ and~$\lnot p\in\Cn(\KB\irev\top\cup\{\lnot q\})$, but $\lnot p\notin\KB\irev(\lnot q\land\top)$.
\end{example}

We consider the following Disjunctive Rationality~(DR) postulate, which is satisfied by the AGM revision operator based on total preorders~$\agmrev$ above:

\begin{description}
\item[P9] $\KB\rev(\alpha\lor\beta)\subseteq\KB\rev\alpha\cup\KB\rev\beta$ \hfill (DR)
\end{description}

This gives us a representation result for IOB revision operators (whose proof can actually be straightforwardly based on the proof characterising the more general BOB revision operators in the next section):

\begin{restatable}{theorem}
{RepResultIOB}
\label{Thm:RepResultIOB}
An operator~$\irev$ is an IOB revision operator iff it satisfies Tautology, Closure, Success, Inclusion, Extensionality, Consistency, Disjunctive Overlap, and Disjunctive Rationality.
\end{restatable}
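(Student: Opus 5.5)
We prove both directions; the finiteness of $\Prp$ means every subset of $\U$ is $\Mod{\alpha}$ for some $\alpha$, so we may identify the operator, for fixed $\KB$, with a choice function $C(\Val)\defined\Mod{\KB\irev\alpha}$ for $\Val=\Mod{\alpha}$. Extensionality and Closure make this well defined.

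\textbf{Soundness.} Suppose $\Mod{\KB\irev\alpha}=\opt(\Mod{\alpha},\intord_{\KB})$ with $\intord_{\KB}$ an interval order anchored on $\Mod{\KB}$.
\begin{itemize}
\item Closure and Extensionality are immediate, since the result is defined via models.
\item Success holds because $\opt(\Val,\cdot)\subseteq\Val$.
\item Tautology is exactly anchoring.
\item Inclusion: if $\Mod{\KB}\cap\Mod{\alpha}\neq\emptyset$, take $w\in\Mod{\KB}\cap\Mod{\alpha}$ and $v\in\opt(\Mod{\alpha})$. Then $v\intord w$. Since $w\intord u$ for all $u\in\U$, the Ferrers condition applied to $v\intord w$ and $u\intord u$ gives $v\intord u$ or $u\intord w$, and we want $v\in\Mod{\KB}$. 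The cleaner route is via an interval representation: anchoring gives $\up(w)$ at least every lower rank, so $\lo(v)\le\up(w)$. Showing $v$ is optimal overall needs $\lo(v)\le\up(u)$ for all $u$. I expect this to follow from the standard form in which the models of $\KB$ are exactly those with $\lo=0$; a short Ferrers argument should give this. If $\Mod{\KB}\cap\Mod{\alpha}=\emptyset$, Inclusion is trivial.
\item Consistency: an interval representation exists, so a model of $\alpha$ with minimal $\lo$ is $\intord$ every element of $\Mod{\alpha}$, since $\lo(v)\le\lo(u)\le\up(u)$.
\item Disjunctive Overlap is $\opt(\Val\cup\Val')\subseteq\opt(\Val)\cup\opt(\Val')$, which is immediate from the definition of $\opt$.
\item DR is the reverse: $\opt(\Val)\cap\opt(\Val')\subseteq\opt(\Val\cup\Val')$, which is again immediate.
\end{itemize}
Note that at the level of models, Disjunctive Overlap and DR translate into these two containments (with intersection and union swapped by $\Mod{\cdot}$).

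\textbf{Completeness.} Given the postulates, define the base relation $v\intord_{\KB}u$ iff $v\in C(\{v,u\})$. Reflexivity follows from Success and Consistency on the singleton $\{v\}$.

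The first step is to show $C(\Val)=\opt(\Val,\intord_{\KB})$ for every nonempty $\Val$.
\begin{itemize}
\item ($\subseteq$): If $v\in C(\Val)$ and $u\in\Val$, write $\Val=\{v,u\}\cup\Val$. The model form of Disjunctive Overlap, $C(X\cup Y)\subseteq C(X)\cup C(Y)$, is too weak here. Instead, Disjunctive Overlap in model form says $C(X)\cap C(Y)\subseteq C(X\cup Y)$, and DR says $C(X\cup Y)\subseteq C(X)\cup C(Y)$. From DR, $v\in C(\{v,u\})$ or $v\in C(\Val\setminus\{\ldots\})$. This is not conclusive, so I would use Inclusion as a Sen~$\alpha$-type property. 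Concretely, I would revise $\KB\irev\alpha$ relative to its input rather than $\KB$. The relevant property is $C(X\cup Y)\cap X\subseteq C(X)$.
\end{itemize}

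The main obstacle is precisely this: extracting the Sen~$\alpha$ (heritage) condition from the listed postulates. Inclusion only relates revision to $\KB$, not to larger inputs. I would first try to derive it by induction using DR: for $v\in C(X\cup Y)\cap X$, DR gives $v\in C(X)\cup C(Y)$, and iterating over a decomposition of $X\cup Y$ into $\{v,u\}$-pieces yields $v\in C(\{v,u\})$ for every $u$.

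With $\opt$-rationalisation established, the remaining steps are as follows.
\begin{itemize}
\item ($\supseteq$) follows by taking the union of pairs $\{v,u\}$ and applying Disjunctive Overlap repeatedly; this needs $C(\Val)\neq\emptyset$ (Consistency) for the union to collect $v$.
\item Anchoring follows from Tautology.
\item The Ferrers condition is the delicate point. Suppose $v\intord u$, $v'\intord u'$, but $v\not\intord u'$ and $v'\not\intord u$. Consider $C(\{v,u,v',u'\})$, which is nonempty by Consistency. By the rationalisation, any chosen element must be $\intord$ all four elements. Since $v\not\intord u'$, $v$ is excluded; likewise $v'$. So some $u$ or $u'$ is chosen. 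Combining this with DR on the split $\{v,u'\}\cup\{v',u\}$ yields that the chosen element lies in $C(\{v,u'\})=\{u'\}$ or in $C(\{v',u\})=\{u\}$. Deriving the contradiction from there, via Disjunctive Overlap and the choices on $\{v,u\}$ and $\{v',u'\}$, is a finite case check.

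The paper remarks that this mirrors the BOB proof. There, Consistency is dropped, and here it is exactly what forces the intervals to have nonnegative length, i.e.\ to be an interval order rather than a general biorder.
\end{itemize}
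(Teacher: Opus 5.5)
Your argument rests on a mistranslation of Disjunctive Rationality. The right-hand side of P9 is the bare union $\KB\irev\alpha\cup\KB\irev\beta$, not its closure, so $\Mod{\cdot}$ does not turn it into an intersection. Take $\gamma$ with $\Mod{\gamma}=\Mod{\KB\irev(\alpha\lor\beta)}$. This shows that, in your notation, DR says exactly that $C(X)\subseteq C(X\cup Y)$ \emph{or} $C(Y)\subseteq C(X\cup Y)$; this is the form used in the proof of Theorem~\ref{theorem:biorder_cred_revision}. Your $C(X)\cap C(Y)\subseteq C(X\cup Y)$ only encodes the weaker $\KB\irev(\alpha\lor\beta)\subseteq\Cn(\KB\irev\alpha\cup\KB\irev\beta)$. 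Hence DR is not ``immediate'' for $\opt$: it is exactly where Ferrers is needed. If $x\in\opt(X)\setminus\opt(X\cup Y)$ and $y\in\opt(Y)\setminus\opt(X\cup Y)$, there are $u\in Y$ and $u'\in X$ with $x\not\intord u$ and $y\not\intord u'$, while $x\intord u'$ and $y\intord u$, contradicting Ferrers. Conversely, your weak form cannot deliver Ferrers in the completeness part. On $\U=\{v,u,v',u'\}$, let $\preceq$ consist of all pairs except $(v,u')$ and $(v',u)$. The operator it induces for $\KB$ with $\Mod{\KB}=\{u,u'\}=\opt(\U,\preceq)$ never returns $\emptyset$ on a nonempty input. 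It satisfies Tautology, Success, Inclusion, Consistency and both of your containments. Yet $\preceq$ violates Ferrers, and genuine DR fails on $\{v,u\}\cup\{v',u'\}$, since $C(\{v,u\})=\{v,u\}$, $C(\{v',u'\})=\{v',u'\}$ and $C(\U)=\{u,u'\}$. So the ``finite case check'' you defer, on the split $\{v,u'\}\cup\{v',u\}$, cannot close. With the correct DR the step is one line: $v\in C(\{v,u\})$ and $v'\in C(\{v',u'\})$ force $v$ or $v'$ into $C(\{v,u,v',u'\})$, contradicting $v\not\preceq u'$ and $v'\not\preceq u$. Consistency is not used there, only for reflexivity.

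Your Inclusion argument also targets the wrong containment. Showing $\opt(\Mod{\alpha},\intord_{\KB})\subseteq\Mod{\KB}$ whenever $\Mod{\KB}\cap\Mod{\alpha}\neq\emptyset$ is Vacuity, which fails for IOB revision (Example~\ref{Ex:FailureVacuitySubexpansion}: $p\bar{q}\in\Mod{\KB\irev\lnot q}\setminus\Mod{\KB}$), so no Ferrers argument will give it. Inclusion in model form is $\Mod{\KB}\cap\Mod{\alpha}\subseteq\Mod{\KB\irev\alpha}$. This is immediate because every model of $\KB$ is $\intord_{\KB}$ every valuation. Likewise, your ``main obstacle'' is not one: Sen's $\alpha$ follows from DO plus Success. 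You had DO's model form $C(X\cup Y)\subseteq C(X)\cup C(Y)$ right in the soundness part. For $T\subseteq S$ it gives $C(S)\subseteq C(T)\cup C(S\setminus T)$, and $C(S\setminus T)\subseteq S\setminus T$, hence $C(S)\cap T\subseteq C(T)$. The reverse inclusion $\opt(\Val,\preceq)\subseteq C(\Val)$ comes from the correct DR, not DO, and needs no Consistency. With these repairs your revealed-preference route ($v\preceq u$ iff $v\in C(\{v,u\})$) does work. It is more direct than the paper's proof, which derives the result from Theorem~\ref{Thm:RepResultBOB} via the canonical compressed biorder-based interpretation built from the $U_i$, $V_i$ sequences, and uses Consistency only to force $\lo(u)\leq\up(u)$. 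As written, though, the model forms of DO and DR are swapped or misstated, and neither direction is established.
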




Even though Vacuity does not hold in general, we note that, in the case one wants it to be enforced, it is sufficient to impose the following condition on~$\intord_{\KB}$ (where~$\prec^{\mathsf{int}}_{\KB}$ below is the \emph{strict} version of~$\preceq_{\KB}$):
\begin{equation}\label{ConditionForVacuity}
\text{If }v\in\Mod{\KB}\text{ and }v'\notin\Mod{\KB}, \text{ then }v\prec^{\mathsf{int}}_{\KB}v'
\end{equation}

We end this section by noting that another, equivalent, way to define~$\Mod{\KB \irev \alpha}$ is directly in terms of an interval-based interpretation (see Definition~\ref{Def:IntervalBasedInterpretation}). Let $\IM=\tuple{\lo,\up}$, and, for every~$\gamma\in\Lang$, let~$\lo(\gamma)\defined\min\{\lo(v)\mid v\in\Mod{\gamma}\}$ and~$\up(\gamma)\defined\min\{\up(v)\mid v\in\Mod{\gamma}\}$. Then:
\begin{equation}
\Mod{\KB \irev \alpha}\defined\{v\in \Mod{\alpha} \mid \lo(v) \leq \up(\alpha)\}
\end{equation}

\section{Biorder-based Revision}\label{BiorderBasedRevision}

We now switch to a class of orderings that have been studied within the theory of rational choice~\cite{AleskerovEtAl2007}, namely biorders, and explore their fruitfulness in belief revision.

\subsection{Biorders}

A \emph{biorder} over~$\U$ is a binary relation~$\biord$ that satisfies the Ferrers condition in~(\ref{FerrersCondition}) above, but that is not necessarily reflexive. Hence, they are a generalisation of the interval orders we have considered in the previous section, in which there are allowed to be valuations~$z\in\U$ such that $z\not\biord z$.

\begin{definition}[Dissonant valuation]\label{Def:DissonantValuation}
Given a biorder~$\biord$ over~$\U$ and~$z\in\U$, we call~$z$ \df{dissonant} (\wrt~$\biord$) if~$z\not\biord z$.
\end{definition}
We will usually reserve~$z$ (possibly with subscripts) to denote dissonant valuations.

There are various ways to characterise biorders~\cite{AleskerovEtAl2007}. A particularly useful, and more visual, one is in terms of two ranking functions, the lower and the upper ones, as in the interval-order case. The main difference is that here the intervals assigned to valuations can be of `negative' length.

More formally, biorders can be characterised via a generalisation of interval-based interpretations:

\begin{definition}[Biorder-Based Interpretation]\label{Def:BiorderBasedInterpretation}
A \df{biorder-based interpretation} is a pair $\BM\defined\tuple{\lo,\up}$, where~$\lo$ and~$\up$ are functions from~$\U$ to $\mathbb{N}$.  Given $\BM=\tuple{\lo,\up}$ and~$v\in\U$, $\lo(v)$ is the \df{lower rank of}~$v$ \df{in}~$\BM$, and $\up(v)$ is the \df{upper rank of}~$v$ \df{in}~$\BM$. Hence, the pair~$(\lo(v),\up(v))$ is the \df{interval of}~$v$ \df{in}~$\BM$. 
\end{definition}

\begin{example}\label{Ex:BiorderInterpretation}
Let~$\Prp=\{p,q\}$, and let~$\lo$ and~$\up$ be such that~$\lo(pq)=2$, $\lo(p\bar{q})=3$, $\lo(\bar{p}q)=4$, and $\lo(\bar{p}\bar{q})=0$; $\up(pq)=3$, $\up(p\bar{q})=1$, $\up(\bar{p}q)=0$, and $\up(\bar{p}\bar{q})=4$. Then~$\BM=\tuple{\lo,\up}$ is an example of a biorder-based interpretation for~$\Prp$.
\end{example}

Figure~\ref{Fig:BiorderInterpretation} provides an easier to visualise representation of the biorder-based interpretation in Example~\ref{Ex:BiorderInterpretation}.

\begin{figure}[ht]
\[
\begin{array}{|c|c|c|c|c|c|}
\hline 
    & 0 & 1 & 2 & 3 & 4\\ \hline
\lo & \bar{p}\bar{q} &  & pq & p\bar{q} & \bar{p}q \\ \hline
\up & \bar{p}q & p\bar{q} & & pq & \bar{p}\bar{q} \\ \hline
\end{array}
\]
\caption{A biorder-based interpretation for~$\Prp=\{p,q\}$.}
\label{Fig:BiorderInterpretation}
\end{figure}

If by the \emph{length} of~$v$'s interval in~$\BM=\tuple{\lo,\up}$ we understand the result of~$\up(v)-\lo(v)$, then~$v$ having a negative interval corresponds to the case where~$\up(v)<\lo(v)$. In the biorder-based interpretation in Example~\ref{Ex:BiorderInterpretation}, $pq$'s interval has length~$1$, $p\bar{q}$'s interval has length~$-2$, $\bar{p}q$'s interval has length~$-4$, and~$\bar{p}\bar{q}$'s interval has length~$4$.

The intuition behind the assignment of a negative interval to a valuation is that it captures the fact the valuation in question bears some form of `impossibility'. Here, this stands as a simple alternative to the adoption of infinite ranks, as in the preferential approaches to defeasible reasoning.

As with interval-based interpretations, each biorder-based interpretation $\BM$ yields a relation $\biord_\BM$ over $\U$ given by $v \biord_\BM v'$ if and only if $\lo(v) \leq \up(v')$. Clearly, the dissonant valuations (see Definition~\ref{Def:DissonantValuation}) for~$\biord_\BM$ are precisely the valuations that get assigned negative intervals in~$\BM$. The following result is essentially shown by Aleskerov \etal.~\shortcite{AleskerovEtAl2007}.

\begin{restatable}{proposition}{RepresentationOfBiorders}
\label{Prop:RepresentationOfBiorders}
Let $\biord \subseteq \U \times \U$. Then $\biord$ is a biorder over~$\U$ iff there is a biorder-based interpretation~$\BM$ \suchthat~$\biord = \biord_\BM$.
\end{restatable}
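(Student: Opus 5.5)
The ``if'' direction is a direct check of the Ferrers condition. Suppose $v \biord_\BM u$ and $v' \biord_\BM u'$, so that $\lo(v)\leq\up(u)$ and $\lo(v')\leq\up(u')$. If neither $v\biord_\BM u'$ nor $v'\biord_\BM u$ held, we would have $\up(u')<\lo(v)$ and $\up(u)<\lo(v')$. Chaining these gives
\[
\lo(v)\leq\up(u)<\lo(v')\leq\up(u')<\lo(v),
\]
which is a contradiction. No reflexivity is needed, so $\biord_\BM$ is a biorder.

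For the ``only if'' direction, I would use the standard fact that the Ferrers condition makes the ``row sets'' and ``column sets'' of the relation nested. For $v\in\U$, let $R(v)\defined\{u\in\U\mid v\biord u\}$ be the set of elements $v$ is below. For $u\in\U$, let $C(u)\defined\{v\in\U\mid v\biord u\}$.

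The first key step is to show that the sets $R(v)$ form a chain under inclusion. Suppose $R(v)\not\subseteq R(v')$ and $R(v')\not\subseteq R(v)$. Then we can pick $u\in R(v)\setminus R(v')$ and $u'\in R(v')\setminus R(v)$. This gives $v\biord u$ and $v'\biord u'$, but neither $v\biord u'$ nor $v'\biord u$, contradicting the Ferrers condition.

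Given the chain, we define the ranks. Let $S_0\supsetneq S_1\supsetneq\cdots\supsetneq S_k$ be the distinct sets occurring in the family $\{R(v)\mid v\in\U\}\cup\{\emptyset\}$, listed in decreasing order of size. This family is finite because $\U$ is finite. Set $\lo(v)$ to be the index $i$ with $R(v)=S_i$, so that larger row sets get smaller lower ranks. Set $\up(u)\defined\max\{i\mid u\in S_i\}$ when $u$ lies in some $S_i$, and $\up(u)\defined 0$ otherwise.

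The second key step is to check the correspondence, and this needs a little care when $u$ lies in no $S_i$. If $v\biord u$, then $u\in R(v)=S_{\lo(v)}$, hence $\up(u)\geq\lo(v)$. Conversely, if $\lo(v)\leq\up(u)$, then $u$ is in some $S_i$ with $i=\up(u)\geq\lo(v)$. Since the chain is decreasing, $S_i\subseteq S_{\lo(v)}=R(v)$, so $v\biord u$.

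The troublesome case is when $u$ lies in no $S_i$ and $\up(u)=0$. If some $v$ has $\lo(v)=0$, then $0\leq 0$ would wrongly give $v\biord u$. To handle this, I would shift all indices: the lower ranks become $i+1$ and the upper ranks stay as the maximum index plus $1$, with $\up(u)\defined 0$ for any $u$ in no $S_i$. Such a $u$ then has $\up(u)=0<\lo(v)$ for every $v$, which is correct since no $v$ satisfies $v\biord u$. I do not need to include $\emptyset$ separately; if some $R(v)=\emptyset$, that $v$ simply gets a lower rank exceeding every upper rank. All ranks stay in $\mathbb{N}$.

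The main obstacle is this bookkeeping of the boundary cases: elements $u$ below which nothing lies, and elements $v$ with empty row set. Both must be handled within $\mathbb{N}$ and without infinite ranks, and the index shift above is designed to do exactly that. Everything else is the routine chain argument.
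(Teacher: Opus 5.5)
Your argument is correct, but it builds the interpretation differently from the paper. The paper constructs two interleaved increasing sequences, $U_{i+1}=U_i\cup\{u\mid V_i^c\subseteq up_{\biord}(u)\}$ and $V_{i+1}=\{u\mid down_{\biord}(u)\subseteq U_{i+1}\}$. It uses the nestedness of \emph{both} up-sets and down-sets to show that these grow strictly until they exhaust $\U$, and reads off $\lo(u)$ and $\up(u)$ as the stages at which $u$ enters $U_i$ and $V_i$.

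You use only the chain of up-sets. $\lo(v)$ records where $R(v)$ sits in the chain, and $\up(u)$ records the deepest member still containing $u$, so that $\lo(v)\leq\up(u)$ iff $u\in R(v)$. Your handling of the boundary cases is sound: after the shift every lower rank is at least $1$, so $\lo(v)\leq\up(u)$ forces $u$ into some $S_j\subseteq R(v)$. You also check the ``if'' direction, which the paper leaves implicit. Your route is shorter and more transparent.

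What the paper's heavier construction buys is that its output is \emph{compressed} (Definition~\ref{Def:CompressedBiorderBasedInterpretation}). The paper relies on this later: in the remark following the proposition, in Propositions~\ref{Prop:zTransitivityConditionOnBOB}(ii) and~\ref{Prop:TransitivityConditionOnBOB}(ii), and in the soundness proofs for ZTBOB and TBOB revision, all of which work with a compressed representative.

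Your unconditional shift does not deliver this. When $\biord$ has an overall optimal element, $S_0=\U$, so every valuation gets upper rank at least $1$. Then $\up^{-1}(0)$ is empty while $\up^{-1}(1)$ is not, which violates condition (b). This is exactly the anchored case the paper cares about. The shift also empties $\lo^{-1}(0)$, so your representative is never normal.

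The repair is cheap: apply the shift only when $S_0\neq\U$, i.e.\ when $\biord$ has no overall optimal element. One can check that the resulting interpretation satisfies (a)--(c). With that tweak, your construction would also serve the paper's later needs.
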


Given~$\BM$ from Example~\ref{Ex:BiorderInterpretation} above, we have~$\biord_{\BM}=\{(\bar{p}\bar{q},\bar{p}\bar{q}),(\bar{p}\bar{q},\bar{p}q),(\bar{p}\bar{q},p\bar{q}),(\bar{p}\bar{q},pq),(pq,pq),(pq,\bar{p}\bar{q}),$ $(p\bar{q},pq),(p\bar{q},\bar{p}\bar{q}),(\bar{p}q,\bar{p}\bar{q})\}$.

We observe that the mapping $\BM \mapsto \biord_\BM$ is not necessarily one-to-one, as several distinct pairs of ranking functions may yield the same ordering. 
However, the proof of Proposition~\ref{Prop:RepresentationOfBiorders} shows that every biorder can be represented by a {\em compressed} biorder-based interpretation.

\begin{definition}[Compressed Biorder-Based Interpretation]
\label{Def:CompressedBiorderBasedInterpretation}   
A biorder-based interpretation $\BM = \tuple{\lo,\up}$ is \df{compressed} if, {\em (a)} for all $1 \leq i < j$, $\lo^{-1}(j) \neq \emptyset$ implies $\lo^{-1}(i) \neq \emptyset$, {\em (b)} for all $0 \leq i < j$, $\up^{-1}(j) \neq \emptyset$ implies $\up^{-1}(i) \neq \emptyset$, and {\em (c)} $0 \leq \max\lo - \max\up \leq 1$, where $\max\lo = \max\{i \mid \lo^{-1}(i) \neq \emptyset\}$ (and similarly for $\max\up$).
\end{definition}
In words, $\BM$ is compressed if there are no `gaps' in either the lower or the upper rankings (except, possibly, $\lo^{-1}(0)$, which will be empty exactly when $\biord$ has no optimal elements overall), and the largest lower rank is either equal to the largest upper rank or is bigger by one. For example Figure \ref{Fig:CompressedBiorderInterpretation} shows a compressed biorder-based interpretation that gives the same biorder as the one in Figure \ref{Fig:BiorderInterpretation}.

\begin{figure}[ht]
\[
\begin{array}{|c|c|c|c|}
\hline 
    & 0 & 1 & 2 \\ \hline
\lo & \bar{p}\bar{q} &  pq, p\bar{q} & \bar{p}q \\ \hline
\up & \bar{p}q, p\bar{q} &  pq & \bar{p}\bar{q} \\ \hline
\end{array}
\]
\caption{A compressed biorder-based interpretation.}
\label{Fig:CompressedBiorderInterpretation}
\end{figure}

For convenience, we assume that every biorder-based interpretation~$\BM=\tuple{\lo,\up}$ is \emph{normal} in that there is at least one valuation~$v$ such that~$\lo(v)=0$. This ensures that the set of worlds that are optimal overall in~$\U$ is non-empty.

\subsection{BOB Revision}
\label{Sect:BOB_revision}

As we have done for interval orders, we can define a belief revision operator yielded by a biorder. To each (consistent) belief set~$\KB$, we therefore associate a biorder~$\biord_{\KB}$ such that the optimal models overall are the ones in~$\Mod{\KB}$. The models of the resulting new belief set~$\KB\brev\alpha$ are selected from~$\Mod{\alpha}$ by choosing the optimal ones according to~$\biord_{\KB}$:
\begin{equation}\label{Def-BO-Revision}
\Mod{\KB\brev\alpha}\defined\opt(\Mod{\alpha},\biord_{\KB})
\end{equation}

\begin{definition}[Biorder-Based Revision]
$\bullet$ is a biorder-based (BOB) revision operator if, to each $\KB$ we can associate a biorder~$\biord_{\KB}$ anchored on $\Mod{\KB}$ \suchthat, for each~$\alpha$, $\KB\brev\alpha$ is determined from~$\biord_{\KB}$ as in~(\ref{Def-BO-Revision}) above.
\end{definition}

Just as for IOB revision (\cf\ end of Section~\ref{IOB-Revision}), we can define $\Mod{\KB \brev \alpha}$ directly in terms of~$\lo$ and~$\up$ as follows:

\begin{equation}
\Mod{\KB \brev \alpha}\defined\{v\in \Mod{\alpha} \mid \lo(v) \leq \up(\alpha)\}
\end{equation}
where~$\lo(\gamma)=\min\{\lo(v)\mid v\in\Mod{\gamma}\}$ and~$\up(\gamma)=\min\{\up(v)\mid v\in\Mod{\gamma}\}$.
Thus, we can see that $\brev$ will not satisfy Consistency in general. Indeed, if $\up(\alpha)<\lo(\alpha)$, then $\Mod{\KB\brev\alpha}$ will be empty, even if $\Mod{\alpha}$ is non-empty. For an example of such a situation, let~$\KB=\Cn(\{\lnot p\land\lnot q\})$, let~$\alpha=p$, and let~$\biord_{\KB}=\biord_{\BM}$, where~$\BM$ is as in Example~\ref{Ex:BiorderInterpretation}.

\begin{definition}[Destabilising Sentences]\label{Def:DestabilisingSentence}
Given a BOB revision operator $\brev$ and a belief set~$\KB$, we say that~$\alpha$ is \df{destabilising} for~$\brev$ and~$\KB$ if~$\KB\brev\alpha$ is inconsistent.
\end{definition}

A rational agent may have sentences that are destabilising for it.
Intuitively, a destabilising sentence is an incoming piece of information that would make an agent's beliefs `collapse' if accepted. It turns out that one can have~$\alpha$ as destabilising, while~$\alpha\land\beta$, for some~$\beta$, is not. For instance, for~$\KB$ as above, we have that~$p$ is destabilising while~$p\land q$ is not. This corresponds to situations in which some extra information, or context, as called by~Schwind~\shortcite{Schwind2025} (see also Section~\ref{RelatedWork}), makes the acceptance of the sentence compatible with the agent's current beliefs. We shall give a special name to those destabilising sentences that cannot be rendered stable even with the addition of more information.

\begin{definition}[Irreconcilable Sentences]\label{Def:IrreconcilableSentence}
Given a BOB revision operator $\brev$ and a belief set~$\KB$, we say that~$\alpha$ is \df{irreconcilable} for~$\brev$ and~$\KB$ if~$\KB\brev\beta$ is inconsistent for all $\beta \in \Lang$ such that $\beta \entails \alpha$.
\end{definition}

For an example of an irreconcilable sentence, let~$\KB=\Cn(\{\lnot p\land\lnot q\})$ and let~$\biord_{\KB}=\biord_{\BM}$, where~$\BM$ is as in Example~\ref{Ex:BiorderInterpretation}. Then~$\neg p \wedge q$ is irreconcilable. 

We have the following representation result for BOB revision operators, which shows that BOB revision is characterised by the same list of postulates as IOB revision, just without Consistency:

\begin{restatable}
{theorem}
{RepResultBOB}
\label{Thm:RepResultBOB}
An~operator $\brev$ is a BOB revision operator iff it satisfies Tautology, Closure, Success, Inclusion, Extensionality, Disjunctive Overlap, and Disjunctive Rationality. 
\end{restatable}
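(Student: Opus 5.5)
The plan is to prove the two directions separately. Throughout, I use that the language is finite, so every $\Val\subseteq\U$ is $\Mod{\alpha}$ for some $\alpha$. I also use that $\KB\brev\alpha$ is identified with the theory of its model set, with $\Mod{\KB\brev\alpha}=\emptyset$ corresponding to $\KB\brev\alpha=\Lang$.

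\emph{Soundness.} Let $\KB\brev\alpha$ be given by (\ref{Def-BO-Revision}) for a biorder $\biord_{\KB}$ anchored on $\Mod{\KB}$.
\begin{itemize}
\item Closure and Extensionality hold because $\KB\brev\alpha$ is defined through $\Mod{\alpha}$ alone.
\item Success holds because $\opt(\Mod{\alpha},\biord_{\KB})\subseteq\Mod{\alpha}$.
\item Tautology is exactly the anchoring condition $\opt(\U,\biord_{\KB})=\Mod{\KB}$.
\item Inclusion: every $v\in\Mod{\KB}$ satisfies $v\biord_{\KB}u$ for all $u\in\U$, so $v$ is optimal in any set containing it. Hence $\Mod{\KB}\cap\Mod{\alpha}\subseteq\Mod{\KB\brev\alpha}$.
\item Disjunctive Overlap amounts to $\opt(A\cup B)\subseteq\opt(A)\cup\opt(B)$. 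This is immediate, since a world optimal in $A\cup B$ that lies in $A$ is optimal in $A$.
\end{itemize}

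For Disjunctive Rationality, I first show that, for belief sets, P9 is equivalent to $\KB\brev(\alpha\lor\beta)\subseteq\KB\brev\alpha$ or $\KB\brev(\alpha\lor\beta)\subseteq\KB\brev\beta$. Otherwise, pick $\varphi$ and $\psi$ witnessing both failures; then $\varphi\land\psi$ violates P9. Semantically this reads $\opt(A)\subseteq\opt(A\cup B)$ or $\opt(B)\subseteq\opt(A\cup B)$. I would prove this from the Ferrers condition. Suppose $v\in\opt(A)$ fails $v\biord_{\KB}u$ for some $u\in B$, and $v'\in\opt(B)$ fails $v'\biord_{\KB}u'$ for some $u'\in A$. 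Then $v\biord_{\KB}u'$ and $v'\biord_{\KB}u$ both hold by optimality. Ferrers now yields $v\biord_{\KB}u$ or $v'\biord_{\KB}u'$, a contradiction.

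\emph{Completeness.} Fix $\KB$ and define a choice function $C(\Mod{\alpha})\defined\Mod{\KB\brev\alpha}$, which is well defined by Extensionality. Set $v\biord_{\KB}u$ iff $v\in C(\{v,u\})$; reflexivity is not required. The steps are as follows.
\begin{enumerate}
\item Success gives $C(A)\subseteq A$.
\item Sen's property $\alpha$: $C(A\cup B)\cap A\subseteq C(A)$. Take $v\in C(A\cup B)\cap A$ and write $A\cup B=A\cup(B\setminus A)$. Disjunctive Overlap gives $C(A\cup B)\subseteq C(A)\cup C(B\setminus A)$. Since $v\notin B\setminus A\supseteq C(B\setminus A)$, we get $v\in C(A)$.
\item Sen's property $\gamma$: $v\in C(X)\cap C(Y)$ implies $v\in C(X\cup Y)$. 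This follows from the semantic form of DR, since whichever of $C(X)$, $C(Y)$ is included in $C(X\cup Y)$ contains $v$.
\item Property $\alpha$ gives $C(A)\subseteq\opt(A,\biord_{\KB})$. For the converse, take $v\in\opt(A,\biord_{\KB})$, so $v\in C(\{v,u\})$ for every $u\in A$. Applying $\gamma$ finitely often over these pairs gives $v\in C(A)$.
\item For $A=\emptyset$, both sides are empty by Success.
\item Anchoring follows from Tautology, since $C(\U)=\Mod{\KB}$.
\end{enumerate}

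\emph{Ferrers condition.} Suppose $v\biord_{\KB}u$ and $v'\biord_{\KB}u'$ but neither $v\biord_{\KB}u'$ nor $v'\biord_{\KB}u$. Let $X=\{v,u\}$ and $Y=\{v',u'\}$, so $v\in C(X)$ and $v'\in C(Y)$. If $v\in C(X\cup Y)$, then property $\alpha$ applied to $\{v,u'\}$ would give $v\biord_{\KB}u'$; so $v\notin C(X\cup Y)$, and symmetrically $v'\notin C(X\cup Y)$. This contradicts the semantic form of DR. Coincidences among the four worlds cause no trouble, because $\alpha$ is applied only to subsets.

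I expect the main obstacle to be the careful handling of empty outputs, both dissonant singletons with $C(\{z\})=\emptyset$ and destabilising inputs. In particular, the translation of P9 into its semantic form has to remain valid when some of $\KB\brev\alpha$, $\KB\brev\beta$, $\KB\brev(\alpha\lor\beta)$ equal $\Lang$, and $\alpha$ and $\gamma$ must be derived without using Consistency anywhere. The argument for the IOB case (Theorem~\ref{Thm:RepResultIOB}) then follows by adding Consistency, which forces $C(\{v\})=\{v\}$ and hence reflexivity.
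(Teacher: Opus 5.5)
Your proof is correct, but it takes a different route from the paper's for completeness.

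\textbf{The paper's route.} It builds the canonical compressed interpretation $\BM^\brev_\KB$ directly from $\brev$. It iterates revisions by $\bigvee V_i$ to generate the sequences $U_i$ and $V_i$, reads off $\lo$ and $\up$, and then verifies $\opt(\Mod{\alpha},\biord_{\KB,\brev})=\Mod{\KB\brev\alpha}$ using Superexpansion, which stands in for Disjunctive Overlap.

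\textbf{Your route.} You take the revealed-preference relation $v\biord_{\KB}u$ iff $v\in C(\{v,u\})$ and assign each postulate a separate job:
\begin{itemize}
\item Disjunctive Overlap gives Sen's $\alpha$.
\item The set form of Disjunctive Rationality gives both $\gamma$ and the Ferrers condition.
\item $\alpha$ and $\gamma$ together give rationalisability by $\biord_{\KB}$.
\end{itemize}

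\textbf{What your route buys.}
\begin{itemize}
\item It is more elementary, avoids the termination argument and bookkeeping of the iterative construction, and shows exactly which postulate produces which property.
\item It never uses Inclusion, so Inclusion is visibly redundant given the other postulates.
\item It yields Property~(\ref{Property:BOBrevision}) for free: $\alpha$ applied to $\{v\}\subseteq\{v,u\}$ gives $v\biord_{\KB}v$ whenever $v\biord_{\KB}u$.
\item A representing relation with that property is unique, and the paper's canonical relation has it by Lemma~\ref{Lemma:Canonical_model_is_consonant}. So your relation in fact coincides with the paper's $\biord_{\KB,\brev}$.
\item The IOB extension via Consistency is immediate, exactly as you note.
\end{itemize}

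\textbf{What the paper's route buys.} It hands over a compressed numerical representation satisfying Lemma~\ref{Lemma:Canonical_model_is_consonant}. The paper reuses this in the completeness proofs for IOB, ZTBOB, TBOB and the NPR theorems, where BZT and BT are checked on ranks. From your relation you recover the same tools by taking a compressed $\tuple{\lo,\up}$ via the proof of Proposition~\ref{Prop:RepresentationOfBiorders}, and using Property~(\ref{Property:BOBrevision}) in place of that lemma.

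Finally, the empty-output cases you flag as the main obstacle cause no trouble. Your translations of Disjunctive Overlap and Disjunctive Rationality into set form rely only on Closure and definability of every subset of $\U$, and they hold unchanged when some outputs equal $\Lang$.
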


For the completeness part of the proof of this result, we construct,
from a given operator~$\brev$ satisfying the stated postulates, and for each~$\KB,$ a biorder~$\biord_{\KB,\brev}$ via a compressed biorder-based interpretation $\BM^\brev_\KB = \tuple{\lo^\brev_{\KB},\up^\brev_{\KB}}$ as follows. We inductively build two sequences of subsets of~$\U$, an increasing and a decreasing one:
\[
U_0 \subseteq U_1 \subseteq \cdots \subseteq U_n \quad\text{ and }\quad V_0 \supseteq V_1 \supseteq \cdots \supseteq V_n
\]
    Initially $U_{-1} = \emptyset$ and $V_{-1} = \U$. And then, for each $i \geq -1$:\footnote{Here whenever a valuation appears within the scope of a propositional connective, we are taking it to stand for any sentence which has that valuation as its only model.}
    \[
    U_{i+1}\ \defined\ U_i \cup \Mod{\KB \brev \bigvee V_i}
    \]\[
    V_{i+1}\ \defined\ \bigcup \{Y \subseteq \U \mid  \Mod{\KB \brev \bigvee Y} \nsubseteq U_{i+1}\} 
    \]
    Let $n$ be minimal \suchthat~$n\geq 1$ and $U_{n+1} = U_n$. (Note~$n$ is guaranteed to exist since $\U$ is finite.)
    If, after the procedure terminates, we have $U_n \neq \U$, then we also add~$\U$ to the end of the list to make sure every member of~$\U$ appears in at least one $U_i$ and thus that $\lo^\brev_\KB$ defined below assigns a value to each valuation.  Finally, from these two sequences we read off $\lo^\brev_\KB$ and $\up^\brev_\KB$:
\[
\lo^\brev_\KB(u) \defined \min\{i \mid u \in U_i\}, \ 
\up^\brev_\KB(u) \defined \min\{i \mid u \not\in V_i\}
\]
It can then be shown that $\up^\brev_\KB$ also assigns a value to each valuation, that $\biord_{\KB,\brev}$ is anchored on $\Mod{\KB}$ and that $\opt(\Mod{\alpha}, \biord_{\KB,\brev}) = \Mod{\KB \brev \alpha}$ for each $\alpha$.

\begin{definition}[Canonical Biorder-based Interpretation]
Given a BOB revision operator $\brev$ and belief set $\KB$, the biorder-based interpretation $\BM^\brev_\KB$ defined above is the \df{canonical biorder-based interpretation associated to~$\brev$ and~$\KB$}.
\end{definition}

We observe that two different biorders may yield the same revision behaviour. In particular, for every dissonant valuation~$z$, the value of~$\lo(z)$ is irrelevant, \ie, we can increase~$\lo(z)$ without changing the resulting revision behaviour. This has essentially been pointed out by Aleskerov~\etal.~\shortcite{AleskerovEtAl2007} as well. To put it another way, if, for a given biorder-based interpretation~$\BM$, we have~$z\not\biord_{\BM}z$, then, for all $v \in \U$, the relation given by~$\biord_{\BM}\setminus\{(z,v)\}$, which is a biorder, will yield the same revision behaviour. 
As a result of this, every BOB revision operator~$\brev$ can be generated by a function that assigns to each~$\KB$ a biorder satisfying the following additional property (which does not play any role in the soundness proof of Theorem~\ref{Thm:RepResultBOB} above):
\begin{equation}\label{Property:BOBrevision}
\text{If } v\biord v', \text{then } v\biord v
\end{equation}
That is, no dissonant valuation is as good as any other.
This fact will be useful in proving some results in later sections. (We note that~$\biord_\KB$ constructed in the completeness proof for Theorem~\ref{Thm:RepResultBOB} satisfies Property~(\ref{Property:BOBrevision}), as the elements in~$U_n$ are precisely those that are not dissonant in $\biord_{\KB}$.)

Even though we do not have Consistency in general, given the assumption that~$\KB$ is consistent, we do get the following restricted version:
\begin{description}
\item[P6'] If $\alpha\in\KB$, then $\KB\brev\alpha\nentails\bot$ \hfill (Relative Consistency)
\end{description}
We will return to the consistency issue later. 
We also have:
\begin{description}
\item If $\alpha\in\KB$, then $\KB\brev\alpha\subseteq \KB$\hfill(Endogenous Inclusion)
\end{description}
Nevertheless, BOB revision does not satisfy the following property, in general:

\begin{description}
\item If $\alpha\in\KB$, then $\KB\subseteq\KB\brev\alpha$\hfill(Positive Vacuity)
\end{description}
That is, it is possible that, upon being presented with a sentence that is already a consequence of the agent's beliefs, the agent may in fact remove some beliefs. This can be seen in Example~\ref{Ex:BiorderInterpretation}, where we have $\KB=\Cn(\{\neg p \wedge \neg q\})$ and hence $p \leftrightarrow q \in \KB$, but $\KB \brev (p \leftrightarrow q) = \Cn(\{p\leftrightarrow q\})$, i.e., we lose belief in $\neg p \wedge \neg q$. 




It {\em is} possible to capture Positive Vacuity by enforcing a simple condition on biorder-based interpretations, namely$\lo^{-1}(0) \cap \up^{-1}(0) \neq \emptyset$. However, doing that would still not necessarily give us the following generalisation of it, which corresponds to the rule of Cautious Monotony from conditional reasoning:

\begin{description}
\item If $\beta\in\KB\brev\alpha$, then $\KB\brev\alpha\subseteq\KB\brev(\alpha\wedge\beta)$
\end{description}
That is, acceptance of $\alpha$ may still leave us open to some kind of instability. We have the following counterexample.

\begin{example}\label{Ex:PosVacDoesntImplyCCM}
Suppose $\KB = Cn(\{p \leftrightarrow q\})$ and $\brev$ associates the biorder $\biord_\BM$ to $\KB$, where $\BM = \tuple{\lo,\up}$ is a compressed interpretation specified by: $\lo(pq)=\lo(\bar{p}\bar{q}) = 0$, $\lo(p\bar{q}) = 1$ and $\lo(\bar{p}q)=2$; $\up(\bar{p}\bar{q})= \up(\bar{p}q) = 0$, $\up(p\bar{q})=1$, and $\up(pq)=2$. Then $p \in \KB \brev (p \vee q)$ but $q \in \KB \brev (p \vee q) \setminus \KB \brev ((p \vee q) \wedge p)$.
\end{example}
In the above example, $p$ is an example of what we call a {\em precarious} input:
\begin{definition}
[Precarious Sentences]
Given a BOB revision operator $\brev$ and a belief set~$\KB$, we say that~$\alpha$ is \df{precarious} for~$\brev$ and~$\KB$ if~there is~$\beta\in\Lang$ \suchthat~$\beta\in\KB\brev\alpha$ and $\KB\brev\alpha \nsubseteq\KB\brev(\alpha\lor\beta)$
\end{definition}

We have the following easy result concerning the relationship between destabilising, irreconcilable and precarious sentences:

\begin{restatable}{proposition}
{Destabilisingimpliesothers}
\label{Prop:Destabilising_implies_others}
Let $\alpha \in \Lang$. If $\alpha$ is destabilising for $\brev$ and $\KB$, then it is either irreconcilable or precarious for \mbox{$\brev$ and $\KB$.}
\end{restatable}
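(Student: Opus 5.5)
The plan is a case split on whether $\alpha$ is irreconcilable, using only two facts: a destabilising input yields the whole language, and suitable disjunctions or conjunctions with $\alpha$ land on a consistent revision. Assume $\alpha$ is destabilising for $\brev$ and $\KB$, i.e., $\KB\brev\alpha$ is inconsistent. By Closure we then have $\KB\brev\alpha = \Cn(\{\bot\}) = \Lang$, so \emph{every} sentence $\beta$ satisfies $\beta\in\KB\brev\alpha$. The first half of the precariousness condition is therefore automatic, and it remains to find some $\beta$ with $\KB\brev\alpha\nsubseteq\KB\brev(\alpha\lor\beta)$. Since $\KB\brev\alpha=\Lang$, this amounts to finding $\beta$ for which $\KB\brev(\alpha\lor\beta)$ is consistent: then $\bot\in\KB\brev\alpha$ but $\bot\notin\KB\brev(\alpha\lor\beta)$.

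If $\alpha$ is irreconcilable there is nothing to prove, so suppose it is not. Then there is $\gamma\entails\alpha$ with $\KB\brev\gamma$ consistent. For the definition of precariousness as literally stated, with $\alpha\lor\beta$, the simplest witness is actually $\beta=\neg\alpha$. Indeed $\alpha\lor\neg\alpha\equiv\top$, so Extensionality and Tautology give $\KB\brev(\alpha\lor\beta)=\KB\brev\top=\KB$, which is consistent by the standing assumption on $\KB$. This shows $\alpha$ is precarious. With this reading the irreconcilable disjunct is not even needed.

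Example~\ref{Ex:PosVacDoesntImplyCCM} and the Cautious Monotony postulate it illustrates suggest the intended condition may be $\KB\brev\alpha\nsubseteq\KB\brev(\alpha\land\beta)$. In that case the non-irreconcilability assumption is what does the work, so I would also record this version. Take $\beta=\gamma$. Then $\beta\in\KB\brev\alpha=\Lang$ trivially. Since $\gamma\entails\alpha$ we have $\alpha\land\gamma\equiv\gamma$, so by Extensionality $\KB\brev(\alpha\land\beta)=\KB\brev\gamma$, which is consistent. Hence $\bot\in\KB\brev\alpha\setminus\KB\brev(\alpha\land\beta)$, and again $\alpha$ is precarious.

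There is no real obstacle here. The only points needing care are invoking Closure to turn an inconsistent belief set into all of $\Lang$, and using Extensionality to replace $\alpha\lor\neg\alpha$ by $\top$ (or $\alpha\land\gamma$ by $\gamma$). In the conjunctive reading, the consistency of the revised set comes from the witness $\gamma$ supplied by non-irreconcilability.
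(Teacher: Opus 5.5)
Your proof is correct, and your conjunctive-reading argument is exactly the paper's proof: it takes a witness $\beta\entails\alpha$ with $\KB\brev\beta$ consistent, supplied by non-irreconcilability, and concludes $\bot\in\KB\brev\alpha\setminus\KB\brev(\alpha\wedge\beta)$ via Extensionality, which confirms that the $\lor$ in the stated definition of precarious is a typo for $\wedge$ (the link with CCM also requires $\wedge$). Your remark that the literal $\lor$ reading makes the result trivial (e.g.\ with $\beta=\neg\alpha$) is also right, but it only shows that reading is degenerate; it is not a different proof of the intended claim.
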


\subsection{ZTBOB Revision}

We now turn our attention to a special subclass of biorders, namely those also satisfying some forms of transitivity.

We start by noting that the converse to Proposition~\ref{Prop:Destabilising_implies_others} does not hold, in general, as can easily be seen in Example \ref{Ex:PosVacDoesntImplyCCM}, where $p$ is precarious but not destabilising for $\brev$ and $\KB$. 
Since clearly irreconcilability implies destabilising, this boils down to requiring that precarious implies destabilising, which can be expressed as the following Consistent Cautious Monotony (CCM) postulate on the~$\bullet$ operator:

\begin{description}
\item[P10] If $\beta\in\KB\brev\alpha$ and $\KB\brev\alpha\nentails\bot$, then $\KB\brev\alpha\subseteq\KB\brev(\alpha \land\beta)$\hfill(Consistent Cautious Monotony)
\end{description}

We can satisfy this postulate by forcing a further condition on~$\BM$ and~$\biord_{\BM}$ (remember~$z$ denotes a dissonant valuation):
\begin{description}
\item[BZT]
If  $\lo(u)\leq\up(z)<\up(u)$,  then $\lo(z)\leq\up(z)$
\end{description}

In words, there is no dissonant valuation $z$ such that $\lo(u) \leq \up(z) < \up(u)$. This property fails for the $\BM$ in Example \ref{Ex:PosVacDoesntImplyCCM}, as can be seen by taking $u = pq$ and $z = \bar{p}q$. If~$\BM$ satisfies BZT, then $\biord_{\BM}$ also satisfies the following weakening of transitivity, which we call \emph{$z$-transitivity}:
\begin{equation}\label{zTransitivity}
\textrm{If }u\biord v,\ v\biord z\textrm{ and }z\not\biord z, \textrm{ then } u\biord z
\end{equation}

We call any biorder~$\biord$ satisfying Property~(\ref{zTransitivity}) a \emph{$z$-transitive} biorder.

\begin{restatable}{proposition}
{zTransitivityConditionOnBOB}
\label{Prop:zTransitivityConditionOnBOB}
\emph{(i)} Let $\BM = \langle \lo, \up \rangle$ be a biorder-based interpretation satisfying BZT. 
Then $\biord_\BM$ is $z$-transitive. \emph{(ii)}~Let $\biord$ be a $z$-transitive biorder. Then every \df{compressed} biorder-based interpretation $\BM=\tuple{\lo,\up}$ \suchthat\ $\biord_\BM = \biord$ satisfies BZT. 
\end{restatable}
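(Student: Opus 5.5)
For part (i), I would argue directly from the definition of $\biord_\BM$ by contradiction. Suppose $u\biord_\BM v$, $v\biord_\BM z$ and $z\not\biord_\BM z$. Unfolding the definition gives $\lo(u)\leq\up(v)$, $\lo(v)\leq\up(z)$ and $\up(z)<\lo(z)$. Assume, towards a contradiction, that $u\not\biord_\BM z$, i.e., $\up(z)<\lo(u)$. Chaining the inequalities yields
\[
\lo(v)\leq\up(z)<\lo(u)\leq\up(v).
\]
So $\lo(v)\leq\up(z)<\up(v)$ holds with $z$ dissonant. This is exactly the antecedent of BZT instantiated with $v$ in place of $u$. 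BZT then forces $\lo(z)\leq\up(z)$, contradicting the dissonance of $z$. Hence $u\biord_\BM z$, and $\biord_\BM$ is $z$-transitive.

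For part (ii), fix a compressed $\BM=\tuple{\lo,\up}$ with $\biord_\BM=\biord$ $z$-transitive, and suppose BZT fails. Then there are $u$ and $z$ with $\up(z)<\lo(z)$ (so $z$ is dissonant in $\biord$) and $\lo(u)\leq\up(z)<\up(u)$. The plan is to use compressedness to produce a witness valuation $w$ whose lower rank is exactly $\up(z)+1$. From the hypotheses, $1\leq\up(z)+1\leq\up(u)\leq\max\up\leq\max\lo$, where the last step is condition~(c) of Definition~\ref{Def:CompressedBiorderBasedInterpretation}. Condition~(a) says the nonempty lower levels from $1$ up to $\max\lo$ have no gaps, so $\lo^{-1}(\up(z)+1)\neq\emptyset$. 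Pick $w$ with $\lo(w)=\up(z)+1$.

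Now check the three premises of $z$-transitivity for the chain $w,u,z$:
\begin{itemize}
\item $w\biord u$, since $\lo(w)=\up(z)+1\leq\up(u)$;
\item $u\biord z$, since $\lo(u)\leq\up(z)$;
\item $z\not\biord z$.
\end{itemize}
By $z$-transitivity, $w\biord z$, i.e., $\up(z)+1=\lo(w)\leq\up(z)$, which is absurd. Hence BZT holds.

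The main obstacle is the existence of $w$ in part (ii), which is where compressedness is essential. One needs both the gap-freeness of the lower ranks (which starts at level $1$, so it matters that $\up(z)+1\geq1$) and the inequality $\max\up\leq\max\lo$ from condition~(c), so that the level $\up(z)+1$ lies within the range of $\lo$. Everything else is routine unfolding of $v\biord_\BM v'\iff\lo(v)\leq\up(v')$.
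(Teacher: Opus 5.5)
Your proposal is correct and follows essentially the same route as the paper. Part (i) is the same instantiation of BZT at $v$, phrased by contradiction rather than contrapositively, and part (ii) likewise uses compressedness to get a witness whose lower rank lies in $(\up(z),\up(u)]$, then applies $z$-transitivity along the chain witness--$u$--$z$. The only differences are that the paper takes the witness with lower rank exactly $\up(u)$ instead of $\up(z)+1$, and that your explicit appeal to conditions (a) and (c) of compressedness spells out what the paper leaves implicit.
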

The ordering $\biord_\BM$ in Example \ref{Ex:PosVacDoesntImplyCCM} is not $z$-transitive, as witnessed by the fact that $p\bar{q} \biord_\KB pq$, $pq \biord_\KB \bar{p}q$, $\bar{p}q \not\biord_\KB \bar{p}q$ but $p\bar{q} \not\biord_\KB \bar{p}q$. 

We thus arrive at our first subclass of BOB revision, and its associated axiomatic characterisation.

\begin{definition}[ZTBOB Revision]
$\bullet$ is a $z$-transitive biorder-based (ZTBOB) revision operator if, to each $\KB$ we can associate a $z$-transitive biorder~$\biord_{\KB}$ anchored on $\Mod{\KB}$ \suchthat, for each~$\alpha$, $\KB\brev\alpha$ is determined from~$\biord_{\KB}$ as in~(\ref{Def-BO-Revision}).
\end{definition}

\begin{restatable}{theorem}
{RepResultZTBOB}
\label{Thm:RepResultZTBOB}
An operator~$\brev$ is a ZTBOB revision operator iff it satisfies all the postulates from Theorem \ref{Thm:RepResultBOB}, \ie, Tautology, Closure, Success, Inclusion, Extensionality, Disjunctive Overlap and Disjunctive Rationality, \textbf{plus}~CCM. 
\end{restatable}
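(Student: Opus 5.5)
The plan is to prove the two directions separately. Soundness goes through the rank representation and BZT. Completeness goes through the canonical biorder-based interpretation $\BM^\brev_\KB$ from the proof of Theorem~\ref{Thm:RepResultBOB}, together with Proposition~\ref{Prop:zTransitivityConditionOnBOB}.

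\emph{Soundness.} Let $\brev$ be ZTBOB. By Theorem~\ref{Thm:RepResultBOB} it already satisfies all the postulates except possibly CCM, so only CCM needs checking. Fix $\KB$ and represent $\biord_\KB$ by a compressed interpretation $\BM=\tuple{\lo,\up}$, which exists by the proof of Proposition~\ref{Prop:RepresentationOfBiorders}. By Proposition~\ref{Prop:zTransitivityConditionOnBOB}(ii), $\BM$ satisfies BZT.

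Suppose $\KB\brev\alpha$ is consistent and $\beta\in\KB\brev\alpha$. Write $M=\Mod{\KB\brev\alpha}=\{v\in\Mod{\alpha}\mid \lo(v)\leq\up(\alpha)\}\neq\emptyset$, so $M\subseteq\Mod{\alpha\land\beta}$. We must show $\Mod{\KB\brev(\alpha\land\beta)}\subseteq M$, and for this it suffices to prove $\up(\alpha\land\beta)=\up(\alpha)$. The inequality $\geq$ is trivial.

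For $\leq$, pick $u\in\Mod{\alpha}$ with $\up(u)=\up(\alpha)$. If $u\in\Mod{\beta}$ we are done. Otherwise $u\notin M$, so $\lo(u)>\up(\alpha)=\up(u)$ and $u$ is dissonant. Now take any $w\in M$. Then $\lo(w)\leq\up(u)$, and BZT applied to the pair $(w,z{:=}u)$ forbids $\up(u)<\up(w)$. Hence $\up(w)=\up(\alpha)$ with $w\in\Mod{\alpha\land\beta}$, which gives the claim.

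\emph{Completeness.} Assume $\brev$ satisfies the listed postulates plus CCM. By Theorem~\ref{Thm:RepResultBOB}, the canonical $\BM^\brev_\KB$ is compressed, anchored on $\Mod\KB$, and represents $\brev$. By Proposition~\ref{Prop:zTransitivityConditionOnBOB}(i), it suffices to show that $\BM^\brev_\KB$ satisfies BZT.

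Suppose not. Then there are $u$ and a dissonant $z$ with $\lo(u)\leq k:=\up(z)<\up(u)$. In the construction's terms this means:
\begin{itemize}
\item $u\in U_k$;
\item $z\notin V_k$, i.e.\ every $Y\ni z$ has $\Mod{\KB\brev\bigvee Y}\subseteq U_k$;
\item $z\in V_{k-1}$;
\item $u\in V_k$, so there is $Y\ni u$ with $\Mod{\KB\brev\bigvee Y}\nsubseteq U_k$;
\item $z\notin U_n$, so $z$ is never selected.
\end{itemize}

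The idea is to let $Y'=Y\cup\{z\}$ and $\alpha=\bigvee Y'$. The set $\Mod{\KB\brev\alpha}$ is contained in $U_k$, and it should be nonempty because $u$, with $\lo(u)\leq k=\up(\alpha)$, is selected there. So $\KB\brev\alpha$ is consistent, and $\lnot z\in\KB\brev\alpha$. CCM with $\beta=\lnot z$ then gives $\Mod{\KB\brev\bigvee Y}\subseteq\Mod{\KB\brev\alpha}\subseteq U_k$, contradicting the choice of $Y$.

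The main obstacle is justifying, purely from the postulates and the inductive definitions of $U_i,V_i$, that $\up^\brev_\KB(\bigvee Y')=k$ and that $u$ really lies in $\Mod{\KB\brev\bigvee Y'}$. The first equality needs the consistency of that output, which is exactly the hypothesis CCM requires. I would try to get this from Disjunctive Overlap and Disjunctive Rationality applied to $\bigvee Y'=\bigvee Y\lor z$, together with the fact that $\BM^\brev_\KB$ represents $\brev$. Using the rank representation, $\up(\bigvee Y')=\min(\up(\bigvee Y),k)=k$, since $\up(\bigvee Y)\geq k+1$ (as $\Mod{\KB\brev\bigvee Y}\nsubseteq U_k$). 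Combined with $\lo(u)\leq k$, this yields $u\in\Mod{\KB\brev\bigvee Y'}$.
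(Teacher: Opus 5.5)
Your proof is correct. Its soundness half is the paper's argument run directly rather than by contradiction, with the same use of BZT: the $\up$-minimal model of $\alpha$ plays the dissonant valuation.

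For completeness you share the paper's frame: show that the canonical $\BM^\brev_\KB$ satisfies BZT, then apply Proposition~\ref{Prop:zTransitivityConditionOnBOB}(i). Where you differ is in how you build the CCM counterexample.
\begin{itemize}
\item The paper uses compressedness to find $w$ with $\lo(w)=\up(u)$, and Lemma~\ref{Lemma:Canonical_model_is_consonant} to make $w$ consonant. It then tests CCM on $\Mod{\alpha}=\{u,w,z\}$ and $\Mod{\beta}=\{u,w\}$.
\item You instead take the set $Y\ni u$ witnessing $u\in V_k$, and test CCM on $\bigvee(Y\cup\{z\})$ with $\beta=\neg z$.
\end{itemize}
Your version needs neither compressedness nor that lemma, at the cost of reasoning about the internals of the $V_k$ construction. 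The paper's version stays purely at the level of ranks, and its $\{u,w\}$ is in fact one admissible choice of your $Y$.

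The ``main obstacle'' you flag is not one. Theorem~\ref{Thm:RepResultBOB} already gives $\Mod{\KB\brev\gamma}=\{v\in\Mod{\gamma}\mid\lo(v)\leq\up(\gamma)\}$ for the canonical ranks. Moreover $\up(\bigvee Y)>k$ holds simply because $Y\subseteq V_k$. So $u$ is selected by $\bigvee(Y\cup\{z\})$ without any appeal to Disjunctive Overlap or Disjunctive Rationality.

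You should also state explicitly that $z\notin Y$, which again follows from $Y\subseteq V_k\not\ni z$. This is what gives $\bigvee(Y\cup\{z\})\wedge\neg z\equiv\bigvee Y$ in the Extensionality step.
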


\subsection{TBOB Revision}

As discussed in the previous section, $z$-transitivity is a weakened form of transitivity. A natural question to ask is ``what if we enforce full transitivity?''  In terms of compressed biorder-based interpretations (Definition~\ref{Def:CompressedBiorderBasedInterpretation}), this corresponds to a particularly simple property, namely: 
\begin{description}
\item[BT]
$\up(v)\leq\lo(v)$
\end{description}

Notice that we still allow for valuations to be assigned negative intervals here, but we disallow intervals of non-zero positive length. As a result, transitive biorders are still not necessarily reflexive.

\begin{restatable}{proposition}
{TransitivityConditionOnBOB}
\label{Prop:TransitivityConditionOnBOB}
\emph{(i)} For every biorder-based interpretation~$\BM$ satisfying BT, $\biord_{\BM}$ is transitive. \emph{(ii)} Let~$\biord$ be a transitive biorder; then every \df{compressed} biorder-based interpretation~$\BM=\tuple{\lo,\up}$ \suchthat\ $\biord_\BM = \biord$ satisfies BT.
\end{restatable}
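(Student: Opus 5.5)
For part~(i), I will argue directly from the definition of~$\biord_\BM$ and BT. Suppose $u\biord_\BM v$ and $v\biord_\BM w$, \ie, $\lo(u)\leq\up(v)$ and $\lo(v)\leq\up(w)$. BT gives $\up(v)\leq\lo(v)$, so chaining yields
\[
\lo(u)\leq\up(v)\leq\lo(v)\leq\up(w).
\]
Hence $u\biord_\BM w$. No use of the Ferrers condition or of compressedness is needed here.

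For part~(ii), I will argue by contradiction. Let $\biord$ be a transitive biorder and $\BM=\tuple{\lo,\up}$ a compressed interpretation with $\biord_\BM=\biord$. Suppose some $v$ violates BT, so $i\defined\lo(v)<j\defined\up(v)$; in particular $j\geq 1$. The aim is to build valuations $u,w$ with $u\biord v$, $v\biord w$ and $u\not\biord w$, which contradicts transitivity. Translated into ranks, I need
\[
\lo(u)\leq j,\qquad i\leq\up(w),\qquad \up(w)<\lo(u).
\]
I will look for $w$ with $\up(w)=j-1$ and $u$ with $\lo(u)=j$. Since $j-1\geq i$, these choices satisfy all three inequalities.

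The step needing care is the existence of $u$ and $w$, and this is exactly what compressedness supplies.
\begin{itemize}
\item[] For $w$: $\up^{-1}(j)\neq\emptyset$ (it contains $v$) and $0\leq j-1<j$, so condition~(b) gives $\up^{-1}(j-1)\neq\emptyset$.
\item[] For $u$: condition~(c) gives $\max\lo\geq\max\up\geq j$. If $\max\lo=j$, any valuation attaining $\max\lo$ serves as~$u$. Otherwise $\lo^{-1}(\max\lo)\neq\emptyset$ with $1\leq j<\max\lo$, and condition~(a) gives $\lo^{-1}(j)\neq\emptyset$.
\end{itemize}
Condition~(a) only applies to indices $\geq 1$, so the fact that $j\geq 1$ is essential here.

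With these witnesses, $\lo(u)=j=\up(v)$ gives $u\biord v$, $\lo(v)=i\leq j-1=\up(w)$ gives $v\biord w$, and $\lo(u)=j>j-1=\up(w)$ gives $u\not\biord w$. This is the required violation of transitivity, so every such compressed $\BM$ satisfies BT.
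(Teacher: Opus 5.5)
Your proof is correct and follows essentially the same route as the paper: part (i) is the identical chaining argument, and part (ii) uses compressedness to find one valuation with lower rank $\up(v)$ and another with upper rank strictly below it, which yields a transitivity violation. The paper's witness has upper rank $\lo(v)$ rather than your $\up(v)-1$, which is immaterial, and your explicit derivation of the lower-rank witness from conditions (a) and (c), including the check that $j\geq 1$, fills in a step the paper leaves implicit.
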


One can see that in the biorder-based interpretation from Example~\ref{Ex:BiorderInterpretation}, the condition BT does not hold as~$\up(\bar{p}\bar{q})>\lo(\bar{p}\bar{q})$. Furthermore, the associated biorder~$\biord_{\BM}$ is not transitive as $\{(p\bar{q},\bar{p}\bar{q}),(\bar{p}\bar{q},p\bar{q})\}\subseteq\biord_{\BM}$, but $(p\bar{q},p\bar{q})\notin\biord_{\BM}$.

\begin{definition}[TBOB Revision]
$\bullet$ is a transitive biorder-based (TBOB) revision operator if, to each $\KB$ we can associate a transitive biorder~$\biord_{\KB}$ anchored on $\Mod{\KB}$ \suchthat, for each~$\alpha$, $\KB\brev\alpha$ is determined from~$\biord_{\KB}$ as in~(\ref{Def-BO-Revision}) above.
\end{definition}

As it turns out, TBOB revision brings us a step closer to AGM revision, in that this family of operators captures the second AGM supplementary postulate, Subexpansion. We have the following representation result:

\begin{restatable}{theorem}
{RepResultTBOB}\label{Thm:RepResultTBOB}
An operator~$\brev$ is a TBOB revision operator iff it satisfies all postulates from Theorem~\ref{Thm:RepResultBOB}, \ie, Tautology, Closure, Success, Inclusion, Extensionality, Disjunctive Overlap and Disjunctive Rationality, \textbf{plus} Subexpansion. 
\end{restatable}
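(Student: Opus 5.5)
We prove the two directions separately, using Theorem~\ref{Thm:RepResultBOB} for everything except Subexpansion and Proposition~\ref{Prop:TransitivityConditionOnBOB} to move between transitivity of $\biord_\KB$ and the simple condition BT on compressed interpretations.

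\emph{Soundness.} Let $\brev$ be a TBOB operator. Since every transitive biorder is a biorder, Theorem~\ref{Thm:RepResultBOB} gives all postulates except Subexpansion, so only P8 needs checking. Assume $\lnot\alpha\notin\KB\brev\beta$. Then there is $w\in\opt(\Mod{\beta},\biord_\KB)$ with $w\sat\alpha$. We must show
\[
\opt(\Mod{\alpha\land\beta},\biord_\KB)\subseteq\opt(\Mod{\beta},\biord_\KB)\cap\Mod{\alpha}.
\]
Take $v$ optimal in $\Mod{\alpha\land\beta}$. Then $v\biord_\KB w$, because $w\in\Mod{\alpha\land\beta}$. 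Also $w\biord_\KB u$ for every $u\in\Mod{\beta}$. Transitivity gives $v\biord_\KB u$ for all $u\in\Mod{\beta}$, so $v$ is optimal in $\Mod{\beta}$, and $v\sat\alpha$ trivially. This direction is routine.

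\emph{Completeness.} Let $\brev$ satisfy the postulates of Theorem~\ref{Thm:RepResultBOB} plus Subexpansion. Take the canonical compressed interpretation $\BM^\brev_\KB$. By Theorem~\ref{Thm:RepResultBOB} it already represents $\brev$ and is anchored on $\Mod{\KB}$. It remains to show that $\biord_{\KB,\brev}$ is transitive, or equivalently (via Proposition~\ref{Prop:TransitivityConditionOnBOB}(i)) that $\BM^\brev_\KB$ satisfies BT: $\up(v)\le\lo(v)$ for every $v$.

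I would argue by contradiction. Suppose $\lo(v)=i<\up(v)$. Then $v\in U_i$ and $v\in V_i$. I would first try to work directly with the relation rather than the sequences: take $u,v,w$ with $u\biord v$, $v\biord w$ but $u\not\biord w$, and test Subexpansion on a small input such as $\beta=u\lor v\lor w$ together with $\alpha$ chosen to exclude some of these worlds. The case analysis is on which of $u,v,w$ lie in $\opt(\Mod{u\lor v\lor w})$, using the Ferrers condition.

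\begin{itemize}
\item If $v\biord u$ and $v\biord w$, then $v$ is optimal in $\Mod{v\lor w}$ and in $\Mod{u\lor v\lor w}$. Subexpansion with $\beta=u\lor v\lor w$ and $\alpha=u\lor w$ then yields a contradiction unless $u$ becomes optimal in $\Mod{u\lor w}$.
\item But $u$ optimal in $\Mod{u\lor w}$ would require $u\biord w$, which fails by assumption.
\end{itemize}

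The main obstacle is the remaining sub-case, where $v$ is not optimal in the triple, or where $v$ is dissonant, so that $v\not\biord v$. There Subexpansion's antecedent may fail, since $\KB\brev\beta$ may be inconsistent. I expect to handle it by choosing $\beta=u\lor v$ or $v\lor w$ and using Ferrers to produce a fourth comparison that forces a consistent revision. If this direct route stalls, the fallback is to use the compressed structure of $\BM^\brev_\KB$: pick a sentence $\gamma$ whose models are $V_{i-1}$ plus $v$, and show that Subexpansion forces $v$ out of $V_i$, contradicting $\up(v)>i$.
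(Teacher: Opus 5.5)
Your soundness argument is correct, and it is more direct than the paper's. The paper passes to a compressed interpretation, invokes BT via Proposition~\ref{Prop:TransitivityConditionOnBOB} and compares ranks; you use transitivity of $\biord_\KB$ straight away. The completeness part, however, has a genuine gap.

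\emph{The first-bullet test cannot succeed.} Take $\Mod{\beta}=\{u,v,w\}$ and $\alpha=u\lor w$. Since $u\not\biord w$, $u\notin\opt(\Mod{\beta},\biord)$, so the antecedent $\lnot\alpha\notin\KB\brev\beta$ needs $w\in\opt(\Mod{\beta},\biord)$. But then $\opt(\Mod{u\lor w},\biord)=\{w\}\subseteq\Mod{\KB\brev\beta}$, and Subexpansion holds. The right test is $\alpha=u\lor v$: $v$ witnesses the antecedent, and $u\in\opt(\Mod{u\lor v},\biord)\setminus\opt(\Mod{\beta},\biord)$ refutes the consequent. However, this needs $u\biord u$, which you never establish.

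\emph{The remaining sub-case is left open, and Ferrers plus Subexpansion alone cannot close it.} For an arbitrary representing biorder, the transitivity failure can be invisible to $\brev$. Take the (lower, upper) intervals $a\colon(0,0)$, $u\colon(2,0)$, $v\colon(1,2)$, $w\colon(1,1)$. Then $u\biord v$, $v\biord w$, $u\not\biord w$ and $v\not\biord u$. Yet lowering $\up(v)$ to $1$ changes no optimal set and gives an interpretation satisfying BT, so this operator satisfies Subexpansion.

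\emph{The missing ingredient is Lemma~\ref{Lemma:Canonical_model_is_consonant}}, i.e.\ Property~(\ref{Property:BOBrevision}) for the canonical biorder: $u\biord v$ implies $u\biord u$. With it, and using $u\not\biord w$ both times, Ferrers applied to $u\biord v$, $v\biord w$ gives $v\biord v$. Applied to $u\biord u$, $v\biord w$ it gives $v\biord u$. So $v$ is never dissonant, your sub-case never arises, and the test $\alpha=u\lor v$ yields the contradiction.

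This repaired argument is a relational version of the paper's. The paper takes $x$ with $\lo(x)<\up(x)$ and uses compression to find $y$ with $\up(y)=\lo(x)$ and $w'$ with $\lo(w')=\up(x)$. It uses the same lemma to make $w'$ consonant, then refutes Subexpansion with $\Mod{\alpha}=\{x,w'\}$ and $\Mod{\beta}=\{x,y,w'\}$.

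Your fallback is not an argument as stated. The $V_j$ decrease and $v\in V_i$, so ``$V_{i-1}$ plus $v$'' is just $V_{i-1}$.
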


In other words, transitive biorders characterise the revision operators that are characterised exactly by the AGM postulates \emph{minus} Consistency \emph{plus} Disjunctive Rationality. (Note that Vacuity is recaptured since it follows from Subexpansion, Tautology and Extensionality). In the literature on rational choice theory, the family of choice functions corresponding to a restricted version of transitive biorders has been characterised by Gerasimou~\shortcite{Gerasimou2018}.


We note that Subexpansion is equivalent to the following Disjunctive Monotony (DM) postulate, given the other postulates for TBOB revision:
\begin{description}
\item
If $\lnot\beta\notin\KB\brev\alpha$,  then $\KB\brev(\alpha\lor\beta)\subseteq\KB\brev\beta$ \hfill (DM)
\end{description}

This postulate is essentially the belief revision version of  the~($\gamma^{+}$) postulate from rational choice of Salant and Rubinstein~\shortcite{SalantRubinstein2008}, which has been considered by Gerasimou~\shortcite{Gerasimou2018}, and also by Grossi~\etal.~\shortcite{GrossiEtAl2022}, from where we have taken the name Disjunctive Monotony. We could substitute~it for Subexpansion in Theorem \ref{Thm:RepResultTBOB} above, but would still need Disjunctive Rationality as it does not follow from the other postulates mentioned in Theorem \ref{Thm:RepResultTBOB}. As we will see, Disjunctive Monotony comes in handy for the representation results we state in the next section.

The property $\up(u) \leq \lo(u)$ for all $u$, combined with the property that $\lo(u)$ is irrelevant for all those $u$ such that $\up(u) < \lo(u)$ roughly means that we only need to keep $\up$. Furthermore any dissonant valuations $z$ \suchthat\ $\up(z) = i$ can be safely placed in a newly created upper rank in between $i$ and $i+1$ without changing the revision behaviour. All this means that we can specify an interpretation $\langle \lo, \up \rangle$ using a single ranking $\RM$   {\bf plus} a specification of which, if any, ranks are ``impossible''. See Figure~\ref{Fig:TBOB-Revision-Spheres} above for an illustration.
\[
\Mod{\KB \brev \alpha} =
\left\{
\begin {array}{l}
\{u \in \Mod{\alpha} \mid \RM(u) \leq \RM(\alpha)\},  \\
\qquad \qquad \qquad \textrm{if }\RM(\alpha)\ \textrm{is not impossible} \\
\emptyset,  \qquad \qquad  \textrm{otherwise}
\end{array}
\right.
\]

\begin{figure}
    \centering
    \includegraphics[width=0.7\linewidth]{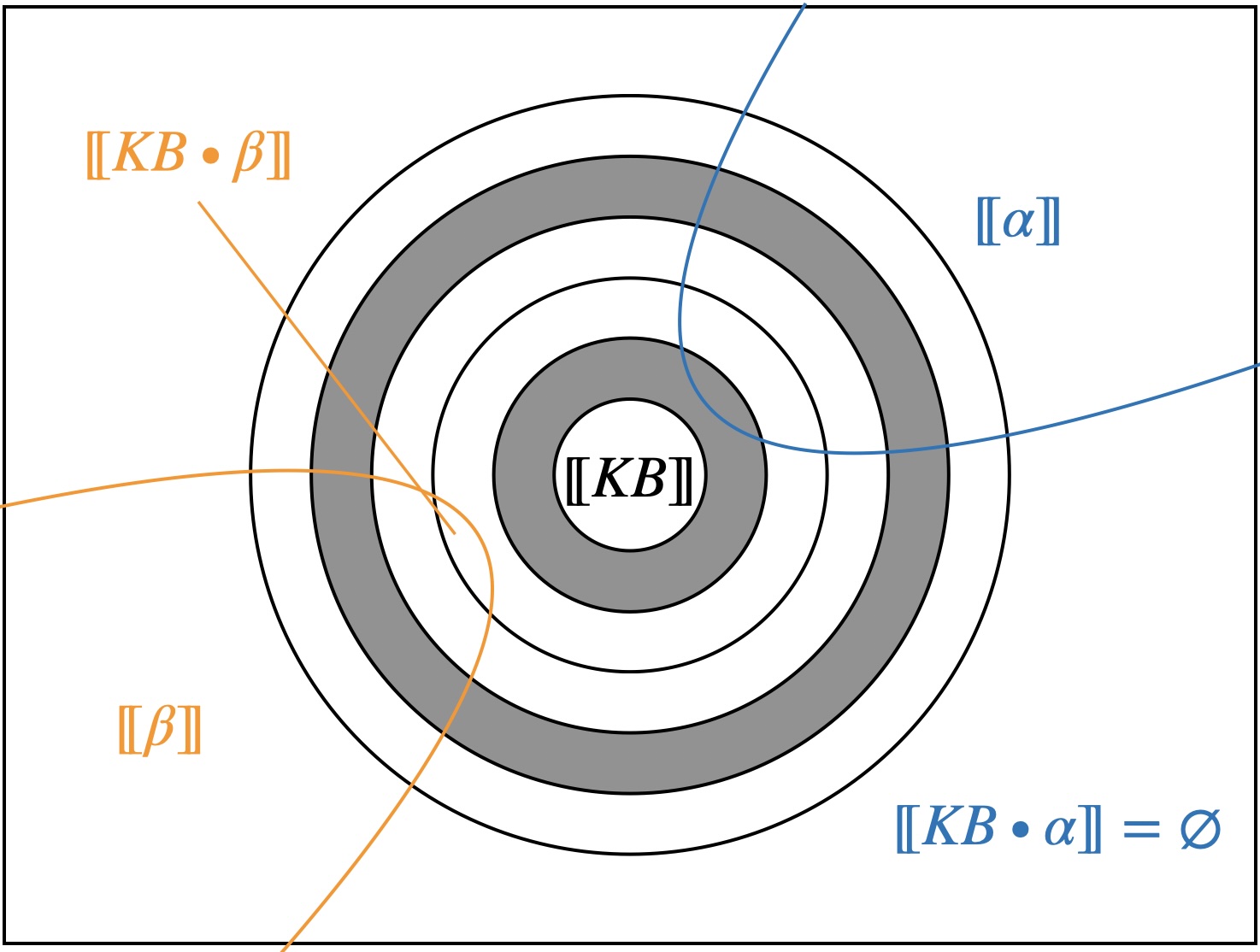}
    \caption{System-of-spheres style representation of a TBOB revision operator. Grey rings indicate ``impossible'' ranks.}
    \label{Fig:TBOB-Revision-Spheres}
\end{figure}

To sum it up, while both BOB and TBOB revision operators satisfy many important AGM postulates, in particular the Success one, their significance for standard belief revision is, at least at first sight, diminished by the fact that they do not always yield consistent outputs. Indeed, the Consistency postulate is non-negotiable in `orthodox' belief revision. We could, in principle, circumvent this by enforcing consistency. Note, however, that this would lead us back to interval orders or even total preorders, which is essentially where we started from. In the next section, we step back and look at the possibly incoherent revision inputs as still modelling something useful about an agent's belief state.

\section{Non-prioritised Revision}\label{NPRevision}

We shall now look at our previously defined operators from a different angle, namely that of non-prioritised belief revision. This entitles us to modify their definition in order to discard inputs leading to inconsistency as `incredible'. This captures the intuition that whenever an agent receives as input a sentence~$\alpha$ that is destabilising, they will just ignore it, sacrificing Success for Consistency, which is precisely the non-prioritised revision approach. 

\subsection{BOB NPR Operators}

We start by noting that the last three operators we have defined, namely general biorder-based, $z$-transitive biorder-based, and transitive biorder-based revisions, give rise to corresponding classes of non-prioritised revision operators, with associated axiomatics.

\begin{definition}[BOB Non-Prioritised Revision]\label{Def:NPRevisionFromRevision}
Let $\brev$ be a BOB revision operator. The \df{BOB non-prioritised revision operator} (\df{BOB NPR}, for short)~$\nprev$ generated by~$\brev$ is the operator defined by setting, for each belief set~$\KB$ and~$\alpha\in\Lang$,
\[
\KB\nprev\alpha\defined\left\{
\begin{array}{ll}
\KB\brev\alpha, & \text{ if this is consistent} \\
\KB, & \textrm{otherwise}
\end{array}
\right.
\]
\end{definition}
Equivalently, $\nprev$ is a BOB NPR operator if to each $\KB$ we can associate a biorder $\biord_\KB$ anchored on $\Mod{\KB}$ \suchthat\ $\Mod{\KB \nprev \alpha}$ is $\opt(\Mod{\alpha}, \biord_\KB)$ if this  is non-empty, and is $\Mod{\KB}$ otherwise.
We similarly obtain ZTBOB NPR operators and TBOB NPR operators, when~$\brev$ is taken to be a ZTBOB, respectively TBOB, revision operator. 

\begin{definition}[Credible Sentences]\label{Def:CredibleSentence}
We say~$\alpha$ is \df{credible}~\wrt~$\KB$ and~$\nprev$ (denoted $\alpha\in\CredSet^\nprev_\KB$) if~$\alpha\in\KB\nprev\alpha$, and \df{incredible} otherwise.
\end{definition}

It is easy to see that if~$\nprev$ is defined from~$\brev$ as above, then~$\alpha$ is credible~\wrt~$\KB$ and~$\nprev$ if and only if~$\alpha$ is not destabilising \wrt~$\KB$ and~$\brev$ (\cf~Definition~\ref{Def:DestabilisingSentence}).



\noindent Definition~\ref{Def:NPRevisionFromRevision} leads us to the following representation result:

\begin{restatable}{theorem}
{biordercredrevision}
\label{theorem:biorder_cred_revision}
An operator~$\nprev$ is a BOB NPR operator iff it satisfies Tautology, Closure, Inclusion, Extensionality, Consistency, Disjunctive Overlap and the list of postulates below. 
\begin{description}
\item[P2$'$] $\alpha\in\KB\nprev\alpha$ or $\KB\nprev\alpha=\KB$ \hfill (Relative Success)
\item[P11] If $\alpha\lor\beta\in\KB\nprev(\alpha\lor\beta)$, then $\alpha\in\KB\nprev\alpha$ or~$\beta\in\KB\nprev\beta$
\item[P12] If $\alpha\lor\beta\in\KB\nprev(\alpha\lor\beta)$ and $\beta\notin\KB\nprev\beta$, then $\KB\nprev\alpha\subseteq\KB\nprev(\alpha\lor\beta)$
\item[P13] If $\alpha\in\KB\nprev\alpha$ and $\beta\in\KB\nprev\beta$, then $\KB\nprev(\alpha\lor\beta)\subseteq\KB\nprev\alpha\cup\KB\nprev\beta$
\end{description}
\end{restatable}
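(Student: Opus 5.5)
The plan is to reduce both directions to Theorem~\ref{Thm:RepResultBOB}, treating $\nprev$ and an auxiliary successful operator $\brev$ as interdefinable. Soundness is direct. Take $\nprev$ generated by a BOB operator $\brev$ with biorder $\biord_\KB$, and write $\CredSet^\nprev_\KB$ for the inputs $\alpha$ with $\opt(\Mod{\alpha},\biord_\KB)\neq\emptyset$. Tautology, Closure, Inclusion, Extensionality and Consistency follow at once. Consistency holds because either the output is a consistent $\KB\brev\alpha$ or it is the consistent $\KB$. Inclusion uses the fact that $\KB\subseteq\Cn(\KB\cup\{\alpha\})$. Relative Success is immediate from the definition.

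The remaining postulates reduce to elementary facts about $\opt$ over the union $\Mod{\alpha\lor\beta}=\Mod{\alpha}\cup\Mod{\beta}$ for the relation $\lo(v)\leq\up(\gamma)$, with $\up(\alpha\lor\beta)=\min(\up(\alpha),\up(\beta))$. For P11: if some $v\in\Mod{\alpha}$, say, is optimal in $\Mod{\alpha\lor\beta}$, then $\lo(v)\leq\up(\alpha\lor\beta)\leq\up(\alpha)$, so $v$ is optimal in $\Mod{\alpha}$. P13 is DR for $\brev$ restricted to credible inputs. For P12: credibility of $\alpha\lor\beta$ and incredibility of $\beta$ give, by P11, that $\alpha$ is credible, and then I will show $\opt(\Mod{\alpha\lor\beta})\subseteq\opt(\Mod{\alpha})$. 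Take $v$ optimal in the disjunction. If $v\in\Mod{\beta}$, then $v$ would be optimal in $\Mod{\beta}$, contradicting the incredibility of $\beta$; so $v\in\Mod{\alpha}$ and it is optimal there. Disjunctive Overlap needs a case split on which of $\alpha,\beta,\alpha\lor\beta$ are credible. When $\alpha\lor\beta$ is credible and, say, $\beta$ is not, P12 handles it. When $\alpha\lor\beta$ is incredible, both disjuncts are incredible by the converse of the P11 argument (an element optimal in $\Mod{\alpha}$ whose $\up$ also attains the minimum is optimal in the union; more simply, Disjunctive Overlap for $\brev$ gives $\opt(\alpha\lor\beta)\subseteq\opt(\alpha)\cup\opt(\beta)$, so I will check directly). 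So all three outputs equal $\KB$.

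For completeness, given $\nprev$ satisfying the listed postulates, I define
\[
\KB\brev\alpha\defined\left\{\begin{array}{ll}\KB\nprev\alpha, & \text{if }\alpha\in\KB\nprev\alpha\\ \Lang, & \text{otherwise.}\end{array}\right.
\]
By Relative Success and Consistency, the operator generated from this $\brev$ via Definition~\ref{Def:NPRevisionFromRevision} is exactly $\nprev$. The work is therefore to verify that $\brev$ satisfies the postulates of Theorem~\ref{Thm:RepResultBOB}, after which that theorem supplies the anchored biorder.

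Success, Closure and Extensionality are immediate. Inclusion is inherited from $\nprev$ when $\alpha$ is credible. When $\alpha$ is incredible, the output $\Lang$ is still a subset of $\Cn(\KB\cup\{\alpha\})$ only if $\alpha$ is inconsistent with $\KB$. So I must show: if $\lnot\alpha\notin\KB$ then $\alpha$ is credible. I expect this to come from Disjunctive Overlap together with Tautology and P11/P12 applied to $\top\equiv\alpha\lor\lnot\alpha$, and I will verify it carefully. Tautology for $\brev$ uses the credibility of $\top$, which follows from Tautology for $\nprev$ and Closure.

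The main obstacle is Disjunctive Overlap and DR for $\brev$, which require a case analysis on credibility. If both disjuncts are credible, DR is P13. If exactly one disjunct is credible, say $\alpha$, and $\alpha\lor\beta$ is credible, then P12 gives $\KB\brev(\alpha\lor\beta)\subseteq\KB\brev\alpha$; the reverse inclusion needed for Disjunctive Overlap comes from Disjunctive Overlap for $\nprev$ intersected with $\KB\brev\beta=\Lang$. If neither disjunct is credible, P11 forces $\alpha\lor\beta$ to be incredible, so everything is $\Lang$ and DR holds trivially. I also need a monotonicity fact: credibility of $\alpha$ (or of $\beta$) must imply credibility of $\alpha\lor\beta$, since otherwise Disjunctive Overlap for $\brev$ fails. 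I expect this to come from Disjunctive Overlap for $\nprev$ combined with Relative Success and Inclusion, and if it does not, I will look for it as a consequence of P12.
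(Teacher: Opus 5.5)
Your architecture matches the paper's. For soundness, credibility coincides with $\opt(\Mod{\alpha},\biord_{\KB})\neq\emptyset$. For completeness, you set $\KB\brev\alpha=\Lang$ on incredible inputs and invoke Theorem~\ref{Thm:RepResultBOB}. Your route to Inclusion for $\brev$ is a valid alternative to the paper's use of Tautology plus Superexpansion. Suppose $\neg\alpha\notin\KB$ but $\alpha$ were incredible. Then $\neg\alpha\vee\alpha$ is credible by Extensionality and Tautology, so P11 makes $\neg\alpha$ credible, and P12 gives $\neg\alpha\in\KB\nprev\neg\alpha\subseteq\KB\nprev(\neg\alpha\vee\alpha)=\KB$, a contradiction.

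The gap is that your credibility bookkeeping rests on a false principle: that credibility of one disjunct yields credibility of the disjunction. Since $\alpha\entails\alpha\vee\beta$, this amounts to Single Sentence Closure ($\CredSet 4$), which BOB NPR operators violate. Take $\KB=\Cn(\{\neg p\wedge\neg q\})$ and $\BM$ from Example~\ref{Ex:BiorderInterpretation}: $p\wedge q$ is credible, but $(p\wedge q)\vee(p\wedge\neg q)\equiv p$ is not.

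This breaks two of your steps.
\begin{itemize}
\item In your soundness proof of Disjunctive Overlap, the claim that an incredible $\alpha\vee\beta$ has both disjuncts incredible (so all three outputs equal $\KB$) is false. What holds, and suffices, is that a disjunct whose $\up$-value equals $\up(\alpha\vee\beta)$ is incredible. Then one of $\KB\nprev\alpha$, $\KB\nprev\beta$ is $\KB$, and the intersection lies in $\KB=\KB\nprev(\alpha\vee\beta)$.
\item In completeness, your ``monotonicity fact'' cannot be derived from the postulates at all. Any BOB NPR operator using the above biorder satisfies them, by soundness, yet violates it. It is also not needed: if $\alpha\vee\beta$ is incredible, then $\KB\brev(\alpha\vee\beta)=\Lang$ and Disjunctive Overlap for $\brev$ holds trivially.
\end{itemize}
What you do need is the weaker P14: if $\alpha$ and $\beta$ are both credible, so is $\alpha\vee\beta$. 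It is required for DR of $\brev$ when both disjuncts are credible. It is equally required in your soundness step for P13, where $\KB\nprev(\alpha\vee\beta)$ must coincide with $\KB\brev(\alpha\vee\beta)$. P14 follows from Closure and Disjunctive Overlap for $\nprev$, since $\alpha\vee\beta\in\KB\nprev\alpha\cap\KB\nprev\beta$.

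Your mixed case (exactly one of $\alpha,\beta$ credible, $\alpha\vee\beta$ credible) also has the inclusions reversed. P12 gives $\KB\brev\alpha=\KB\nprev\alpha\subseteq\KB\nprev(\alpha\vee\beta)=\KB\brev(\alpha\vee\beta)$. That is exactly Disjunctive Overlap for $\brev$ in this case, because $\KB\brev\beta=\Lang$.

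The converse inclusion $\KB\brev(\alpha\vee\beta)\subseteq\KB\brev\alpha$ that you attribute to P12 is the extra half of P12+. It fails for general BOB NPR operators (see the example following P12+). Disjunctive Overlap for $\nprev$ cannot supply it either: with $\KB\nprev\beta=\KB$ it yields only $\KB\nprev\alpha\cap\KB\subseteq\KB\nprev(\alpha\vee\beta)$. Fortunately it is not needed, since DR for $\brev$ is trivial whenever a disjunct is incredible: the union $\KB\brev\alpha\cup\KB\brev\beta$ is then $\Lang$.

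With these repairs your argument coincides with the paper's.
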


Some of these postulates are familiar from the literature on credibility-limited revision~\cite{hansson2001credibility}. P2$'$ is a hallmark property of such revision operators. It says that, at least as far as their effects on the belief set are concerned, incredible input sentences are plain ignored. P11 says a disjunction is credible only if at least one of its disjuncts is. P12 says that if one of the disjuncts~$\beta$ of a credible sentence $\alpha\lor\beta$ is incredible, then~$\KB\nprev(\alpha\lor\beta)$ should contain everything that would be believed from revising with~$\alpha$ alone. P13 is a restricted form of Disjunctive Rationality~(DR), saying that it should hold whenever~$\alpha$ and~$\beta$ are both credible.


\begin{proposition}
Disjunctive Overlap implies the following postulate in the presence of Closure. 
\begin{description}
\item[P14] If $\alpha\in\KB\circ\alpha$ and $\beta\in\KB\circ\beta$, then $\alpha\lor\beta\in\KB\circ(\alpha\lor\beta)$
\end{description}
\end{proposition}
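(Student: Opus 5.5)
The plan is a short direct derivation, using only Disjunctive Overlap (P7) and Closure (P1) for the operator~$\nprev$. Assume $\alpha\in\KB\nprev\alpha$ and $\beta\in\KB\nprev\beta$; the goal is $\alpha\lor\beta\in\KB\nprev(\alpha\lor\beta)$.

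\textbf{Step 1: each revision contains $\alpha\lor\beta$.} By Closure, $\KB\nprev\alpha=\Cn(\KB\nprev\alpha)$. Since $\alpha\entails\alpha\lor\beta$ and $\alpha\in\KB\nprev\alpha$, we get $\alpha\lor\beta\in\KB\nprev\alpha$. Symmetrically, $\beta\in\KB\nprev\beta$ and $\beta\entails\alpha\lor\beta$ give $\alpha\lor\beta\in\KB\nprev\beta$.

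\textbf{Step 2: apply Disjunctive Overlap.} From Step~1, $\alpha\lor\beta\in\KB\nprev\alpha\cap\KB\nprev\beta$. Disjunctive Overlap (P7), read for~$\nprev$, says $\KB\nprev\alpha\cap\KB\nprev\beta\subseteq\KB\nprev(\alpha\lor\beta)$. Hence $\alpha\lor\beta\in\KB\nprev(\alpha\lor\beta)$, which is exactly P14.

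There is no real obstacle. The only point to check is that the closure step in Step~1 is legitimate, and it is: Closure guarantees each revised set is deductively closed, so it contains every classical consequence of its members. No other postulate, in particular Success or Consistency, is used.
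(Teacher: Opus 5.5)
Your proposal is correct and is the argument the paper itself uses. The appendix gives no separate proof of this proposition. However, in the soundness proof of P13 for Theorem~\ref{theorem:biorder_cred_revision}, the paper derives $\alpha\lor\beta\in\KB\nprev\alpha\cap\KB\nprev\beta$ by Closure and then concludes $\alpha\lor\beta\in\KB\nprev(\alpha\lor\beta)$ by Disjunctive Overlap, exactly as in your two steps.
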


The above property says the disjunction of two credible sentences is itself credible.

\subsection{ZTBOB and TBOB NPR Operators}

One of the characteristic properties for the general biorders-based NPR~$\circ$ is~P12. However, a case can be made for a version of~P12 with equality in the consequent, \ie, if a credible disjunction is received, then any incredible disjunct is essentially ignored.
\begin{description}
\item[P12+] If $\alpha\lor\beta\in\KB\circ(\alpha\lor\beta)$ and $\beta\notin\KB\circ\beta$, then $\KB\circ\alpha=\KB\circ(\alpha\lor\beta)$
\end{description}

P12+ does not hold, in general, for BOB NPR operators: 

\begin{example}
Let $\KB = Cn(\{p \leftrightarrow q\})$ and take the biorder $\biord_\BM$ anchored on $\KB$ given by $\BM$ from Example \ref{Ex:PosVacDoesntImplyCCM}. We have $p \vee (\neg p \wedge q) \in \KB \nprev (p \vee (\neg p \wedge q)) = Cn(\{p \wedge q\})$ and $\neg p \wedge q \notin \KB \nprev (\neg p \wedge q)$. But $\KB \nprev p = Cn(\{p\})$ and so $\KB \nprev (p \vee (\neg p \wedge q)) \nsubseteq \KB \nprev p$. 
\end{example}

It may be noted in the previous example that BZT fails, i.e, the ordering is not $z$-transitive. It turns out that the BOB NPR operators satisfying~P12+ are {\em precisely} the ZTBOB NPR operators.  

\begin{restatable}{theorem}{ztransbiordercredrevision}
\label{theorem:z_trans_biorder_cred_revision}
An operator~$\nprev$ is a ZTBOB NPR operator iff it satisfies all the postulates from Theorem~\ref{theorem:biorder_cred_revision}, \ie, Tautology, Closure, Inclusion, Extensionality, Consistency, Disjunctive Overlap, Relative Success, P11, P13 with P12 replaced by P12+.
\end{restatable}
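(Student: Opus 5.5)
The plan is to reduce Theorem~\ref{theorem:z_trans_biorder_cred_revision} to Theorem~\ref{theorem:biorder_cred_revision} and Theorem~\ref{Thm:RepResultZTBOB}, using the fact that a ZTBOB NPR operator is exactly the NPR operator generated by a ZTBOB revision operator. Throughout I will write $\brev$ for the underlying revision and use that $\alpha$ is credible iff $\KB\brev\alpha$ is consistent.

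\emph{Soundness.} Let $\nprev$ be generated by a ZTBOB revision $\brev$. Since a ZTBOB revision is in particular a BOB revision, Theorem~\ref{theorem:biorder_cred_revision} already gives every postulate except P12+. For P12+, assume $\alpha\lor\beta$ is credible and $\beta$ is not. Then P12 gives $\KB\nprev\alpha\subseteq\KB\nprev(\alpha\lor\beta)$, so only the reverse inclusion remains.
\begin{itemize}
\item First I show that $\alpha$ is credible. By DR for $\brev$ we have $\Mod{\KB\brev(\alpha\lor\beta)}\subseteq\Mod{\KB\brev\alpha}\cup\Mod{\KB\brev\beta}$, and the second set is empty. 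Hence every model of $\KB\brev(\alpha\lor\beta)$, of which there is at least one, lies in $\Mod{\KB\brev\alpha}$, so $\alpha$ is credible. In particular $\alpha\lor\beta\in\KB\brev\alpha$.
\item Next I apply CCM (P10) with input $\alpha$ and conclusion $\alpha\lor\beta$. This yields $\KB\brev\alpha\subseteq\KB\brev(\alpha\land(\alpha\lor\beta))=\KB\brev\alpha$, which is trivial and useless.
\item So I argue directly in a compressed interpretation satisfying BZT instead. Let $v\in\opt(\Mod{\alpha\lor\beta})$, so $\lo(v)\le\up(\alpha\lor\beta)$. I need $\up(\alpha)\le\up(\beta)$, which gives $\up(\alpha\lor\beta)=\up(\alpha)$ and hence $v\in\opt(\Mod\alpha)$.
\item Suppose instead $\up(\beta)<\up(\alpha)$. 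Because $\beta$ is destabilising, $\up(\beta)<\lo(\beta)$. Pick $u\in\Mod{\alpha}$ optimal for $\alpha\lor\beta$; since $\beta$ has no optimal models, we get $\lo(u)\le\up(\beta)$. Also pick $z\in\Mod\beta$ with $\up(z)=\up(\beta)$; then $z$ is dissonant, as $\lo(z)\ge\lo(\beta)>\up(\beta)=\up(z)$. This gives $\lo(u)\le\up(z)<\up(\alpha)\le\up(u)$, contradicting BZT.
\end{itemize}

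\emph{Completeness.} Take $\nprev$ satisfying the listed postulates. P12+ implies P12, so Theorem~\ref{theorem:biorder_cred_revision} yields a BOB revision $\brev$ generating $\nprev$; concretely, $\brev$ agrees with $\nprev$ on credible inputs and returns the inconsistent set otherwise. By Theorem~\ref{Thm:RepResultZTBOB} it then suffices to verify CCM for this $\brev$.

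Suppose $\KB\brev\alpha$ is consistent and $\beta\in\KB\brev\alpha$; we need $\KB\brev\alpha\subseteq\KB\brev(\alpha\land\beta)$. Write $\alpha\equiv(\alpha\land\beta)\lor(\alpha\land\lnot\beta)$. Since $\alpha$ is credible, P11 makes one of the two disjuncts credible.
\begin{itemize}
\item If $\alpha\land\lnot\beta$ is incredible, then P12+ gives $\KB\nprev(\alpha\land\beta)=\KB\nprev\alpha$, which is what we want.
\item If $\alpha\land\lnot\beta$ is credible, I must refute this case. If $\alpha\land\beta$ is also credible, then P13 gives $\KB\nprev\alpha\subseteq\KB\nprev(\alpha\land\beta)\cup\KB\nprev(\alpha\land\lnot\beta)$, which does not finish it, so a different argument is needed. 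Disjunctive Overlap gives $\KB\nprev(\alpha\land\lnot\beta)\cap\KB\nprev(\alpha\land\beta)\subseteq\KB\nprev\alpha$. Combining this with $\beta\in\KB\nprev\alpha$, and possibly P12+ applied to the pair $(\alpha\land\lnot\beta,\alpha\land\beta)$, should force a contradiction.
\end{itemize}

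An alternative for completeness is to run the canonical construction of $\BM^\brev_\KB$ and check BZT directly. Given a violation $\lo(u)\le\up(z)<\up(u)$, I would take $\alpha=u\lor z$. Then $u$ is optimal for $\alpha$ and $\alpha$ is credible, while $z$ is incredible. By P12+ we get $\KB\nprev u=\KB\nprev(u\lor z)$, and comparing these with the construction of $V_i$ should yield the contradiction.

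I expect this BZT verification against the canonical interpretation to be the main obstacle.
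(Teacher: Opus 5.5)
Your soundness argument is the paper's. In a compressed interpretation satisfying BZT, a dissonant $z\in\Mod{\beta}$ with $\up(z)=\up(\beta)$ and an $\alpha$-model $u$ optimal for $\alpha\lor\beta$ force $\up(\alpha)\le\up(\beta)$. There are two slips. The inclusion still owed after P12 is $\opt(\Mod{\alpha},\biord_{\KB})\subseteq\opt(\Mod{\alpha\lor\beta},\biord_{\KB})$, the reverse of the one you spell out, though it follows just as quickly from $\up(\alpha\lor\beta)=\up(\alpha)$. Also, the semantic inclusion you cite as DR is Disjunctive Overlap.

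The genuine gap is in completeness. Reducing to CCM for the derived $\brev$ and invoking Theorem~\ref{Thm:RepResultZTBOB} is a legitimate route, different from the paper's. Your first case is fine, since P11 makes $\alpha\land\beta$ credible, so $\brev$ and $\nprev$ agree on it. But when $\alpha\land\lnot\beta$ is credible you try to reach a contradiction, and there is none to reach. Take $\KB=\Cn(\{p\land q\})$ with a total preorder putting $pq$ strictly below the other three valuations, $\alpha=p$ and $\beta=q$. Then $p$ is credible, $q\in\KB\nprev p$, and $p\land\lnot q$ is credible.

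What you are missing is that P13 does finish this case once read through Closure.
\begin{itemize}
\item If $\alpha\land\beta$ were incredible, P12+ would give $\KB\nprev\alpha=\KB\nprev(\alpha\land\lnot\beta)\ni\lnot\beta$. Since $\beta\in\KB\nprev\alpha$, this makes $\KB\nprev\alpha=\KB\brev\alpha$ inconsistent, against the hypothesis. So both disjuncts are credible and P13 applies.
\item By Closure, $\KB\nprev\alpha\subseteq\KB\nprev(\alpha\land\beta)\cup\KB\nprev(\alpha\land\lnot\beta)$ forces one of the two inclusions outright. If $\gamma$ and $\delta$ witnessed the failure of each, $\gamma\land\delta$ would violate it.
\item The inclusion $\KB\nprev\alpha\subseteq\KB\nprev(\alpha\land\lnot\beta)$ would put $\beta$ and $\lnot\beta$ together in $\KB\nprev(\alpha\land\lnot\beta)$, contradicting Consistency. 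If $\alpha\land\lnot\beta$ is inconsistent, CCM is trivial by Extensionality.
\end{itemize}
With this repair your modular route works. It is arguably cleaner than the paper's, which verifies BZT directly on the canonical interpretation $\BM^{\brev}_{\KB}$.

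Your fallback, checking BZT on the canonical interpretation, is the paper's strategy, but your test sentences cannot produce a contradiction. With $\Mod{\alpha}=\{u,z\}$ one has $\Mod{\KB\nprev u}=\Mod{\KB\nprev(u\lor z)}=\{u\}$, so P12+ is satisfied. The BZT violation only becomes visible with a third valuation. By compressedness, pick $w$ with $\lo(w)=\up(u)$; by Lemma~\ref{Lemma:Canonical_model_is_consonant} it also has $\lo(w)\le\up(w)$. Now take $\Mod{\alpha}=\{u,w\}$ and $\Mod{\beta}=\{z\}$. Then $\Mod{\KB\nprev(\alpha\lor\beta)}=\{u\}$ whereas $\Mod{\KB\nprev\alpha}=\{u,w\}$, contradicting P12+.
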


Notice that~P12+ causes some redundancy in the list: Besides replacing~P12, it causes~P13 to reduce to Disjunctive Rationality. So the official characteristic list of properties for~ZTBOB NPR operators becomes: Tautology, Closure, Relative Success, Inclusion, Extensionality, Consistency, Disjunctive Overlap, Disjunctive Rationality, P11 and~P12+.


When a TBOB operator~$\brev$ is assumed in the definition of~$\nprev$, the following representation result is warranted:

\begin{restatable}{theorem}
{transbiordercredrevision}\label{theorem:trans_biorder_cred_revision}
An operator~$\nprev$ is a TBOB NPR operator iff it satisfies all the postulates satisfied by ZTBOB NPR operators, \ie, Tautology, Closure, Relative Success, Inclusion, Extensionality, Consistency, Disjunctive Overlap, Disjunctive Rationality, P11 and P12+, \textbf{plus} Disjunctive Monotony. 
\end{restatable}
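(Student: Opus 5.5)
The proof has two directions: soundness, checked semantically on a transitive biorder, and completeness, obtained by reusing the ZTBOB NPR construction and checking that its biorder is transitive. Throughout I use the TBOB sphere picture that follows Theorem~\ref{Thm:RepResultTBOB}: a single ranking $\RM$ together with a set of ``impossible'' ranks.

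\textbf{Soundness.} Let $\nprev$ be a TBOB NPR operator generated by a TBOB operator $\brev$. A transitive biorder is $z$-transitive, so $\brev$ is ZTBOB. By Theorem~\ref{theorem:z_trans_biorder_cred_revision}, $\nprev$ then satisfies every postulate in the list except Disjunctive Monotony. For DM, suppose $\lnot\beta\notin\KB\nprev\alpha$ and work with $\RM$.

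If $\alpha$ is credible, then $\Mod{\KB\nprev\alpha}$ consists of the $\RM$-minimal $\alpha$-worlds at a non-impossible rank $r=\RM(\alpha)$. Since $\lnot\beta\notin\KB\nprev\alpha$, one of these worlds satisfies $\beta$, so $\RM(\beta)=r=\RM(\alpha\lor\beta)\leq\RM(\alpha)$. Hence $\beta$ and $\alpha\lor\beta$ are both credible. Moreover $\Mod{\KB\nprev(\alpha\lor\beta)}$, the rank-$r$ worlds of $\alpha\lor\beta$, contains $\Mod{\KB\nprev\beta}$, the rank-$r$ worlds of $\beta$, which gives $\KB\nprev(\alpha\lor\beta)\subseteq\KB\nprev\beta$.

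If $\alpha$ is incredible, the hypothesis reads $\lnot\beta\notin\KB$. Then $\Mod{\KB}\cap\Mod{\beta}\neq\emptyset$, so $\RM(\beta)=0$, and rank $0$ is not impossible because the interpretation is normal and anchored. Thus $\Mod{\KB\nprev\beta}=\Mod{\KB}\cap\Mod{\beta}$. Likewise $\RM(\alpha\lor\beta)=0$, so $\Mod{\KB\nprev(\alpha\lor\beta)}=\Mod{\KB}\cap\Mod{\alpha\lor\beta}\supseteq\Mod{\KB\nprev\beta}$, and DM holds again.

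\textbf{Completeness.} Suppose $\nprev$ satisfies all the listed postulates, including DM. Theorem~\ref{theorem:z_trans_biorder_cred_revision} gives a ZTBOB operator $\brev$ generating $\nprev$. I will take the concrete $\brev$ from that proof: $\KB\brev\alpha\defined\KB\nprev\alpha$ when $\alpha$ is credible, and $\Cn(\{\bot\})$ otherwise. I then consider its canonical compressed interpretation $\BM^\brev_\KB$. By Proposition~\ref{Prop:TransitivityConditionOnBOB}(ii) and Theorem~\ref{Thm:RepResultTBOB}, it suffices to show that $\brev$ satisfies Subexpansion, or equivalently DM for $\brev$ given the other postulates.

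So assume $\lnot\beta\notin\KB\brev\alpha$. This forces $\alpha$ to be credible, since otherwise $\KB\brev\alpha$ is inconsistent and contains $\lnot\beta$. Hence $\KB\brev\alpha=\KB\nprev\alpha$, and DM for $\nprev$ gives $\KB\nprev(\alpha\lor\beta)\subseteq\KB\nprev\beta$. It remains to show that $\beta$ and $\alpha\lor\beta$ are credible, so that these $\nprev$-results coincide with the $\brev$-results.

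\textbf{Credibility of $\alpha\lor\beta$.} By P11, which I must check applies in the contrapositive direction needed here, or by P12+ applied to the disjunction $\alpha\lor(\alpha\lor\beta)$, I expect $\alpha\lor\beta$ to be credible whenever $\alpha$ is. A cleaner route is the disjunction $(\alpha\land\beta)\lor(\alpha\land\lnot\beta)$ combined with P14-style reasoning.

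\textbf{Credibility of $\beta$ (main obstacle).} Suppose $\beta$ is incredible. Then $\KB\nprev\beta=\KB$, so DM yields $\KB\nprev(\alpha\lor\beta)\subseteq\KB$. On the other hand, P12+ applied to the credible disjunction $\alpha\lor\beta$ with incredible disjunct $\beta$ gives $\KB\nprev(\alpha\lor\beta)=\KB\nprev\alpha$. Hence $\KB\nprev\alpha\subseteq\KB$. I then aim to derive $\lnot\beta\in\KB\nprev\alpha$, contradicting the hypothesis, using Inclusion, Relative Success and the fact that $\alpha\in\KB\nprev\alpha$.

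If this derivation fails, the fallback is to argue semantically on the canonical interpretation. Take a world $w\in\Mod{\alpha\land\beta}$ in $\opt(\Mod{\alpha},\biord)$ and consider $w$ as a sentence: it lies in $\opt(\Mod{\beta\land(\alpha\lor w)})$-type sets. Then use DM on the pairs $(w,\beta)$ to show that $\beta$'s upper rank is not impossible.

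This credibility transfer is the delicate step. Everything else reduces to Theorem~\ref{theorem:z_trans_biorder_cred_revision} and Theorem~\ref{Thm:RepResultTBOB}.
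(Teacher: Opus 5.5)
There is a genuine gap in your completeness argument, at the step you flagged as the main obstacle. Both credibility claims you set out to prove are false, so no argument via P11, P12+ or the canonical interpretation can establish them.

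Take $\KB=\Cn(\{\lnot p\land\lnot q\})$ with $\lo(\bar p\bar q)=\up(\bar p\bar q)=0$, $\up(\bar p q)=1$, $\lo(\bar p q)=2$, $\lo(pq)=\up(pq)=2$, $\lo(p\bar q)=\up(p\bar q)=3$. BT holds, so the biorder is transitive, and it is anchored on $\Mod{\KB}$. For $\alpha=p$ and $\beta=q$:
\begin{itemize}
\item $\Mod{\KB\nprev p}=\{pq\}$, so $\alpha$ is credible and $\lnot\beta\notin\KB\nprev\alpha$.
\item $\up(q)=\up(p\lor q)=1$ is strictly below the lower rank of every model of $q$, and of every model of $p\lor q$. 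So both $\beta$ and $\alpha\lor\beta$ are incredible.
\item Consequently P12+ cannot be applied to $\alpha\lor\beta$, and the contradiction you are aiming for does not exist.
\end{itemize}

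What you are missing is that the incredible case needs no work. For your $\brev$, if $\beta$ is incredible then $\KB\brev\beta=\Cn(\{\bot\})$, and $\KB\brev(\alpha\lor\beta)\subseteq\KB\brev\beta$ holds trivially. If $\beta$ is credible, then since $\alpha$ is also credible, Closure and Disjunctive Overlap for $\nprev$ give $\alpha\lor\beta\in\KB\nprev\alpha\cap\KB\nprev\beta\subseteq\KB\nprev(\alpha\lor\beta)$; this is P14. All three $\brev$-results then coincide with the $\nprev$-results, and DM for $\nprev$ finishes the proof. This two-case argument is exactly the paper's completeness proof.

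The same misconception breaks your soundness case for credible $\alpha$. A $\beta$-world among the optimal $\alpha$-worlds only gives $\RM(\beta)\leq r$, not $\RM(\beta)=r$. In the example above, $\RM(q)$ is the impossible rank of $\bar p q$, strictly below $r=\RM(pq)$, so ``$\beta$ and $\alpha\lor\beta$ are both credible'' fails.

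The repair is short. Since $\alpha$ has no models below $r$, $\RM(\alpha\lor\beta)=\RM(\beta)$. Either this common rank is impossible, and then both $\KB\nprev\beta$ and $\KB\nprev(\alpha\lor\beta)$ equal $\KB$. Or it is possible, and then the $(\alpha\lor\beta)$-worlds at that rank contain the $\beta$-worlds at that rank.

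With that correction, and keeping your incredible-$\alpha$ case, which is fine as written, your semantic soundness proof is a legitimate alternative to the paper's syntactic one. The paper instead splits on whether $\KB\brev\alpha$ is consistent and uses Vacuity, DM for $\brev$, and Superexpansion. Note, however, that the sphere representation you rely on is only sketched in the paper. You would need to justify it from BT (Proposition~\ref{Prop:TransitivityConditionOnBOB}), which forces $\lo=\up$ on every non-dissonant valuation.
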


Note that, although Disjunctive Monotony is sound for TBOB NPR operators, Subexpansion (which {\em is} sound for BOB revision operators) is not. 

\section{BOB NPR as Credibility-limited Revision}\label{CredLimitedRevision}


In this section, we show how the two subclasses of BOB non-prioritised revision from the previous section can be formulated as credibility-limited revision operators.

Credibility-limited revision has been put forward by Hansson~\etal.~\shortcite{hansson2001credibility} and its general schema is as follows. We assume a (successful and consistent) revision operator~$\rev$ and a function $\CredSet$ that, to each belief set~$\KB$, assigns a \emph{credible set}~$\CredSet_\KB \subseteq\Lang$, which, intuitively, contains sentences deemed acceptable. Then we have the following operator:
\[
\KB\circ_{\rev,\CredSet}\alpha\defined\left\{
    \begin {array}{ll}
        \KB\rev\alpha, & \textrm{if } \alpha\in\CredSet_\KB \\
        \KB, & \textrm{otherwise}
\end{array}
\right.
\]
The operator $\rev$ is to be interpreted as 
the revision operator the agent would use if they \emph{had} to accept the new sentence (even though at first glance they would choose to reject it). The sentences not in~$\CredSet_\KB$ represent inputs that are in some sense too remote in the agent's belief state to be entertained.

By assuming different properties of~$\rev$ and~$\CredSet$, and possibly some properties governing the interplay between these two, we obtain different properties of the operator~$\circ_{\rev, \CredSet}$. Regarding the underlying revision operator~$\rev$, Hansson~\etal.\ consider the case where it is a basic or a full AGM revision operator. Regarding $\CredSet$, they consider several properties, including the following:
\begin{description}
\item[\CredSet1] $\KB \subseteq \CredSet_\KB$
\item[\CredSet2] If $\alpha\in\CredSet_\KB$ and $\alpha\equiv\beta$, then $\beta\in\CredSet_\KB$
\item[\CredSet3] If $\alpha\lor\beta\in\CredSet_\KB$, then $\alpha\in\CredSet_\KB$ or $\beta\in\CredSet_\KB$
\item[\CredSet4]
    If $\alpha \in \CredSet_\KB$ and $\alpha \entails \beta$, then $\beta \in \CredSet_\KB$
\end{description}
$\CredSet 1$ says if a sentence is already supported by our belief set, then we have no reason to reject it, $\CredSet 2$ is a syntax irrelevance property on credible sentences, while $\CredSet 3$ says a disjunction is credible only if at least one of its disjuncts is (\cf\ Postulate~P14). We will adopt $\CredSet 1$-$\CredSet 3$ here. $\CredSet 4$ is the Single Sentence Closure property~\cite{hansson2001credibility}, which we consider too strong. The reason is that it could be rational for an agent to reject~$\alpha$ at first glance (\ie, $\alpha\notin\CredSet_\KB$), but to accept it when strengthened with additional `explanation' $\alpha\land\beta$ (\ie, $\alpha\land\beta\in\CredSet$). We have already encountered a concrete example of this situation in the Introduction, where~$\alpha$ is ``my five-year-old nephew had lunch with King Charles today'' and~$\beta$ is ``King Charles visited a local school today.'' A further discussion on the reasons to reject~$\CredSet 4$ is provided by Schwind~\shortcite{Schwind2025}. We are interested in weakening Single Sentence Closure to the following:
\begin{description}
\item[\CredSet4$'$] If $\alpha,\beta\in\CredSet_\KB$, then $\alpha\lor\beta\in\CredSet_\KB$    
\end{description}
In the presence of $\CredSet 2$ the above property implies the following:
\[
\textrm{If }
\beta \wedge \alpha, \neg\beta \wedge \alpha \in \CredSet_\KB\
\textrm{then }
\alpha \in \CredSet_\KB.
\]
This seems to be plausible, after all, if we do not hold ``My five year old nephew had lunch with King Charles today'' to be credible, but we do hold ``King Charles visited a local school today and my five year old nephew had lunch with King Charles today'' to be credible, then presumably we would hold ``King Charles did not visit a local school today and my five year old nephew had lunch with King Charles today'' to be incredible.

We also require some minimal property of coherence between~$\rev$ and~$\CredSet$. Intuitively, if a sentence $\alpha$ is acceptable but the addition of $\beta$ renders it unacceptable, then $\beta$ is exceptional within $\alpha$. This is formalised by the following: 
\begin{equation}
\label{condition_on_star+credset}
\textrm{If } \alpha \in \CredSet_\KB\
\textrm{and } \alpha \wedge \beta \notin \CredSet_\KB\
\textrm{then }
\neg\beta \in \KB \rev \alpha
\end{equation}

\begin{definition}[CL Structure]
\label{Def:CLstructure}
We say the pair $\tuple{\rev,\CredSet}$ is a \df{CL structure} if {\em (i)}$\rev$ is a revision operator satisfying Success and Consistency, {\em (ii)} $\CredSet$ satisfies $\CredSet 1$-$\CredSet 3$ and $\CredSet 4'$, and {\em (iii)} $\rev$ and $\CredSet$ jointly satisfy (\ref{condition_on_star+credset}) above.
\end{definition}

First we turn to ZTBOB non-prioritised revision.

\begin{restatable}{theorem}
{ZTBOBasCL}\label{theorem:ZTBOBasCL}
An operator~$\circ$ is a ZTBOB NPR operator iff $\nprev = \nprev_{\rev, \CredSet}$ for some CL structure $\tuple{\rev,\CredSet}$ \suchthat\ $\rev$ is an IOB revision operator.
\end{restatable}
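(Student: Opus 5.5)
We prove the two directions separately, working throughout with compressed interpretations and the $\lo/\up$ description of optimal models. Throughout, ``$\alpha$ is credible'' means $\alpha\in\CredSet_\KB$, and a valuation is credible if a sentence having it as its only model is.

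\textbf{($\Rightarrow$)} Let $\nprev$ be generated by a ZTBOB operator $\brev$. By Proposition~\ref{Prop:zTransitivityConditionOnBOB}, fix for each $\KB$ a compressed $\BM=\tuple{\lo,\up}$ satisfying BZT with $\biord_\KB=\biord_\BM$. By the remark around Property~(\ref{Property:BOBrevision}) we may also assume that no dissonant $z$ is ever optimal.

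\emph{The credible set.} Let $\CredSet_\KB$ be the set of non-destabilising sentences, so that $\nprev=\nprev_{\rev,\CredSet}$ as soon as $\KB\rev\alpha=\KB\brev\alpha$ for credible $\alpha$.

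\emph{The interval interpretation.} Define $\IM'=\tuple{\lo',\up'}$ by doubling all ranks: $\lo'(v)=2\lo(v)$ and $\up'(v)=2\up(v)$ for non-dissonant $v$. Each dissonant $z$ receives the degenerate interval $[2\up(z)+1,2\up(z)+1]$. All intervals now have non-negative length, so $\intord_{\IM'}$ is an interval order; let $\rev$ be the induced IOB operator.

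\emph{Agreement on credible inputs.} The key check is $\opt(\Mod{\alpha},\intord_{\IM'})=\opt(\Mod{\alpha},\biord_\BM)$ for credible $\alpha$. Credibility gives a non-dissonant $u\models\alpha$ with $\lo(u)\le\up(\alpha)$. BZT then shows that if a dissonant $z\models\alpha$ attains $\up(\alpha)$, so does some non-dissonant $u$. Hence $\up'(\alpha)=2\up(\alpha)$ is unchanged, and each dissonant $z$ has $\lo'(z)=2\up(z)+1>2\up(\alpha)$, so it is not selected. Applying this to $\alpha=\top$, which is credible by P6$'$, yields anchoring on $\Mod{\KB}$.

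\emph{The CL conditions.} $\CredSet1$ follows from Relative Consistency P6$'$, $\CredSet2$ from Extensionality, $\CredSet3$ from P11, and $\CredSet4'$ from P14. For (\ref{condition_on_star+credset}), suppose $\alpha$ is credible and $\neg\beta\notin\KB\rev\alpha=\KB\brev\alpha$. Then some $\biord$-optimal model of $\alpha$ satisfies $\beta$. It is non-dissonant and still optimal in $\Mod{\alpha\wedge\beta}$, so $\alpha\wedge\beta$ is credible.

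\textbf{($\Leftarrow$)} Given a CL structure with $\rev$ IOB, induced by $\IM=\tuple{\lo,\up}$, we aim to build a $z$-transitive biorder directly rather than grind through the postulates of Theorem~\ref{theorem:z_trans_biorder_cred_revision}.

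\emph{Credible inputs select only credible valuations.} Let $\alpha$ be credible and $v\in\Mod{\KB\rev\alpha}$. If $v$ were incredible, then $\alpha\wedge v$ would be incredible, and (\ref{condition_on_star+credset}) would give $\neg v\in\KB\rev\alpha$, a contradiction.

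\emph{Incredible valuations.} By iterating $\CredSet4'$ and $\CredSet2$, a sentence all of whose models are credible valuations is credible. By iterating $\CredSet3$, every credible sentence has a credible model. The plan is then to make exactly the incredible valuations dissonant, lowering their $\up$ to sit just below the relevant levels and raising their $\lo$ to the top. The aim is that for each $\alpha$, $\up(\alpha)$ is attained by an incredible valuation precisely when $\alpha$ is incredible. Then $\opt(\Mod{\alpha},\biord)$ is empty exactly for incredible $\alpha$, and coincides with $\opt(\Mod{\alpha},\intord_{\IM})$ otherwise.

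\emph{Main obstacle.} The difficulty is the case of an incredible $\alpha$ whose $\intord_\IM$-optimal models include credible valuations. One has to show, using $\CredSet3$, $\CredSet4'$ and (\ref{condition_on_star+credset}) applied to suitable disjunctions $\alpha\vee\gamma$ with $\gamma$ credible, that an incredible model of $\alpha$ can be placed with minimal $\up$. This amounts to defining the new $\up$ rank by rank, along the lines of the $U_i/V_i$ construction of the canonical interpretation, while checking that BZT holds.

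\emph{Fallback.} If that placement resists, we verify the postulates listed after Theorem~\ref{theorem:z_trans_biorder_cred_revision} directly and invoke that theorem. The hardest postulate is P12+. For it, use that $\beta$ is incredible while $\alpha\vee\beta$ is credible, so $\neg\beta\in\KB\rev(\alpha\vee\beta)$ by (\ref{condition_on_star+credset}). Then compare $\opt$ over $\Mod{\alpha}$ and $\Mod{\alpha\vee\beta}$ using $\up$-minima, together with the fact that optimal models of credible inputs are credible valuations.
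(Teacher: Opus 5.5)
\paragraph{Verdict.} Your $(\Rightarrow)$ direction is correct, but $(\Leftarrow)$ is not proved: your main route stops at the step that carries the content, and your fallback covers only P12+.

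\paragraph{The $(\Rightarrow)$ direction.} This is a legitimate variant of the paper's argument. The paper converts $\biord_\KB$ relationally: it sends all dissonant valuations to a single worst tier, and uses $z$-transitivity (Lemma~\ref{lemma:nonemptiespreserved}) to show that non-empty optimal sets survive. You instead take a compressed BZT interpretation and put each dissonant $z$ at the half-rank $2\up(z)+1$. BZT then guarantees that $\up(\alpha)$ is attained by a non-dissonant model whenever $\alpha$ is credible. Your check of (\ref{condition_on_star+credset}) is more direct than the paper's.

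\paragraph{The gap in your main route for $(\Leftarrow)$.} The step you call the `main obstacle' is the whole content of this direction. You never say where the upper rank of an incredible valuation $z$ goes. It has to be a single value depending on $z$ alone, not chosen per input $\alpha$. Until it is fixed, neither `$\opt$ is empty exactly on incredible inputs' nor BZT can be checked. It can be fixed as follows.
\begin{itemize}
\item Let $t_z$ be the largest $\up(\gamma)$ over consistent $\gamma$ with only credible models such that $\gamma\vee z\in\CredSet_\KB$; set $t_z=-1$ if there is none.
\item From $\CredSet 4'$ and (\ref{condition_on_star+credset}), credibility of $\gamma\vee z$ is downward closed in $\up(\gamma)$.
\item A sentence with credible part $\gamma$ and incredible models $z_1,\dots,z_k$ is credible iff $\gamma$ is consistent and every $\gamma\vee z_i$ is credible.
\item The crucial point: no credible $u$ has $\lo(u)\le t_z<\up(u)$. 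To see this, apply (\ref{condition_on_star+credset}) to $u\vee z$ inside the credible sentence $\gamma_0\vee u\vee z$, where $\up(\gamma_0)=t_z$. Then $u$ is $\intord_{\IM}$-optimal there, which gives the contradiction.
\item Now set $\lo'(v)=2\lo(v)+2$ and $\up'(v)=2\up(v)+2$ for credible $v$, $\up'(z)=2t_z+3$, and $\lo'(z)$ above every upper rank. This is a BZT interpretation whose optimal sets are empty exactly on incredible inputs and agree with $\intord_{\IM}$ elsewhere.
\end{itemize}

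\paragraph{The gap in your fallback.} The fallback is the paper's route, and you have its engine, which is Lemma~\ref{lemma:consequenceOfConditionOnCredsetAndIOB}. Condition (\ref{condition_on_star+credset}) gives $\neg\beta\in\KB\rev(\alpha\vee\beta)$. For an IOB operator this forces $\KB\rev(\alpha\vee\beta)=\KB\rev\alpha$: otherwise a $\beta$-model attaining $\up(\beta)<\up(\alpha)$ would be selected, since its interval has non-negative length. But you apply this only to P12+. The remaining work is:
\begin{itemize}
\item Disjunctive Overlap and Disjunctive Rationality need the same lemma, together with a case split on credibility using $\CredSet 3$ and $\CredSet 4'$.
\item P11 and P12+ need $\CredSet 1$ to pass from $\alpha\vee\beta\in\KB\nprev(\alpha\vee\beta)$ to $\alpha\vee\beta\in\CredSet_\KB$ in the case $\KB\nprev(\alpha\vee\beta)=\KB$.
\end{itemize}

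\paragraph{An edge case both routes share.} Your claim that every credible sentence has a credible model fails for $\bot$. Definition~\ref{Def:CLstructure} does not exclude $\bot\in\CredSet_\KB$; take $\CredSet_\KB=\Lang$. Then $\KB\nprev_{\rev,\CredSet}\bot$ is inconsistent, whereas every BOB NPR operator returns $\KB$ on $\bot$. So either route needs the extra assumption $\bot\notin\CredSet_\KB$. The paper's argument relies on this tacitly as well.
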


The `only if' part of the proof of this result requires us to construct a CL structure $\tuple{\rev^\nprev, \CredSet^\nprev}$ from a given BOB NPR operator $\nprev$. The definition of $\CredSet^\nprev$ is the expected one, viz.,
\[
\alpha \in \CredSet^\nprev_\KB\
\textrm{iff }
\alpha \in \KB \circ \alpha
\]
For $\rev^\nprev$ it essentially boils down to finding the appropriate way to {\em transform} a biorder into an interval order. For suppose $\nprev$ associates $\biord_\KB$ to $\KB$. We define an ordering $\intord_\KB$ as follows, for any $u, v \in \U$:
\[
\begin{array}{rcl}
u \intord_\KB v &
\textrm{iff } &
[u, v \notin \mathit{Diss}(\biord_\KB)\ \textrm{and } u \biord_\KB v] \\
& & \textrm{or } 
v \in \mathit{Diss}(\biord_\KB)
\end{array}
\]
where $\mathit{Diss}(\biord_\KB)$ is the set of valuations that are dissonant for $\biord_\KB$. It can then be shown that $\intord_\KB$ indeed forms an interval order that, provided $\biord_\KB$ is $z$-transitive (which it will be if $\nprev$ is a ZTBOB NPR operator), preserves all non-empty optimal sets. Hence, by associating each $\KB$ with $\intord_\KB$, we obtain an IOB revision operator $\rev^\nprev$ and we get $\nprev = \nprev_{\rev, \CredSet}$ for $\rev = \rev^\nprev$ and $\CredSet = \CredSet^\nprev$.  

If $\biord_\KB$ is not merely $z$-transitive but fully transitive, then $\intord_\KB$ above becomes a total preorder, which provides the key extra step towards the following characterisation of TBOB NPR operators.
\begin{restatable}{theorem}
{TBOBasCL}
An operator~$\circ$ is a TBOB NPR operator iff $\nprev = \nprev_{\rev, \CredSet}$ for some CL structure $\tuple{\rev,\CredSet}$ \suchthat\ $\rev$ is an AGM revision operator.
\end{restatable}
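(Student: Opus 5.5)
We prove the two directions separately, in the same way as Theorem~\ref{theorem:ZTBOBasCL}, and reuse as much of that proof as we can.

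\emph{Only if.} Let $\nprev$ be a TBOB NPR operator generated by a TBOB operator $\brev$. By Property~(\ref{Property:BOBrevision}) we may assume each associated transitive biorder $\biord_\KB$ has no dissonant valuation $\biord$-below anything. Every transitive biorder is $z$-transitive: if $u\biord v$ and $v\biord z$, transitivity already gives $u\biord z$. So $\nprev$ is also a ZTBOB NPR operator. The construction after Theorem~\ref{theorem:ZTBOBasCL} then gives an interval order $\intord_\KB$ that preserves all non-empty optimal sets, and a CL structure $\tuple{\rev^\nprev,\CredSet^\nprev}$ with $\nprev=\nprev_{\rev^\nprev,\CredSet^\nprev}$. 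It remains to show that $\intord_\KB$ is a total preorder, so that $\rev^\nprev$ is AGM by the Grove/Katsuno--Mendelzon theorem.
\begin{itemize}
\item \emph{Completeness.} Completeness holds because $\intord_\KB$ is reflexive (it is an interval order) and satisfies the Ferrers condition.
\item \emph{Transitivity.} Suppose $u\intord_\KB v\intord_\KB w$. If $w$ is dissonant, then $u\intord_\KB w$ by definition. Otherwise $v$ is not dissonant, since $v\intord_\KB w$ with $w$ non-dissonant forces the first disjunct. For the same reason $u$ is not dissonant and $u\biord_\KB v$. Transitivity of $\biord_\KB$ gives $u\biord_\KB w$, hence $u\intord_\KB w$.
\end{itemize}
Anchoring on $\Mod{\KB}$ is inherited from the ZTBOB case.

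\emph{If.} Let $\tuple{\rev,\CredSet}$ be a CL structure with $\rev$ AGM, generated by a total preorder $\tpo_\KB$, equivalently a rank function $\RM$. Since AGM operators are IOB, Theorem~\ref{theorem:ZTBOBasCL} already makes $\nprev_{\rev,\CredSet}$ a ZTBOB NPR operator. By Theorem~\ref{theorem:trans_biorder_cred_revision} it then suffices to verify Disjunctive Monotony for $\nprev$:
\[
\text{if } \lnot\beta\notin\KB\nprev\alpha, \text{ then } \KB\nprev(\alpha\lor\beta)\subseteq\KB\nprev\beta.
\]
As a backup route, we can build a TBOB biorder directly, in the spirit of Figure~\ref{Fig:TBOB-Revision-Spheres}: take $\up=\RM$, and declare a valuation dissonant when it is not an optimal model of any credible sentence. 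One would then check BT and that this yields $\nprev$. The DM route, however, avoids redoing the construction.

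\emph{Checking DM.} Assume $\lnot\beta\notin\KB\nprev\alpha$.
\begin{itemize}
\item \emph{Case $\beta$ incredible.} Then $\KB\nprev\beta=\KB$. If $\alpha\lor\beta$ is also incredible, the inclusion is trivial. If $\alpha\lor\beta$ is credible, we need $\KB\rev(\alpha\lor\beta)\subseteq\KB$. This should follow by applying (\ref{condition_on_star+credset}) to $\alpha\lor\beta$ and $\beta$: it gives $\lnot\beta\in\KB\rev(\alpha\lor\beta)$, so $\KB\rev(\alpha\lor\beta)=\KB\rev\alpha$ by AGM. Then $\alpha$ is credible (by $\CredSet3$ and $\CredSet2$), so $\KB\nprev\alpha=\KB\rev\alpha\ni\lnot\beta$, contradicting the hypothesis. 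So this subcase cannot occur.
\item \emph{Case $\beta$ credible.} If $\alpha$ is credible, then $\alpha\lor\beta$ is credible by $\CredSet4'$, and DM for $\rev$ (which follows from AGM Subexpansion) gives the claim. If $\alpha$ is incredible, then $\KB\nprev\alpha=\KB$, so $\lnot\beta\notin\KB$. Then $\KB\nprev\beta=\KB\rev\beta=\Cn(\KB\cup\{\beta\})$ by Vacuity. By $\CredSet4'$, $\alpha\lor\beta$ is credible, and $\KB\rev(\alpha\lor\beta)=\Cn(\KB\cup\{\alpha\lor\beta\})$, which is included in $\Cn(\KB\cup\{\beta\})$.
\end{itemize}

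The main obstacle is the case analysis in this ``if'' direction. In particular, we must confirm that (\ref{condition_on_star+credset}) together with $\CredSet2$--$\CredSet3$ really rules out the subcase where $\beta$ is incredible but $\alpha\lor\beta$ is credible while $\lnot\beta\notin\KB\nprev\alpha$. If that step fails, we fall back on the direct biorder construction.
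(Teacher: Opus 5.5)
Your proposal follows the paper's proof essentially step for step. In the ``only if'' direction you pass through the ZTBOB construction and show $\intord_\KB$ is complete and transitive by the same argument. In the ``if'' direction you reduce, via Theorems~\ref{theorem:ZTBOBasCL} and~\ref{theorem:trans_biorder_cred_revision}, to Disjunctive Monotony and run the same four-way case split. The subcase you flagged as the main obstacle is exactly the paper's case (c), and your argument for it is correct.

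There is one slip. When $\beta$ is credible and $\alpha$ is not, $\CredSet4'$ does not give credibility of $\alpha\lor\beta$, because it needs both disjuncts to be credible. You can fix this in either of two ways:
\begin{itemize}
\item Also treat the subcase $\alpha\lor\beta\notin\CredSet_\KB$, where $\KB\nprev(\alpha\lor\beta)=\KB\subseteq\Cn(\KB\cup\{\beta\})=\KB\nprev\beta$, as the paper does.
\item Derive credibility from (\ref{condition_on_star+credset}) applied to $\top\in\KB\subseteq\CredSet_\KB$, together with $\CredSet2$ and $\KB\rev\top=\KB$. This works because $\lnot(\alpha\lor\beta)\notin\KB$.
\end{itemize}
The appeal to Property~(\ref{Property:BOBrevision}) is harmless but unnecessary, since it does not change $\intord_\KB$.
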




\section{Related Work}\label{RelatedWork}

As already mentioned, Rott~\shortcite{Rott2014} was the first to have considered (strict) interval orders in belief change to define belief \emph{contraction} operators. Booth and Varzinczak~\shortcite{BoothVarzinczak2021-AAAI,BoothVarzinczak2025-AIJ} have utilised such orderings as the semantic foundation for defeasible conditionals, leading to the identification of a relevant notion of non-monotonic consequence relation lying between preferential closure and rational closure~\cite{LehmannMagidor1992}.

An investigation of the interdefinability of Rott's contraction operators and our IOB revision via the Levi and Harper identities is an exercise we shall leave for future work.

In addition to the work of Rott, there have been several other studies in belief change that deploy kinds of ordering that are different from total preorders, starting with Katsuno and Mendelzon~\shortcite{KatsunoMendelzon1991}, who considered using a slightly restricted form of strict partial order. Their work was generalised by Benferhat~\etal.~\shortcite{BenferhatEtAl2005}, who used unrestricted strict partial orders. Semiorders, which are a specific kind of (strict) interval order which correspond to interval-based interpretations in which all non-zero length intervals have the same length, have been considered for belief contraction by Rott~\shortcite{Rott2014} and for both revision and contraction by Peppas and Williams~\shortcite{PeppasWilliams2014}. The latter considered a slightly restricted family of semiorders designed to ensure the Vacuity postulate held. This restriction was lifted by Le\'on and Pino P\'erez \shortcite{LeonPinoPerez202}. Finally, Gaigne~\etal.~\shortcite{GaigneBelahceneLagrue2024} recently looked at a novel class of orderings called partitioned linear orders. All of the above papers offer representation results for the resulting classes of revision operators. 

With regard to credibility-limited revision, to the best of our knowledge, Schwind~\shortcite{Schwind2025} was the first to have challenged the single-sentence closure condition in this setting. Building on Booth~\etal.'s~\shortcite{DBLP:conf/kr/BoothFKP12} work, he motivates and introduces a family of \emph{context-based} (CB) revision operators. These capture the intuition that a sentence may be deemed credible on the grounds that it strengthens (or provides extra context for) one of its non-credible consequences.

In terms of postulates, Schwind's proposal consists of replacing Success Monotonicity in Booth~\etal.'s original proposal, which formalises the single-sentence closure condition, with a set of weaker postulates. Semantically, the belief revision operators are defined in terms of total preorders, as in the standard AGM approach to belief change, with the difference that the orderings are informed by a further refinement of the set of possible valuations.

Besides the fact that here we consider alternatives to total preorders in belief revision, our approach to credibility-limited revision differs from Schwind's in other key aspects. First, the CB revision with an input~$\alpha$ that is consistent with the agent's initial beliefs systematically reduces to belief \emph{expansion}, \ie, the result of revision is $\Cn(\KB\cup\{\alpha\})$. This corresponds to the Vacuity~(P4) postulate, which, as we have seen in Example~\ref{Ex:FailureVacuitySubexpansion}, does not hold in general in our approach, but which can be enforced if desired, via the condition in~(\ref{ConditionForVacuity}). In that sense, our construction offers the user a choice.

Second, one of the core postulates in Schwind's proposal is the Contextual Monotonicity (CM) principle, which states that if strengthening a credible sentence results in a non-credible one, then every further strengthening thereof remains incredible. We believe this is a rigid stricture to impose on the behaviour of the revision operator, as the following example, extending the one by Schwind~\shortcite{Schwind2025}, shows.

\begin{example}\label{Ex:UnintuitivenessOfContextualMonotony}
Let~$p$ be ``Bob had lunch with the prime minister yesterday'' (initially non-credible), let~$q$ be ``the prime minister visited Bob's institute'' (and then~$p\land q$ becomes credible), and let~$r$ be ``Bob had dinner with the prime minister yesterday'' (so that $p\land q\land r$ is non-credible). In Schwind's approach, every~$\alpha\in\Lang$ \suchthat\ $\alpha\entails p\land q\land r$ remains non-credible. However, let~$s$ be ``they've discussed dinner plans over lunch'', so that $p\land q\land r\land s$ ought to be credible.
\end{example}

It can be shown that the principles underlying Schwind's Contextual Monotonicity do not hold in our framework, in general, and that our~CL Structures from Definition~\ref{Def:CLstructure} can handle cases such as the one in Example~\ref{Ex:UnintuitivenessOfContextualMonotony} correctly, as already suggested by Figure~\ref{Fig:TBOB-Revision-Spheres}.


\section{Concluding Remarks}\label{Conclusion}

Recapitulating the main contributions of the present paper, they can be summarised as follows: (\emph{i})~the definition of a family of (successful) belief revision operators, each one informed by a special type of \emph{biorder}, something that had not been sufficiently explored in the belief-change literature before, (\emph{ii})~the provision of axiomatic characterisations \emph{à la} AGM for the respective operators, along with an analysis of the postulates characterising each of them, in particular of the conditions sufficient for enforcing those that do not hold in each of the new settings, (\emph{iii})~the deployment of the biorder-based revision operators in non-prioritised revision for recovering consistency and a provision of the respective representation results, and (\emph{iv})~the formulation of biorder-based non-prioritised revision operators as credibility-limited revision with an accompanying interdefinability result, delivering an intuitive and versatile relaxation of the Single Sentence Closure principle.

Among the consequences of the above contributions to knowledge representation and reasoning is the fact that potential users now have a larger set of (justifiable) choices for belief revision at their disposal. Furthermore, the utilisation of biorders in belief revision opens up a broad avenue for exploration in closely-related areas, \eg~in reasoning with defeasible conditionals and in normative reasoning.

Immediate topics for further investigation stemming from the results in this paper include: (\emph{i})~the identification of a meaningful strengthening as well as of an interesting weakening of the postulates on the credible set~$\mathcal{C}$, (\emph{ii})~a formulation of \emph{general} BOB revision, besides the ZTBOB and TBOB ones, into the framework of credibility-limited revision, and (\emph{iii})~the definition and study of BOB, ZTBOB and TBOB \emph{contraction} and of how these stand in comparison to Rott's contraction operators.

A further research question relates to the use of the full expressivity of biorders. As we have seen, different biorders can yield the same revision behaviour. We conjecture the answer to where this extra structure comes from may emerge from an investigation of the iterated revision case.


\section*{Acknowledgments}

Thanks are due to Xavier Parent for interesting discussions and to the anonymous reviewers for their useful comments.

\section*{AI Declaration}

The authors used Copilot to suggest some of the terminology used in the paper, and to check for any mistakes in the proofs once they had been written up.


\bibliographystyle{kr}
\bibliography{References}

\clearpage
\setcounter{page}{1}

\section*{Appendix: Proofs}

\RepResultIOB*

\begin{proof}
The proof actually extends that given for the characterisation of the more general BOB revision operators (Theorem \ref{Thm:RepResultBOB}).

\noindent 
{\em (Soundness)}
Since every interval order is a biorder, we know every IOB revision operator is a BOB revision operator, and so Theorem \ref{Thm:RepResultBOB} already gives us soundness for all the postulates other than Consistency. To show the latter suppose $\intord_\KB = \intord_\IM$ for some interval-based interpretation $\IM = \tuple{\lo,\up}$. Since $\lo(\alpha) \leq \up(\alpha)$ we know $\Mod{\KB \ast \alpha} \neq \emptyset$ if $\Mod{\alpha} \neq \emptyset$ as required. 

\noindent
{\em (Completeness)}
Suppose $\ast$ satisfies the stated postulates. By Theorem \ref{Thm:RepResultBOB} we know that, for each $\KB$, $\KB \ast \alpha$ is determined from $\biord_{\BM}$ where $\BM = \BM^\ast_\KB$ is the canonical biorder-based interpretation associated to $\ast$ and $\KB$. It thus suffices to show that the extra assumption of Consistency forces $\BM^\ast_\KB$ to be an interval-based interpretation, i.e., that $\lo^\ast_\KB(u) \leq \up^\ast_\KB(u)$ for all $u \in \U$. But if $\up^\ast_\KB(u') < \lo^\ast_\KB(u')$ for some $u'$ then we would have $\Mod{\KB \ast \alpha} = \emptyset$, where $\alpha$ is \suchthat\ $\Mod{\alpha} = \{u'\}$ - contradicting Consistency as required.
\end{proof}

\RepresentationOfBiorders*

\begin{proof}
Given a biorder $\biord$ and $u \in \U$ we use $up_\biord(u)$ and $down_\biord(u)$ to denote the {\em up-set} and {\em down-set} of $u$ respectively, i.e., $up_\biord(u) = \{v \in \U \mid u \biord v\}$ and $down_\biord(u) = \{v \in \U \mid v \biord u\}$. It is well-known (e.g., \cite{Fishburn1985}) that the Ferrers condition is equivalent to saying that the collection of all down-sets of elements from $\U$ is nested, which in turn is equivalent to saying the collection of all up-sets of elements from $\U$ is nested. 

To get a biorder-based interpretation from $\biord$ we define two increasing sequences of sets of worlds inductively by setting $U_{-1} = V_{-1} = \emptyset$ and then, for each $i \geq -1$:
\[
U_{i+1} = U_{i} \cup \{u \in \U \mid V_i^c \subseteq up_\biord(u)\}
\]
\[
V_{i+1} = \{u \in \U \mid down_\biord(u) \subseteq U_{i+1}\}
\]
We let $n$ be minimal such that both $U_n = \U$ and $V_n = \U$. We need to show that the $U_i$ and the $V_i$ are {\em strictly} increasing, which ensures that $n$ exists. This is ensured by the following properties: {\em (i)} $V_1 \neq \emptyset$, and {\em (ii)} for $i \geq 1$ we have ($U_i \neq \U \Rightarrow U_{i+1} \nsubseteq U_i$) and ($V_i \neq \U \Rightarrow V_{i+1} \nsubseteq V_i$).

\noindent
\underline{{\em (i) $V_1 \neq \emptyset$}}
\\
For $i = 1$ we have $U_1 = \opt(\U, \biord)$ and $V_1 = \{u \in \U \mid down_\biord(u) \subseteq U_1\}$. 
Note we can equivalently write $U_1 = \bigcap \{down_\biord(u) \mid u \in \U\}$ and so, since the down-sets are nested, $U_1 = down_\biord(u')$ for some $u' \in \U$. But then $u' \in V_1$ by definition of $V_1$ and so $V_1 \neq \emptyset$.

\noindent
\underline{{\em (ii)} for $i \geq 1$ ($U_i \neq \U \Rightarrow U_{i+1} \nsubseteq U_i$) and ($V_i \neq \U \Rightarrow V_{i+1} \nsubseteq V_i$)}
\\
Assume $U_i \neq \U$, i.e., $U_i^c \neq \emptyset$. By definition of $V_i$, for each $u \in V_i^c$ there is some $v \in U_i^c$ \suchthat\ $v \biord u$, so we have $V_i^c \subseteq \bigcup\{up_\biord(v) \mid v \in U_i^c\}$. Since $U_i^c \neq \emptyset$ and the up-sets are nested, there is~$v' \in U^c_i$ \suchthat\ $up_\biord(v') = \bigcup\{up_\biord(v) \mid v \in U_i^c\}$ and so $V_i^c \subseteq up_\biord(v')$. Then, by definition of $U_{i+1}$, we have $v' \in U_{i+1}\setminus U_i$ as required.
\\
\noindent
Assume $V_i \neq \U$, i.e., $V_i^c \neq \emptyset$. By definition of $U_{i+1}$, we have $\{u \in \U \mid V_i^c \subseteq up_\biord(u)\} \subseteq U_{i+1}$, so, since $\{u \in \U \mid V_i^c \subseteq up_\biord(u)\} = \bigcap\{down_\biord(u) \mid u \in V_i^c\}$, we have $\bigcap\{down_\biord(u) \mid u \in V_i^c\} \subseteq U_{i+1}$. Since $V_i^c \neq \emptyset$ and the down-sets are nested, there is~$u' \in V^c_i$ \suchthat\ $down_\biord(u') = \bigcap\{down_\biord(u) \mid u \in V_i^c\}$ and so $down_\biord(u') \subseteq U_{i+1}$. Then, by definition of $V_{i+1}$, we have $u' \in V_{i+1}\setminus V_i$ as required.

We then define $\lo$ and $\up$ by setting, for each $u \in \U$, $\lo(u)$ to be the minimal $i$ \suchthat\ $u \in U_i$, and $\up(u)$ to be the minimal $i$ \suchthat\ $u \in V_i$.

\noindent
\underline{$u \biord v$ implies $\lo(u) \leq \up(v)$}
\\
Assume $u \biord v$ and let $\lo(u) = i$. We must show $v \notin V_{i-1}$, i.e., $down_\biord(v) \nsubseteq U_{i-1}$. But $u \biord v$ and, by the minimality of $i$, $u \notin U_{i-1}$.

\noindent
\underline{$\lo(u) \leq \up(v)$ implies $u \biord v$}
\\
Assume $\lo(u) \leq \up(v)$ and let $\lo(u) = i$. Then $u \in U_i \setminus U_{i-1}$ which, by construction, means $V_{i-1}^c \subseteq up_\biord(u)$. Since $i \leq \up(v)$ we know $v \in V^c_{i-1}$ and so $v \in up_\biord(u)$, i.e., $u \biord v$.
\end{proof}

The above constructed sequences $U_i$ and $V_i$ also satisfy the following properties, which ensure the resulting $\BM$ is compressed: {\em (i)} If $V_i = \U$ then $\U_{i+1} = \U$, {\em (ii)} If $U_i = \U$ then $V_i = \U$. 

\RepResultBOB*
\begin{proof}
{\em (Outline)}
The soundness part is straightforward. For the completeness part, let $\brev$ satisfy the stated postulates. Let $\KB$ be a belief set and $\alpha \in \Lang$. We show $\opt(\Mod{\alpha}, \biord_{\KB,\brev}) = \Mod{\KB \brev \alpha}$. We have $\opt(\Mod{\alpha}, \biord_{\KB,\brev}) = \{u \in \U \mid \lo^\brev_\KB(u) \leq \up^\brev_\KB(\alpha)\} = U_i \cap \Mod{\alpha}$, where $i$ is minimal \suchthat\ $\Mod{\alpha} \nsubseteq V_i$. 

\noindent
\underline{$\opt(\Mod{\alpha}, \biord_{\KB,\brev}) \subseteq \Mod{\KB \brev \alpha}$}
Let $i$ be minimal \suchthat\ $\Mod{\alpha} \nsubseteq V_i$. By construction, $U_i = \bigcup_{j<i} \Mod{\KB \brev \bigvee V_j}$. So we must show $\Mod{\KB \brev \bigvee V_j} \cap \Mod{\alpha} \subseteq \Mod{\KB \brev \alpha}$ for all $j<i$. By Superexpansion ($\KB \brev (\alpha \wedge \beta) \subseteq Cn(\KB \brev \alpha \cup \{\beta\})$\footnote{This postulate is equivalent to Disjunctive Overlap, given the other postulates in the statement of the theorem.}) we know $\Mod{\KB \brev \bigvee V_j} \cap \Mod{\alpha} \subseteq \Mod{\KB \brev (\alpha \wedge \bigvee V_j)}$. But, by the minimality of $i$ we also know $\Mod{\alpha} \subseteq V_j$, i.e., $\Mod{\alpha \wedge \bigvee V_j} = \Mod{\alpha}$. Hence, by Extensionality, $\Mod{\KB \brev \bigvee V_j} \cap \Mod{\alpha} \subseteq \Mod{\KB \brev \alpha}$ as required.

\noindent
\underline{$\Mod{\KB \brev \alpha} \subseteq \opt(\Mod{\alpha}, \biord_{\KB,\brev})$}
Let $i$ be minimal \suchthat\ $\Mod{\alpha} \nsubseteq V_i$. We must show $\Mod{\KB \brev \alpha} \subseteq U_i \cap \Mod{\alpha}$. Clearly $\Mod{\KB \brev \alpha} \subseteq \Mod{\alpha}$ by Success, so it remains to show $\Mod{\KB \brev \alpha} \subseteq U_i$. But this follows from $\Mod{\alpha} \nsubseteq V_i$ and the definition of $V_i$ in the construction.
\end{proof}

We later make use of the following property of the canonical biorder-based interpretation for $\brev$ and $\KB$.
\begin{lemma}
\label{Lemma:Canonical_model_is_consonant} 
Let $w, u\in \U$ be \suchthat\ $\lo^\brev_\KB(w) \leq \up^\brev_\KB(u)$. Then $\lo^\brev_\KB(w) \leq \up^\brev_\KB(w)$.
\end{lemma}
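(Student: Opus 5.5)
We argue directly from the canonical construction, using only that the $U_i$ are increasing and the $V_i$ decreasing. Write $i=\lo^\brev_\KB(w)$, so $w\in U_i$ and either $i=0$ or $w\notin U_{i-1}$. The hypothesis $i\leq\up^\brev_\KB(u)$ says $u\in V_j$ for every $j<i$; in particular $u\in V_{i-1}$ when $i\geq 1$, so $V_{i-1}\neq\emptyset$. The goal $\lo^\brev_\KB(w)\leq\up^\brev_\KB(w)$ is equivalent to $w\in V_{i-1}$ (vacuous if $i=0$, since $V_{-1}=\U$). So the task reduces to showing that a valuation first entering $U$ at stage $i$ still lies in $V_{i-1}$, given that $V_{i-1}$ is nonempty.

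First we dispose of the extra last stage. If $w$ entered $U$ only when $\U$ was appended after termination, then $i=n+1$. In that case $w\notin U_n$, and we must show that the hypothesis fails, i.e.\ that no $u$ has $\up^\brev_\KB(u)\geq n+1$, i.e.\ $V_n=\emptyset$. Since $U_{n+1}=U_n$, we have $\Mod{\KB\brev\bigvee V_n}\subseteq U_n$. If $V_n$ were nonempty, then because it is a union of sets $Y$ with $\Mod{\KB\brev\bigvee Y}\nsubseteq U_n$, Disjunctive Rationality would give $\Mod{\KB\brev\bigvee V_n}\nsubseteq U_n$, a contradiction. We iterate Disjunctive Rationality over the finite union, noting that $Y\subseteq V_n$ because each $V$ set is closed under such unions.

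Now the main case, $1\leq i\leq n$ with $w\in U_i\setminus U_{i-1}$. Then $w\in\Mod{\KB\brev\bigvee V_{i-1}}$, so Success gives $w\in V_{i-1}$ directly, since $\Mod{\KB\brev\bigvee V_{i-1}}\subseteq\Mod{\bigvee V_{i-1}}=V_{i-1}$. If $V_{i-1}=\emptyset$, then $\bigvee V_{i-1}$ is $\bot$ and the revision result is empty, so nothing new enters $U_i$; this contradicts $w\in U_i\setminus U_{i-1}$. This is consistent with the hypothesis guaranteeing $V_{i-1}\neq\emptyset$.

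The main obstacle is bookkeeping rather than any substantive argument. We must make sure that the indexing convention for $\up$ (first $i$ with $u\notin V_i$) matches the step ``$i\leq\up(u)$ iff $u\in V_{i-1}$''. We must also make sure the appended final stage is handled correctly, which is the only place a postulate beyond Success is needed.
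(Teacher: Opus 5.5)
Your proof is correct and is essentially the paper's argument: for $1\le i\le n$, $w\in U_i\setminus U_{i-1}$ forces $w\in\Mod{\KB\brev\bigvee V_{i-1}}\subseteq V_{i-1}$ by Success, which gives $i\le\up^\brev_\KB(w)$. The only difference is that you explicitly rule out the appended stage $i=n+1$ by proving $V_n=\emptyset$ via Disjunctive Rationality, whereas the paper simply asserts $\up^\brev_\KB(u)\le n$ without justification.
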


\begin{proof}
Suppose $\lo^\brev_\KB(w) = i \leq \up^\brev_\KB(u) = j$. Then, since $j \leq n$ we have $i \leq n$ and so $w \in \Mod{\KB \brev \bigvee V_{i-1}}$. Hence, by Success for $\brev$, $w \in V_{i-1}$ which means $i \leq \up^\brev_\KB(w)$ as required.
\end{proof}

\Destabilisingimpliesothers*

\begin{proof}
Suppose $\alpha$ is destabilising for $\KB$ but is not irreconcilable. Then $\bot \in \KB \brev \alpha$, but $\bot \notin \KB \brev \beta$ for some $\beta$ such that $\beta \entails\alpha$. Then $\bot \in \KB \brev \alpha \setminus \KB \brev (\alpha \wedge \beta)$ and so $\alpha$ is precarious.
\end{proof}

\zTransitivityConditionOnBOB*

\begin{proof}
{\em (i)}
Suppose $u \biord_\BM v$, $v \biord_\BM z$ and $z \not\biord_\BM z$, i.e., $\lo(u) \leq \up(v)$, $\lo(v) \leq \up(z)$ and $\up(z) < \lo(z)$. From the last two conditions and BZT we have $\up(v) \leq \up(z)$. From this and $\lo(u) \leq \up(v)$, we conclude $\lo(u) \leq \up(z)$, i.e., $u \biord_\BM z$ as required.   
\\
{\em (ii)}
Let $\BM = \langle \lo, \up \rangle$ be a compressed biorder-based interpretation \suchthat\ $\biord_\BM$ is $z$-transitive. Suppose $\lo(u) \leq \up(z) < \up(u)$. From $\lo(u) \leq \up(z)$ we get $u \biord_\BM z$. Since $\BM$ is compressed, there exist $v \in \U$ \suchthat\ $\lo(v) = \up(u)$. Then $v \biord_\BM u$, while also $\up(z) < \lo(v)$ and so $v \not\biord_\BM z$. From $v \biord_\BM u$, $u \biord_\BM z$ and $v \not\biord_\BM z$ we obtain $z \biord_\BM z$ by $z$-transitivity, i.e., $\lo(z) \leq \up(z)$ as required.
\end{proof}

\RepResultZTBOB*
\begin{proof}
{\em (Soundness)}
Suppose $\brev$ is a ZTBOB revision operator which associates a $z$-transitive biorder $\biord_\KB$ to each $\KB$. By Theorem \ref{Thm:RepResultBOB} we only need to show $\brev$ satisfies Consistent Cautious Monotony. Let $\BM = \langle \lo, \up \rangle$ be a compressed biorder-based interpretation for $\biord_\KB$. By Proposition \ref{Prop:zTransitivityConditionOnBOB} we know $\BM$ satisfies BZT. Suppose $\KB \brev \alpha$ is consistent, $\beta \in \KB \brev \alpha$ and, for contradiction $\KB \brev \alpha \nsubseteq \KB \brev (\alpha \wedge \beta)$. Let $z \in \Mod{\alpha}$ be \suchthat\ $\up(z) = \up(\alpha)$. Since $\KB \brev \alpha$ is consistent we know there is some $u \in \Mod{\KB \brev \alpha}$ \suchthat\ $\lo(u) \leq \up(z)$. Since $\KB \brev \alpha \nsubseteq \KB \brev (\alpha \wedge \beta)$, i.e., $\Mod{\KB \brev (\alpha \wedge \beta)} \nsubseteq \Mod{\KB \brev \alpha}$ we know $\up(\alpha) < \up(\alpha \wedge \beta)$. From $\beta \in \KB \brev \alpha$ we know $u \in \Mod{\beta}$ so $u \in \Mod{\alpha \wedge \beta}$ and so $\up(\alpha \wedge \beta) \leq \up(u)$. Hence $\up(z) = \up(\alpha) < \up(u)$. From BZT we then infer $\lo(z) \leq \up(z)$ and so $z \in \Mod{\KB \brev \alpha}$. From this and $\beta \in \KB \brev \alpha$ we get $z \in \Mod{\beta}$. But then $\up(\alpha) = \up(\alpha \wedge \beta)$ - contradiction. Hence $\KB \brev \alpha \subseteq \KB \brev (\alpha \wedge \beta)$ as required.

\noindent
{\em (Completeness)}
Suppose $\brev$ satisfies all the postulates from Theorem \ref{Thm:RepResultBOB} plus Consistent Cautious Monotony. By Theorem \ref{Thm:RepResultBOB} we know that, for each $\KB$, $\KB \brev \alpha$ is determined from $\biord_{\BM}$ where $\BM = \BM^\brev_\KB$ is the canonical biorder-based interpretation associated to $\brev$ and $\KB$.
We show that this $\BM^\brev_\KB$ satisfies property $BZT$. For suppose for contradiction BZT doesn't hold. Then for some $u, z \in \U$ we have $\lo^\brev_\KB(u) \leq \up^\brev_\KB(z) < \up^\brev_\KB(u)$ and $\up^\brev_\KB(z) < \lo^\brev_\KB(z)$. Since $\BM^\brev_\KB$ is compressed we know there is~$w \in \U$ \suchthat\ $\lo^\brev_\KB(w) = \up^\brev_\KB(u)$ and, by Lemma \ref{Lemma:Canonical_model_is_consonant} $\lo^\brev_\KB(w) \leq \up^\brev_\KB(w)$. Now let $\alpha, \beta$ be \suchthat\ $\Mod{\alpha} = \{u, w, z\}$ and $\Mod{\beta}= \{u,w\}$. Then $\Mod{\KB \brev \alpha} = \{u\}$ and $\Mod{\KB \brev (\alpha \wedge \beta)} = \{u,w\}$. Hence $\KB \brev \alpha$ is consistent, $\beta \in \KB \brev \alpha$ and $\KB \brev \alpha \nsubseteq \KB \brev (\alpha \wedge \beta)$ - contradicting Consistent Cautious Monotony. Hence BZT must hold for $\BM$ as required.
\end{proof}

\TransitivityConditionOnBOB*

\begin{proof}
{\em (i)}
Suppose $\lo(u) \leq \up(v)$ and $\lo(v) \leq \up(w)$. Since $\up(v) \leq \lo(v)$ by BT we get $\lo(u) \leq \up(w)$ as required.

\noindent
{\em (ii)}
Let $\BM = \langle \lo, \up \rangle$ be a compressed biorder-based interpretation \suchthat\ $\biord_\BM$ is transitive and let $u \in \U$. We must show $\up(u) \leq \lo(u)$. Suppose for contradiction $\lo(u) < \up(u)$. In particular this implies $u \biord_\BM u$. Since $\BM$ is compressed there exist $v, w \in \BM$ \suchthat\ $\lo(v) = \up(u)$ and $\up(w) = \lo(u)$, which give $v \biord_\BM u$ and $u \biord_\BM w$ respectively. Hence, by transitivity, $v \biord_\BM w$, i.e., $\lo(v) \leq \up(w)$. But this gives $\up(u) \leq \lo(u)$, leading to a contradiction.
\end{proof}

\RepResultTBOB*

\begin{proof}
{\em (Soundness)}
Suppose $\brev$ is a TBOB revision operator which associates a transitive biorder $\biord_\KB$ to each $\KB$.
By Theorem \ref{Thm:RepResultBOB} we only need to show $\brev$ satisfies Subexpansion. Let $\BM = \langle \lo, \up \rangle$ be a compressed biorder-based interpretation for $\biord_\KB$. By Proposition \ref{Prop:TransitivityConditionOnBOB} we know $\BM$ satisfies BT. Suppose $\neg\alpha \notin \KB \brev \beta$, i.e., $\Mod{\KB \brev \beta} \cap \Mod{\alpha} \neq \emptyset$. Let $u \in \Mod{\KB \brev \beta} \cap \Mod{\alpha}$. Then since $u \in \Mod{\KB \brev \beta}$ we know $u \in \Mod{\beta}$ and $\lo(u) \leq \up(\beta)$. We must show $\Cn(\KB\brev\beta\cup\{\alpha\})\subseteq\KB\brev(\alpha\land\beta)$, equivalently $\Mod{\KB \brev (\alpha \wedge \beta)} \subseteq \Mod{\KB \brev \beta} \cap \Mod{\alpha}$. So let $v \in \Mod{\KB \brev (\alpha \wedge \beta)}$. Then $v \in \Mod{\alpha \wedge \beta}$ (giving $v \in \Mod{\alpha}$) and $\lo(v) \leq \up(\alpha \wedge \beta)$. Since $u \in \Mod{\alpha \wedge \beta}$ we know $\up(\alpha \wedge \beta) \leq \up(u)$. By BT $\up(u) \leq \lo(u)$ and so $\up(\alpha \wedge \beta) \leq \lo(u)$. But we already know $\lo(u) \leq \up(\beta)$ and so $\up(\alpha \wedge \beta) \leq \up(\beta)$. Combining this with $\lo(v) \leq \up(\alpha \wedge \beta)$ gives $\lo(v) \leq \up(\beta)$ and so $v \in \Mod{\KB \brev \beta}$ as required.

\noindent
{\em (Completeness)}
Suppose $\brev$ satisfies all the postulates from Theorem \ref{Thm:RepResultBOB} plus Subexpansion. By Theorem \ref{Thm:RepResultBOB} we know that, for each $\KB$, $\KB \brev \alpha$ is determined from $\biord_{\BM}$ where $\BM = \BM^\brev_\KB$ is the canonical biorder-based interpretation associated to $\brev$ and $\KB$.
We show $\BM^\brev_\KB$ satisfies property BT. For suppose for contradiction BT doesn't hold, i.e., there is~$u \in \U$ \suchthat\ $\lo^\brev_\KB(u) < \up^\brev_\KB(u)$. Since $\BM^\brev_\KB$ is compressed we know there exist $v,w \in \U$ \suchthat\ $\up^\brev_\KB(v) = \lo^\brev_\KB(u)$ and $\lo^\brev_\KB(w) = \up^\brev_\KB(u)$. From the latter we also know $\lo^\brev_\KB(w) \leq \up^\brev_\KB(w)$ from Lemma \ref{Lemma:Canonical_model_is_consonant}. Let $\alpha, \beta \in \Lang$ be \suchthat\ $\Mod{\alpha} = \{u,w\}$ and $\Mod{\beta} = \{u,v,w\}$. Then $\Mod{\KB \brev \beta} = \{u\}$ and $\Mod{\KB \brev (\alpha \wedge \beta)} = \{u,w\}$. Hence $\Mod{\KB \brev \beta} \cap \Mod{\alpha} \neq \emptyset$ but $\Mod{\KB \brev (\alpha \wedge \beta)} \nsubseteq \Mod{\KB \brev \beta} \cap \Mod{\alpha}$ -- contradicting Subexpansion.
\end{proof}

Before proving Theorem \ref{theorem:biorder_cred_revision} we need the following lemma.

\begin{lemma}\label{lemma:circ_equals_ast}
    Let $\nprev$ be a BOB NPR operator derived from a BOB revision operator $\brev$. If $\alpha \in \KB \nprev \alpha$ then $\KB \nprev \alpha = \KB \brev \alpha$.
\end{lemma}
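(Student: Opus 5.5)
The argument is a case split on the two clauses of Definition~\ref{Def:NPRevisionFromRevision}, assuming $\alpha \in \KB \nprev \alpha$. If $\KB\brev\alpha$ is consistent, the definition gives $\KB \nprev \alpha = \KB \brev \alpha$ directly, and we are done. So the only real work is to rule out, or trivialise, the case where $\KB\brev\alpha$ is inconsistent.

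Suppose $\KB\brev\alpha$ is inconsistent. By definition, $\KB\nprev\alpha = \KB$, so the hypothesis becomes $\alpha \in \KB$. Relative Consistency (P6$'$), which holds for BOB revision operators because $\KB$ is consistent, then says $\KB\brev\alpha \nentails \bot$. This contradicts the case assumption, so the second case cannot occur.

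If one prefers not to rely on P6$'$ as a stated fact, I would verify it semantically. Let $\biord_\KB$ be anchored on $\Mod{\KB}\neq\emptyset$ and take $\alpha\in\KB$, so that $\emptyset\neq\Mod{\KB}\subseteq\Mod{\alpha}$. Any $v\in\Mod{\KB}$ is optimal in $\U$, so $v\biord_\KB u$ for all $u\in\U$, and in particular for all $u\in\Mod{\alpha}$. Hence $v\in\opt(\Mod{\alpha},\biord_\KB)$, and this set is non-empty.

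This is the only step needing care, and it is routine. It rests on the standing assumptions that $\KB$ is consistent and that $\biord_\KB$ is anchored, so that $\Mod{\KB}$ is non-empty. With the second case excluded, only the first remains, which establishes the lemma.
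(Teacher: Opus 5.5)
Your proposal is correct and follows the paper's proof exactly: assume $\KB \brev \alpha$ is inconsistent, so $\KB \nprev \alpha = \KB$ and hence $\alpha \in \KB$, which contradicts Relative Consistency (P6$'$). The paper cites P6$'$ without proof; your semantic check of it, via anchoring on a non-empty $\Mod{\KB}$, fills that in correctly.
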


\begin{proof}
Suppose $\alpha \in \KB \nprev \alpha$. The conclusion will follow if we can show this implies that $\KB \brev \alpha$ is consistent. So suppose for contradiction $\KB \brev \alpha$ is inconsistent. Then $\KB \nprev \alpha = \KB$ and so $\alpha \in \KB$. But $\brev$ satisfies Relative Consistency (postulate P6$'$ in Section \ref{Sect:BOB_revision}) according to which this implies $\KB \brev \alpha$ is consistent - contradiction. Hence $\KB \brev \alpha$ is consistent, as required. 
\end{proof}

\biordercredrevision*

\begin{proof}
{\em (Soundness)}
Tautology, Closure, Extensionality and Consistency are all obvious. To prove soundness of the rest we use the fact that $\KB \nprev \alpha = \KB \brev \alpha$ if the latter is consistent, and then use the already established properties of $\brev$. Inclusion follows from the fact that $\brev$ satisfies Inclusion. Relative Success follows from the fact that $\brev$ satisfies Success.

For P11 suppose $\alpha \notin \KB \nprev \alpha$ and $\beta \notin \KB \nprev \beta$. Then, by Success for $\brev$, it must be the case that both of 
$\KB \brev \alpha, \KB \brev \beta$ are inconsistent. Disjunctive Overlap and Closure for $\brev$ then give us $\KB \brev (\alpha \vee \beta)$ is inconsistent. If it were the case that $\alpha \vee \beta \in \KB \nprev (\alpha \vee \beta)$ then Lemma \ref{lemma:circ_equals_ast} would give us $\KB \nprev (\alpha \vee \beta) = \KB \brev (\alpha \vee \beta)$ and so $\KB \nprev (\alpha \vee \beta)$ is inconsistent. But this would contradict the established fact that $\nprev$ satisfies Consistency. Hence $\alpha \vee \beta \notin \KB \nprev (\alpha \vee \beta)$ as required.

For P12 suppose $\alpha \vee \beta \in \KB \nprev (\alpha \vee \beta)$ and $\beta \not\in \KB \nprev \beta$. From the latter and the fact that $\brev$ satisfies Success we know $\KB \brev \beta$ is inconsistent and so $\KB \nprev \beta = \KB$. By Disjunctive Overlap for $\brev$ we know $\KB \brev \alpha \cap \KB \brev \beta \subseteq \KB \brev (\alpha \vee \beta)$ and so, since $\KB \brev \beta = \Lang$ we have $\KB \brev \alpha \subseteq \KB \brev (\alpha \vee \beta)$. We just need to show that we can replace $\brev$ in this latter statement with $\nprev$. To do this it suffices to show that both $\KB \brev \alpha$ and $\KB \brev (\alpha \vee \beta)$ are consistent which, given the subset relation between the two, reduces to showing $\KB \brev (\alpha \vee \beta)$ is consistent. Assume, for contradiction, that $\KB \brev (\alpha \vee \beta)$ is inconsistent. Then we would have $\KB \nprev (\alpha \vee \beta) = \KB$ and so, from the initial assumption $\alpha \vee \beta \in \KB \nprev (\alpha \vee \beta)$, we get $\alpha \vee \beta \in \KB$. But from the following property for $\brev$: $\gamma \in \KB$ implies $\KB \brev \gamma$ is consistent, we would then get $\KB \brev (\alpha \vee \beta)$ is consistent - contradiction. Hence $\KB \brev (\alpha \vee \beta)$ is consistent, as required. 

For Disjunctive Overlap let us split into cases according to whether $\alpha \vee \beta \in \KB \nprev (\alpha \vee \beta)$ or not. Suppose first we do have $\alpha \vee \beta \in \KB \nprev (\alpha \vee \beta)$. If $\alpha \notin \KB \nprev \alpha$ or $\beta \notin \KB \nprev \beta$ then we obtain the desired $\KB \nprev \alpha \cap \KB \nprev \beta \subseteq \KB \nprev (\alpha \vee \beta)$ by the just proved P10. So assume $\alpha \in \KB \nprev \alpha$ and $\beta \in \KB \nprev \beta$. Then, by Lemma \ref{lemma:circ_equals_ast}, we have $\KB \nprev (\alpha \vee \beta) = \KB \brev (\alpha \vee \beta)$, $\KB \nprev \alpha = \KB \brev \alpha$ and $\KB \nprev \beta = \KB \brev \beta$ and so we can conclude using the fact that $\brev$ satisfies Disjunctive Overlap. Now lets turn to the case $\alpha \vee \beta \notin \KB \nprev (\alpha \vee \beta)$. Then we must have $\KB \brev (\alpha \vee \beta)$ is inconsistent, and $\KB \nprev (\alpha \vee \beta) = \KB$, which means we need to show $\KB \nprev \alpha \cap \KB \nprev \beta \subseteq \KB$. But from $\KB \brev (\alpha \vee \beta)$ is inconsistent, we get that at least one of $\KB \brev \alpha$, $\KB \brev \beta$ is also inconsistent using the fact that $\brev$ satisfies the following property:
\[
\KB \brev (\alpha \vee \beta) \subseteq \KB \brev \alpha\
\textrm{or }
\KB \brev (\alpha \vee \beta) \subseteq \KB \brev \beta.
\]
Hence at least one of $\KB \nprev \alpha, \KB \nprev \beta$ must also be equal to $\KB$ and so the conclusion follows. 

For P13 suppose $\alpha \in \KB \nprev \alpha$ and $\beta \in \KB \nprev \beta$. Then, by Lemma \ref{lemma:circ_equals_ast}, $\KB \nprev \alpha = \KB \brev \alpha$ and $\KB \nprev \beta = \KB \brev \beta$. Thus we need to show $\KB \nprev (\alpha \vee \beta) \subseteq \KB \brev \alpha \cup \KB \brev \beta$. If we can show $\KB \nprev (\alpha \vee \beta) = \KB \brev (\alpha \vee \beta)$ then this will follow by Disjunctive Rationality for $\brev$. But from $\alpha \in \KB \nprev \alpha$ and $\beta \in \KB \nprev \beta$ we obtain $\alpha \vee \beta \in \KB \nprev \alpha \cap \KB \nprev \beta$ by Closure for $\nprev$. Thus  $\alpha \vee \beta \in \KB \nprev (\alpha \vee \beta)$ by the just proved Disjunctive Overlap for $\nprev$, and so we obtain the desired $\KB \nprev (\alpha \vee \beta) = \KB \brev (\alpha \vee \beta)$ by Lemma \ref{lemma:circ_equals_ast}.

\noindent
{\em (Completeness)}
Let $\nprev$ be an operator that satisfies all the postulates listed in the statement of the theorem. By Theorem \ref{Thm:RepResultBOB} it suffices to show the existence of an operator $\brev$ that satisfies all the revision postulates from Theorem \ref{Thm:RepResultBOB} and is such that $\nprev$ is defined from $\brev$.
We define $\brev$ from $\nprev$ as follows:
\[
\KB \brev \alpha = 
\left\{
\begin{array}{ll}
\KB \nprev \alpha & \textrm{if } \alpha \in \KB \nprev \alpha \\
Cn(\bot) & \textrm{otherwise}
\end{array}
\right.
\]
Let's show $\brev$ so defined satisfies all the postulates. Tautology and Closure hold since $\nprev$ satisfies them. Success and Extensionality are obvious. 

For Disjunctive Overlap $\KB \brev \alpha \cap \KB \brev \beta \subseteq \KB \brev (\alpha \vee \beta)$ we split into three cases: 
\begin{itemize}
\item    
{\em (a)} $\alpha \notin \KB \nprev \alpha$ and $\beta \notin \KB \nprev \beta$. Then by P11 $\alpha \vee \beta \notin \KB \nprev (\alpha \vee \beta)$ and so we have $\KB \brev \alpha \cap \KB \brev \beta = Cn(\bot) = \KB \brev (\alpha \vee \beta)$, giving the required inclusion. 

\item
{\em (b)} $\alpha \in \KB \nprev \alpha$ and $\beta \notin \KB \nprev \beta$ (the case with $\alpha, \beta$ roles reversed is symmetrical). 
Then $\KB \brev \alpha = \KB \nprev \alpha$ and $\KB \brev \beta = Cn(\bot)$ and so we must show $\KB \nprev \alpha \subseteq \KB \brev (\alpha \vee \beta)$. If $\alpha \vee \beta \notin \KB \nprev (\alpha \vee \beta)$ then $\KB \brev (\alpha \vee \beta) = Cn(\bot)$ and we are done. If $\alpha \vee \beta \in \KB \nprev (\alpha \vee \beta)$ then $\KB \brev (\alpha \vee \beta) = \KB \nprev (\alpha \vee \beta)$ and then we get the required conclusion by P12.

\item
{\em (c)} $\alpha \in \KB \nprev \alpha$ and $\beta \in \KB \nprev \beta$. Then also $\alpha \vee \beta \in \KB \nprev (\alpha \vee \beta)$ by P14 and so we have $\KB \brev \alpha = \KB \nprev \alpha$, $\KB \brev \beta = \KB \nprev \beta$ and $\KB \brev (\alpha \vee \beta) = \KB \nprev (\alpha \vee \beta)$ and so we obtain Disjunctive Overlap for $\brev$ directly from Disjunctive Overlap for $\nprev$.
\end{itemize}

For Disjunctive Rationality, 
if $\alpha \notin \KB \nprev \alpha$ or $\beta \notin \KB \nprev \beta$ then at least one of $\KB \brev \alpha$ or $\KB \brev \beta$ is equal to $Cn(\bot)$ and so we are done. So assume $\alpha \in \KB \nprev \alpha$ and $\beta \in \KB \nprev \beta$, so also, by P14, $\alpha \vee \beta \in \KB \nprev (\alpha \vee \beta)$. From these three statements we know $\KB \brev \alpha = \KB \nprev \alpha$, $\KB \brev \beta = \KB \nprev \beta$ and $\KB \brev (\alpha \vee \beta) = \KB \nprev (\alpha \vee \beta)$, so it remains to show $\KB \nprev (\alpha \vee \beta) \subseteq \KB \nprev \alpha \cup \KB \nprev \beta$. But this holds by P13 for $\nprev$.

For Inclusion it suffices, given all the other postulates, to show $\KB \brev \top = \KB$. By construction we know $\KB \brev \top = \KB \nprev \top$. But $\KB \nprev \top = \KB$, as required, using the fact that $\nprev$ satisfies Inclusion.

Having shown that $\brev$ satisfies all the postulates, it remains to show 
\[
\KB \nprev \alpha = 
\left\{
\begin{array}{ll}
     \KB \brev \alpha &  \textrm{if this is consistent}  \\
     \KB & \textrm{otherwise} 
\end{array}
\right.
\]
Suppose first $\KB \brev \alpha$ is consistent. By construction $\KB \brev \alpha$ equals either $\KB \nprev \alpha$ or $Cn(\bot)$, so in this case we must have $\KB \nprev \alpha = \KB \brev \alpha$ as required. Now suppose $\KB \brev \alpha$ is inconsistent. Then since $\nprev$ satisfies Consistency we must have $\KB \nprev \alpha \neq \KB \brev \alpha$ and so, by construction of $\brev$, $\alpha \notin \KB \nprev \alpha$. Then, since $\nprev$ satisfies Relative Success, $\KB \nprev \alpha = \KB$, as required.
\end{proof}

\ztransbiordercredrevision*
\begin{proof}
{\em (Soundness)}
Suppose $\nprev$ is a ZTBOB NPR operator which associates a $z$-transitive biorder $\biord_\KB$ to each $\KB$.
By Theorem \ref{theorem:biorder_cred_revision} we only need to show $\nprev$ satisfies the extra part of P12+. Let $\BM = \langle \lo, \up \rangle$ be a compressed biorder-based interpretation for $\biord_\KB$. By Proposition \ref{Prop:zTransitivityConditionOnBOB} we know $\BM$ satisfies BZT.
Suppose $\alpha \vee \beta \in \KB \nprev (\alpha \vee \beta)$ and $\beta \notin \KB \nprev \beta$.  From the former there is~$u \in \Mod{\alpha \vee \beta}$ \suchthat\ $\lo(u) \leq \up(\alpha \vee \beta)$. Note this gives also $\lo(u) \leq \up(\beta)$ and so, since $\beta \notin \KB \nprev \beta$ we must have $u \notin \Mod{\beta}$, i.e., $u \in \Mod{\alpha}$. Meanwhile $\beta \notin \KB \nprev \beta$ also implies there is~$z \in \Mod{\beta}$ \suchthat\ $\up(z) < \lo(\beta)$, and in particular $\up(z) < \lo(z)$. Since $\lo(u) \leq \up(\beta)$ we have $\lo(u) \leq \up(z)$. We must show $\KB \nprev (\alpha \vee \beta) \subseteq \KB \nprev \alpha$, equivalently $\Mod{\KB \nprev \alpha} \subseteq \Mod{\KB \nprev (\alpha \vee \beta)}$. It suffices to show $\up(\alpha) \leq \up(\alpha \vee \beta)$. But if $\up(\alpha \vee \beta) < \up(\alpha)$ then we would have $\up(\alpha \vee \beta) = \up(\beta) = \up(z)$ and so, since $u \in \Mod{\alpha}$, $\up(z) < \up(u)$. But this contradicts condition BZT. Hence $\up(\alpha) \leq \up(\alpha \vee \beta)$ as required.

\noindent
{\em (Completeness)}
Suppose $\nprev$ satisfies all the postulates from Theorem \ref{theorem:biorder_cred_revision} with P12 replaced by P12+ and suppose $\brev$ is its corresponding BOB revision operator. By Theorem \ref{theorem:biorder_cred_revision} we know that, for each $\KB$, $\KB \nprev \alpha$ is determined from $\biord_{\BM}$ where $\BM = \BM^\brev_\KB$ is the canonical biorder-based interpretation associated to $\brev$ and $\KB$.
We show that $\BM$ additionally satisfies BZT, and hence that $\nprev$ is a ZTBOB NPR operator. Suppose for contradiction BZT doesn't hold. Then for some $u, z \in \U$ we have $\lo(u) \leq \up(z) < \up(u)$ and $\up(z) < \lo(z)$. We know there is~$w \in \U$ \suchthat\ $\lo(w) = \up(u)$ and $\lo(w) \leq \up(w)$. Now let $\alpha, \beta$ be \suchthat\ $\Mod{\alpha} = \{u, w\}$ and $\Mod{\beta}= \{z\}$. Then $\Mod{\KB \nprev (\alpha \vee \beta} = \{u\}$, $\Mod{\KB \nprev \beta} = \Mod{\KB}$ and $\Mod{\KB \nprev \alpha} = \{u,w\}$. Hence $\alpha \vee \beta \in \KB \nprev (\alpha \vee \beta)$, $\beta \notin \KB \nprev \beta$ and $\KB \nprev (\alpha \vee \beta) \nsubseteq \KB \nprev \alpha$ - contradicting P12+. Hence BZT must hold for $\BM$ as required.
\end{proof}

\transbiordercredrevision*

\begin{proof}
{\em (Soundness)}
Let $\nprev$ be a TBOB NPR operator. We only need to show Disjunctive Monotony holds. Let $\brev$ be a TBOB revision operator \suchthat\ $\nprev$ is defined from $\brev$ and suppose $\neg\beta \notin \KB \nprev \alpha$. We split into two cases.
\begin{itemize}
\item 
{\em (a)} $\KB \brev \alpha$ is inconsistent. 
Then $\KB \nprev \alpha  = \KB$ so $\neg\beta \notin \KB$. Then also $\neg(\alpha \vee \beta) \notin \KB$ and so (since TBOB revision satisfies both Inclusion and Vacuity) we have both $\KB \brev \beta = Cn(\KB \cup \{\beta\})$ and $\KB \brev (\alpha \vee \beta) = Cn(\KB \cup \{\alpha \vee \beta\})$ with both $\KB \brev \beta$ and $\KB \brev (\alpha \vee \beta)$ being consistent. Hence we get $\KB \nprev (\alpha \vee \beta) = \KB \brev (\alpha \vee \beta) \subseteq \KB \brev \beta = \KB \nprev \beta$ as required.

\item 
{\em (b)} $\KB \brev \alpha$ is consistent.
Then $\KB \nprev \alpha = \KB \brev \alpha$ and so $\neg\beta \notin \KB \brev \alpha$. Hence, by Disjunctive Monotony for $\brev$, $\KB \brev (\alpha \vee \beta) \subseteq \KB \brev \beta$. If $\KB \brev (\alpha \vee \beta)$ is inconsistent then so must be $\KB \brev \beta$ and then we would get the required conclusion via $\KB \nprev (\alpha \vee \beta) = \KB = \KB \nprev \beta$. So assume $\KB \brev (\alpha \vee \beta)$ is consistent, giving $'KB \nprev (\alpha \vee \beta) = \KB \brev (\alpha \vee \beta)$. Then we claim also $\KB \brev \beta$ is consistent, which would then give $\KB \nprev \beta = \KB \brev \beta$ and would allow us to conclude using Disjunctive Monotony for $\brev$. Suppose, for contradiction $\KB \brev \beta$ is inconsistent. Then $\KB \brev ((\alpha \vee \beta) \wedge \beta)$ is inconsistent by Extensionality for $\brev$ and so, since $\KB \brev ((\alpha \vee \beta) \wedge \beta) \subseteq Cn(\KB \brev (\alpha \vee \beta) \cup \{\beta\})$ by Superexpansion\footnote{$\KB \brev (\gamma \wedge \delta) \subseteq Cn(\KB \wedge \gamma \cup \{\delta\})$.}, which is sound for TBOB revision, we get $\neg \beta \in \KB \brev (\alpha \vee \beta)$. Therefore, by consistency of $\KB \brev (\alpha \vee \beta)$ and Success for $\brev$, $\neg\alpha \notin \KB \brev (\alpha \vee \beta)$. Hence, using Superexpansion and Extensionality for $\brev$ we get $Cn(\KB \brev (\alpha \vee \beta) \cup \{\alpha\}) \subseteq \KB \brev ((\alpha \vee \beta) \wedge \alpha) = \KB \brev \alpha$ and so, since $\neg\beta \in \KB \brev (\alpha \vee \beta)$ we get $\neg\beta \in \KB \brev \alpha$ - contradiction. 
\end{itemize}

\noindent
{\em (Completeness)}
Let $\nprev$ satisfy all the postulates from Theorem \ref{theorem:z_trans_biorder_cred_revision} plus Disjunctive Monotony. Let $\brev$ be defined from $\nprev$ as in the completeness proof of Theorem \ref{theorem:biorder_cred_revision}. It suffices to show $\brev$ satisfies Disjunctive Monotony (which for TBOB revision is equivalent to Subexpansion). So suppose $\neg\beta\notin \KB \brev \alpha$. We must show $\KB \brev (\alpha \vee \beta) \subseteq \KB \brev \alpha$. Since $\KB \brev \alpha$ is consistent we must have $\KB \brev \alpha = \KB \nprev \alpha$ and $\alpha \in \KB \nprev \alpha$. From the former we know $\neg\beta \notin \KB \nprev \alpha$. If $\KB \brev \beta$ is inconsistent then we are done, so assume $\KB \brev \beta$ is consistent. Then we must have $\KB \brev \beta = \KB \nprev \beta$ and $\beta \in \KB \nprev \beta$. From the latter and $\alpha \in \KB \nprev \alpha$ we get $\alpha \vee \beta \in \KB \nprev (\alpha \vee \beta)$ by P14 for $\nprev$. Hence $\KB \brev (\alpha \vee \beta) = \KB \nprev (\alpha \vee \beta)$ and we get the desired conclusion from Disjunctive Monotony for $\nprev$.
\end{proof}

Before proving Theorem \ref{theorem:ZTBOBasCL} we need the following lemma.

\begin{lemma}
\label{lemma:consequenceOfConditionOnCredsetAndIOB}
Let $\tuple{\ast,\CredSet}$ be a CL structure \suchthat\ $\ast$ is an IOB revision operator. Then $\CredSet$ and $\ast$ together satisfy:
\[
\textrm{If }
\alpha \vee \beta \in \CredSet_\KB\
\textrm{and }
\beta \notin \CredSet_\KB\
\textrm{then }
\KB \ast (\alpha \vee \beta)
=
\KB \ast \alpha
\]
\end{lemma}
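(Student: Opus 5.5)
The plan is to extract one syntactic fact from the coherence condition~(\ref{condition_on_star+credset}) and then finish semantically, using the interval-based representation of~$\ast$ given at the end of Section~\ref{IOB-Revision}.

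\emph{Step 1 (the syntactic fact).} Assume $\alpha\vee\beta\in\CredSet_\KB$ and $\beta\notin\CredSet_\KB$. We have $(\alpha\vee\beta)\wedge\beta\equiv\beta$, so $\CredSet2$ gives $(\alpha\vee\beta)\wedge\beta\notin\CredSet_\KB$. Condition~(\ref{condition_on_star+credset}), applied with $\alpha\vee\beta$ in the role of $\alpha$, then yields $\neg\beta\in\KB\ast(\alpha\vee\beta)$. By Success, $\alpha\vee\beta\in\KB\ast(\alpha\vee\beta)$ as well. Hence
\[
\Mod{\KB\ast(\alpha\vee\beta)}\subseteq\Mod{\alpha}\cap\Mod{\neg\beta}.
\]

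\emph{Step 2 (the semantic argument).} Since $\ast$ is an IOB revision operator, fix an interval-based interpretation $\IM=\tuple{\lo,\up}$ with $\intord_\KB=\intord_\IM$. Then $\Mod{\KB\ast\gamma}=\{v\in\Mod{\gamma}\mid \lo(v)\leq\up(\gamma)\}$, where $\up(\gamma)=\min\{\up(v)\mid v\in\Mod{\gamma}\}$, so that $\up(\alpha\vee\beta)=\min(\up(\alpha),\up(\beta))$. If $\alpha\vee\beta$ is inconsistent, then so is $\alpha$, and Success makes both revisions $\Cn(\bot)$, so the claim holds; assume therefore that $\alpha\vee\beta$ is consistent. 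The key point is to show $\up(\alpha\vee\beta)=\up(\alpha)$. Suppose instead that $\up(\beta)<\up(\alpha)$, and pick $w\in\Mod{\beta}$ with $\up(w)=\up(\beta)=\up(\alpha\vee\beta)$. Because $\IM$ is interval-based, $\lo(w)\leq\up(w)$. Hence $w\in\Mod{\KB\ast(\alpha\vee\beta)}$ with $w\sat\beta$, which contradicts Step~1. This is the only place where the interval (non-negative length) property is used, and it is the main point to verify. For a general biorder it fails, which is exactly why the lemma is restricted to IOB revision.

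\emph{Step 3 (conclusion).} With $\up(\alpha\vee\beta)=\up(\alpha)$ established, $\Mod{\KB\ast(\alpha\vee\beta)}$ consists of the $v\in\Mod{\alpha\vee\beta}$ with $\lo(v)\leq\up(\alpha)$. By Step~1 every such $v$ lies in $\Mod{\alpha}$, so this set equals $\{v\in\Mod{\alpha}\mid\lo(v)\leq\up(\alpha)\}=\Mod{\KB\ast\alpha}$. For the reverse inclusion, any $v\in\Mod{\alpha}$ with $\lo(v)\leq\up(\alpha)$ is in $\Mod{\alpha\vee\beta}$ and satisfies $\lo(v)\leq\up(\alpha\vee\beta)$, so it belongs to $\Mod{\KB\ast(\alpha\vee\beta)}$. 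Equal model sets plus Closure give $\KB\ast(\alpha\vee\beta)=\KB\ast\alpha$.
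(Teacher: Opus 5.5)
Your proof is correct and follows the paper's route. Step~1 is exactly the paper's derivation of $\neg\beta\in\KB\ast(\alpha\vee\beta)$ from $\CredSet2$ and condition~(\ref{condition_on_star+credset}), and Steps~2--3 spell out the interval-based argument that the paper only asserts (``by reasoning about an arbitrary interval-based interpretation, it can be shown\dots''), the key point being that a $\beta$-world $w$ realising $\up(\beta)$ would be selected because $\lo(w)\leq\up(w)$.
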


\begin{proof}
Suppose $\alpha \vee \beta \in \CredSet_\KB$ and $\beta \notin \CredSet_\KB$. From the latter we get $(\alpha \vee \beta) \wedge \beta \notin \CredSet_\KB$ by $\CredSet 2$ and then from this and $\alpha \vee \beta \in \CredSet_\KB$ we get $\neg\beta \in \KB \ast (\alpha \vee \beta)$ by condition (\ref{condition_on_star+credset}). By reasoning about an arbitrary interval-based interpretation, it can be shown that any IOB revision operator satisfying this must have $\KB \ast (\alpha \vee \beta) = \KB \ast \alpha$ as required. 
\end{proof}

\ZTBOBasCL*

\begin{proof}
{\em (`If')}
We go through the list of the characteristic postulates of ZTBOB NPR operators mentioned after Theorem \ref{theorem:z_trans_biorder_cred_revision}. Closure, Relative Success, Inclusion, Extensionality and Consistency all straightforwardly follow using the fact that IOB revision operators satisfy the corresponding postulates. 

For Disjunctive Overlap, we consider two cases
\begin{itemize}
\item
{\em (a)} $\alpha \vee \beta \notin \CredSet_\KB$. Then $\KB \nprev_{\ast,\CredSet} (\alpha \vee \beta) =\KB$. From $\CredSet 4'$ we have $\alpha \notin \CredSet_\KB$ or $\beta \notin \CredSet_\KB$. Hence $\KB \nprev_{\ast,\CredSet} \alpha = \KB$ or $\KB \nprev_{\ast,\CredSet} \beta = \KB$. In either case we obtain the desired $\KB \nprev_{\ast,\CredSet} \alpha \cap \KB \nprev_{\ast,\CredSet} \beta \subseteq \KB \nprev_{\ast,\CredSet} (\alpha \vee \beta)$. 

\item
{\em (b)}
$\alpha \vee \beta \in \CredSet_\KB$. Then $\KB \nprev_{\ast,\CredSet} (\alpha \vee \beta) =\KB \ast (\alpha \vee \beta)$. By $\CredSet 3$ at least one of $\alpha, \beta$ is in $\CredSet_\KB$. If they are both in $\CredSet_\KB$ then $\KB \nprev_{\ast,\CredSet} \alpha = \KB \ast \alpha$ and $\KB \nprev_{\ast,\CredSet} \beta = \KB \ast \beta$ and we get the desired conclusion by Disjunctive Overlap for $\ast$. If only one of them is in $\CredSet_\KB$, say $\alpha$ (the case for $\beta$ is similar), then $\KB \nprev_{\ast,\CredSet} \alpha \cap \KB \nprev_{\ast,\CredSet} \beta = \KB \ast \alpha \cap \KB \subseteq \KB \ast \alpha$. But $\KB \ast \alpha = \KB \ast (\alpha \vee \beta)$ by Lemma \ref{lemma:consequenceOfConditionOnCredsetAndIOB}, giving the required conclusion.   
\end{itemize}

For Disjunctive Rationality, we again consider two cases.
\begin{itemize}
\item 
{\em (a)}
$\alpha \vee \beta \notin \CredSet_\KB$. Then $\KB \nprev_{\ast,\CredSet} (\alpha \vee \beta) = \KB$. By $\CredSet 4'$ at least one of $\alpha$ and $\beta$ is not in $\CredSet_\KB$ so at least one of $\KB \nprev_{\ast,\CredSet} \alpha$ and $\KB \nprev_{\ast,\CredSet} \beta$ is equal to $\KB$, from which the conclusion follows. 

\item
{\em (b)}
$\alpha \vee \beta \in \CredSet_\KB$. Then $\KB \nprev_{\ast,\CredSet} (\alpha \vee \beta) = \KB \ast (\alpha \vee \beta)$. By $\CredSet 3$ at least one of $\alpha$ and $\beta$ is in $\CredSet_\KB$. If they are both in $\CredSet_\KB$ then $\KB \nprev_{\ast,\CredSet} \alpha = \KB \ast \alpha$ and $\KB \nprev_{\ast,\CredSet} \beta = \KB \ast \beta$ and we conclude using the fact that $\ast$ satisfies Disjunctive Rationality. If only one of them is in $\CredSet_\KB$, say $\alpha$ (the case for $\beta$ is similar) then $\KB \nprev_{\ast,\CredSet} \alpha \cup \KB \nprev_{\ast,\CredSet} \beta = \KB \ast \alpha \cup \KB \supseteq \KB \ast \alpha$. But $\KB \ast \alpha = \KB \ast (\alpha \vee \beta)$ by Lemma \ref{lemma:consequenceOfConditionOnCredsetAndIOB}, giving the required conclusion. 
\end{itemize}

For P11 suppose $\alpha \vee \beta \in \KB \nprev_{\ast,\CredSet} (\alpha \vee \beta)$. If $\alpha \vee \beta \notin \CredSet_\KB$ then we would have $\alpha \vee \beta \notin \KB$ by $\CredSet 1$. But then we must be in the first clause of the definition of $\nprev_{\ast, \CredSet}$, i.e., $\KB \nprev_{\ast,\CredSet} (\alpha \vee \beta) = \KB \ast (\alpha \vee \beta)$ and $\alpha \vee \beta \in \CredSet_\KB$ - contradiction.  Hence $\alpha \vee \beta \in \CredSet_\KB$ and then we obtain the desired conclusion using $\CredSet 3$ and Success for $\ast$. 

For P12+ Suppose $\alpha \vee \beta \in \KB \nprev_{\ast,\CredSet} (\alpha \vee \beta)$ and $\beta \notin \KB \nprev_{\ast,\CredSet} \beta$. From the former we obtain, as in the proof of P11, that $\alpha \vee \beta \in \CredSet_\KB$ while the latter gives us $\beta \notin \CredSet_\KB$. From $\CredSet 3$ we get $\alpha \in \CredSet_\KB$. So $\KB \nprev_{\ast,\CredSet} \alpha = \KB \ast \alpha$ and $\KB \nprev_{\ast,\CredSet} (\alpha \vee \beta) = \KB \ast (\alpha \vee \beta)$. But we know $\KB \ast \alpha = \KB \ast (\alpha \vee \beta)$ from Lemma \ref{lemma:consequenceOfConditionOnCredsetAndIOB} as required.

\noindent
{\em (`Only if')}
Let $\nprev$ be a ZTBOB NPR operator. From $\nprev$ we define $\ast^\nprev$ and $\CredSet^\nprev$ as in the main text following Theorem \ref{theorem:ZTBOBasCL}. We need to show that {\em (i)} $\tuple{\ast^\nprev, \CredSet^\nprev}$ forms a CL structure, {\em (ii)} $\ast^\nprev$ is an IOB revision operator, and {\em (iii)} $\nprev = \nprev_{\ast', \CredSet'}$ where $\ast' = \ast^\nprev$ and $\CredSet' = \CredSet^\nprev$. 

First we show $\CredSet$ satisfies $\CredSet 1$-$\CredSet 3$ and $\CredSet 4'$. For $\CredSet 1$ suppose $\alpha \in \KB$. Suppose for contradiction $\alpha \notin \KB \nprev \alpha$. Then by Relative Success we would have $\KB \nprev \alpha = \KB$ and so $\alpha \notin \KB$ - contradiction. Hence $\alpha \in \KB \nprev \alpha$ as required. $\CredSet 2$ is straightforward. $\CredSet 3$ and $\CredSet 4'$ follow directly from P11 and P14 respectively.

Next we show $\ast^\nprev$ is an IOB revision operator, for which it is sufficient to show the relation $\intord_\KB$ defined in the main text after Theorem \ref{theorem:ZTBOBasCL} is an interval order, i.e., satisfies Reflexivity and Ferrers condition. Reflexivity is straightforward, so let's show Ferrers. Suppose $u \intord_\KB v$ and $u' \intord_\KB v'$. We must show $u \intord_\KB v'$ or $u' \intord_\KB v$. If at least one of $v, v'$ is in $\mathit{Diss}(\biord_\KB)$ then we are done, so suppose neither is. Then, by definition of $\intord_\KB$, our assumptions give us also that neither $u$ nor $u'$ is in $\mathit{Diss}(\biord_\KB)$, and $u \biord_\KB v$, $u' \biord_\KB  v'$. Since $\biord_\KB$ satisfies Ferrers we must have $u \biord_\KB v'$ or $u' \biord_\KB v$. Applying the definition of $\intord_\KB$, the former case gives us $u \intord_\KB v'$ while the latter gives us $u' \intord_\KB v$ as required.

Before showing that $\CredSet^\nprev$ and $\ast^\nprev$ satisfy the joint condition (\ref{condition_on_star+credset}) we need the following lemma, which shows that, when transforming a $z$-transitive biorder $\biord_\KB$ into the interval order $\intord_\KB$, the optimal elements of credible sentences are preserved.

\begin{lemma}
\label{lemma:nonemptiespreserved}
Let $\biord_\KB$ be a $z$-transitive biorder and let $\intord_\KB$ be defined from $\biord_\KB$ as in the main text after Theorem \ref{theorem:ZTBOBasCL}. For any $\Val \subseteq \U$ \suchthat\ $\opt(\Val , \biord_\KB) \neq \emptyset$ we have $\opt(\Val , \intord_\KB) = \opt(\Val , \biord_\KB)$.
\end{lemma}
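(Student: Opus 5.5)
We prove the two inclusions separately. Let $D = \mathit{Diss}(\biord_\KB)$, and recall that $u \intord_\KB v$ holds iff either $v \in D$, or $u,v \notin D$ and $u \biord_\KB v$.

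The first step is a preliminary observation: every element of $\opt(\Val,\biord_\KB)$ is non-dissonant. Indeed, if $w \in \opt(\Val,\biord_\KB)$ then $w \in \Val$ and $w \biord_\KB v$ for all $v \in \Val$. Taking $v = w$ gives $w \biord_\KB w$, so $w \notin D$.

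\emph{Inclusion $\opt(\Val,\biord_\KB) \subseteq \opt(\Val,\intord_\KB)$.} Let $w \in \opt(\Val,\biord_\KB)$ and let $v \in \Val$; we show $w \intord_\KB v$.
\begin{itemize}
\item If $v \in D$, then $w \intord_\KB v$ holds directly by the definition of $\intord_\KB$.
\item If $v \notin D$, then $w,v \notin D$ by the observation above, and $w \biord_\KB v$ by optimality of $w$. Hence $w \intord_\KB v$.
\end{itemize}
This inclusion does not use $z$-transitivity.

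\emph{Inclusion $\opt(\Val,\intord_\KB) \subseteq \opt(\Val,\biord_\KB)$.} This is the main step. Let $u \in \opt(\Val,\intord_\KB)$. Since $\opt(\Val,\biord_\KB) \neq \emptyset$, fix some $w$ in it. We will show $u \biord_\KB v$ for every $v \in \Val$.
\begin{itemize}
\item First, $u \notin D$. By optimality of $u$ we have $u \intord_\KB w$. Since $w \notin D$ by the observation, the only clause of the definition that can apply is the one requiring $u, w \notin D$ and $u \biord_\KB w$. So $u \notin D$, and we also get $u \biord_\KB w$.
\item Case $v \notin D$. Then $u \intord_\KB v$ forces $u \biord_\KB v$, by the same reasoning as in the previous item.
\item Case $v \in D$. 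This is exactly where $z$-transitivity is needed. We have $u \biord_\KB w$, and $w \biord_\KB v$ by optimality of $w$, and $v \not\biord_\KB v$ since $v$ is dissonant. Property~(\ref{zTransitivity}) then yields $u \biord_\KB v$.
\end{itemize}
Thus $u \biord_\KB v$ for all $v \in \Val$, so $u \in \opt(\Val,\biord_\KB)$, which completes the proof.

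The only delicate point is the case where $v$ is dissonant. The definition of $\intord_\KB$ makes every $u$ dominate such $v$ automatically, so optimality with respect to $\intord_\KB$ carries no information about $v$. The witness $w$ from the non-emptiness hypothesis lets us route the comparison $u \biord_\KB v$ through $w$ via $z$-transitivity.
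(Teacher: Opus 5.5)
Your proof is correct and follows essentially the same route as the paper's. The `$\supseteq$' direction comes straight from the definition of $\intord_\KB$, and for `$\subseteq$' you take a witness $w \in \opt(\Val,\biord_\KB)$, get $u \biord_\KB w$, and then apply $z$-transitivity with $w \biord_\KB v$ and $v \not\biord_\KB v$ in the dissonant case, just as the paper does (your explicit check that $u$ is non-dissonant is a harmless extra step).
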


\begin{proof}
Suppose $\opt(\Val , \biord_\KB) \neq \emptyset$. For the `$\supseteq$' inclusion let $u \in \opt(\Val , \biord_\KB)$ and let $v \in \Val$. We must show $u \intord_\KB v$. If $v \in \mathit{Diss}(\biord_\KB)$ we are done. So assume $v \notin \mathit{Diss}(\biord_\KB)$. From $u \in \opt(\Val , \biord_\KB)$ we know both $u \notin \mathit{Diss}(\biord_\KB)$ and $u \biord_\KB v$, which gives $u \intord_\KB v$ as required. (Note we did not need the assumption $\opt(\Val , \biord_\KB) \neq \emptyset$, or that $\biord_\KB$ is $z$-transitive for this part.) 

For the `$\subseteq$' inclusion let $u \in \opt(\Val , \intord_\KB)$ and let $v \in \Val$. We must show $u \biord_\KB v$. By the optimality of $u$ we know $u \intord_\KB v$. If $v \notin \mathit{Diss}(\biord_\KB)$ then we get the required conclusion by definition of $\intord_\KB$. So assume $v \in \mathit{Diss}(\biord_\KB)$, i.e., $v \not\biord_\KB v$. By the assumption $\opt(\Val , \biord_\KB) \neq \emptyset$ there is some~$z \in \opt(\Val , \biord_\KB)$. So we have $u \intord_\KB z$, $z \biord_\KB u$, $z \biord_\KB v$ and $z \biord_\KB z$. From $u \intord_\KB z$ and $z \notin \mathit{Diss}(\biord_\KB)$ we have $u \biord_\KB z$ and then applying $z$-transitivity to this together with $z \biord_\KB v$ and $v \not\biord_\KB v$ gives $u \biord_\KB v$ as required. 
\end{proof}

We now continue with the `only if' part of the proof of Theorem \ref{theorem:ZTBOBasCL} by showing $\CredSet^\nprev$ and $\ast^\nprev$ satisfy the condition (\ref{condition_on_star+credset}). Suppose $\alpha \in \CredSet^\nprev_\KB$ and $\alpha \wedge \beta \notin \CredSet^\nprev_\KB$, i.e., $\alpha \in \KB \nprev \alpha$ and $\alpha \wedge \beta \notin \KB \circ (\alpha \wedge \beta)$. We must show $\neg\beta \in \KB \ast^\nprev \alpha$, i.e., $\opt(\Mod{\alpha}, \intord_\KB) \subseteq \Mod{\neg\beta}$. Since $\alpha \in \KB \nprev \alpha$ we know $\opt(\Mod{\alpha}, \biord_\KB) \neq \emptyset$ and so, by Lemma \ref{lemma:nonemptiespreserved} it suffices to show $\opt(\Mod{\alpha}, \biord_\KB) \subseteq \Mod{\neg\beta}$, i.e., $\neg\beta \in \KB \nprev \alpha$. But the following rule can be shown to hold for all BOB NPR operators $\nprev'$:
\[
\textrm{If }
\alpha \in \KB \nprev' \alpha\
\textrm{and }
\alpha \wedge \beta \notin \KB \nprev' (\alpha \wedge \beta)\
\textrm{then }
\neg\beta \in \KB \nprev' \alpha
\]
Then from the above rule with $\alpha \in \KB \nprev \alpha$ and $\alpha \wedge \beta \notin \KB \nprev (\alpha \wedge \beta)$ we obtain $\neg\beta \in \KB \nprev \alpha$ as required.

Having established that $\tuple{\ast^\nprev,\CredSet^\nprev}$ forms a CL structure and that $\ast^\nprev$ is an IOB revision operator, all that remains is to show that, for all $\KB$, $\alpha$ we have $\KB \nprev_{\ast,\CredSet} \alpha = \KB \nprev \alpha$. We split into two cases.
\begin{itemize}
\item 
{\em (a)} $\alpha \in \CredSet_\KB$. Then $\KB \nprev_{\ast,\CredSet} \alpha = \KB \ast^\nprev \alpha$. By definition of $\CredSet^\nprev$ we have $\alpha \in \KB \nprev \alpha$ and so $\opt(\Mod{\alpha}, \biord_\KB) \neq \emptyset$. But then, by Lemma \ref{lemma:nonemptiespreserved}, $\opt(\Mod{\alpha}, \intord_\KB) = \opt(\Mod{\alpha}, \biord_\KB)$, i.e., $\KB \ast^\nprev \alpha = \KB \nprev \alpha$ as required.

\item 
{\em (b)} $\alpha \notin \CredSet^\nprev$. Then $\KB \nprev_{\ast,\CredSet} \alpha = \KB$. By definition of $\CredSet^\nprev$ we have $\alpha \notin \KB \nprev \alpha$ and so also $\KB \nprev \alpha = \KB$ by Relative Success for $\nprev$.
\end{itemize}
This completes the proof of Theorem \ref{theorem:ZTBOBasCL}.
\end{proof}

\TBOBasCL*

\begin{proof}
{\em (`If')}
Let $\tuple{\ast,\CredSet}$ be \suchthat\ $\ast$ is an AGM revision operator. Since every AGM revision operator is an IOB revision operator, we already know by Theorem \ref{theorem:ZTBOBasCL} that $\nprev_{\ast,\CredSet}$ is a ZTBOB NPR operator. To show it is also a TBOB NPR operator it suffices, by Theorem \ref{theorem:trans_biorder_cred_revision}, to show $\nprev_{\ast,\CredSet}$ satisfies Disjunctive Monotony, i.e.,
\begin{equation*}
\text{If }\lnot\beta\notin\KB\nprev\alpha, \text{ then }\KB\nprev(\alpha\lor\beta)\subseteq\KB\nprev\beta. 
\end{equation*}
So suppose $\neg\beta \notin \KB \nprev_{\ast,\CredSet} \alpha$. We split into four separate cases according to whether $\alpha, \beta$ are credible or not.
\begin{itemize}
\item 
{\em (a)}
$\alpha, \beta \in \CredSet_\KB$. 
Then $\alpha \vee \beta \in \CredSet_\KB$ by $\CredSet 4'$ and so we obtain Disjunctive Monotony for $\nprev_{\ast,\CredSet}$ using the fact that AGM revision $\ast$ satisfies it.

\item 
{\em (b)}
$\{\alpha, \beta\} \cap \CredSet_\KB = \emptyset$. Then $\alpha \vee \beta \notin \CredSet_\KB$ by $\CredSet 3$ and so $\KB \nprev_{\ast,\CredSet} (\alpha \vee \beta) = \KB = \KB \nprev_{\ast,\CredSet} \beta$.

\item 
{\em (c)}
$\alpha \in \CredSet_\KB$ and $\beta \notin \CredSet_\KB$. In this case we claim $\alpha \vee \beta \notin \CredSet_\KB$, which will then allow us to conclude with $\KB \nprev_{\ast,\CredSet} (\alpha \vee \beta) = \KB = \KB \nprev_{\ast,\CredSet} \beta$ as in the preceding case. For suppose, for contradiction, $\alpha \vee \beta \in \CredSet_\KB$. From $\beta \notin \CredSet_\KB$ we get $(\alpha \vee \beta) \wedge \beta \notin \CredSet_\KB$ by $\CredSet 2$. Then, by condition ({\ref{condition_on_star+credset}}), $\neg\beta \in \KB \ast (\alpha \vee \beta)$. Since $\ast$ is AGM this then gives us $\neg\beta \in \KB \ast \alpha$. But, since $\alpha \in \CredSet_\KB$, this leads to a contradiction with the initial assumption $\neg\beta \notin \KB \nprev_{\ast,\CredSet} \alpha$.

\item 
{\em (d)}
$\alpha \notin \CredSet_\KB$ and $\beta \in \CredSet_\KB$. Then the assumption $\neg\beta \notin \KB \nprev_{\ast,\CredSet} \alpha$ becomes $\neg\beta \notin \KB$. If $\alpha \vee \beta \notin \CredSet_\KB$ then the consequent of Disjunctive Monotony reduces to $\KB \subseteq \KB \ast \beta$, which follows from $\neg\beta \notin \KB$ using the properties of AGM revision. If $\alpha \vee \beta \in \CredSet_\KB$ then the consequent of Disjunctive Monotony reduces to $\KB \ast (\alpha \vee \beta) \subseteq \KB \ast \beta$, which again follows from $\neg\beta \notin \KB$ using the properties of AGM revision. 
\end{itemize}

\noindent
{\em ('Only if')}
Let $\nprev$ be a TBOB NPR operator and let $\ast^\nprev$ and $\CredSet^\nprev$ be defined from $\nprev$ as in the main text following Theorem \ref{theorem:ZTBOBasCL}. Since $\nprev$ is also a ZTBOB NPR operator, we already know from the proof of that theorem that $\nprev = \nprev_{\ast',\CredSet'}$ where $\ast' = \ast^\nprev$ and $\CredSet' = \CredSet^\nprev$, and that $\intord_\KB$ is an interval order, where $\intord_\KB$ is defined from the biorder $\biord_\KB$ that $\nprev$ associates to $\KB$. It suffices to show that the extra assumption that $\biord_\KB$ is fully transitive is enough to show that $\intord_\KB$ is not just an interval order but a total preorder, and therefore that $\ast^\nprev$ is an AGM revision operator. We need to show that $\intord_\KB$ is complete and transitive. Completeness holds since every interval order is complete (which can easily be shown from Reflexivity and Ferrers). To show transitivity suppose $u \intord_\KB v$ and $v \intord_\KB w$. We must show $u \intord_\KB w$. If $w \in \mathit{Diss}(\biord_\KB)$ then we get the required conclusion by definition of $\intord_\KB$. So assume $w \notin \mathit{Diss}(\biord_\KB)$. Then, since $v \intord_\KB w$ we must have $v \notin \mathit{Diss}(\biord_\KB)$ and $v \biord_\KB w$. From $v \notin \mathit{Diss}(\biord_\KB)$ and $u \intord_\KB v$ we also have $u \notin \mathit{Diss}(\biord_\KB)$ and $u \biord_\KB v$. Hence from $u \biord_\KB v$ and $v \biord_\KB w$ and the assumption $\biord_\KB$ is transitive we get $u \biord_\KB w$. From this and $u,v \notin \mathit{Diss}(\biord_\KB)$ we get the required $u \intord_\KB w$.
\end{proof}

\end{document}